\newcommand{\Z}{\mathbb{Z}}
\newcommand{\N}{\mathbb{N}}
\newcommand{\E}{\mathbb{E}}
\newcommand{\R}{\mathbb{R}}
\def\bw{\mathbf{w}}
\def\O{\mathcal{O}}
\def\gep{\epsilon}
\def\X{\mathcal{X}}
\def\Y{\mathcal{Y}}
\def\W{\mathcal{W}}
\def\Z{\mathcal{Z}}
\def\N{\mathcal{N}}
\def\bx{\mathbf{x}}
\def\bw{\mathbf{w}}
\def\priv{\text{priv}}
\def\bb{\mathbf{b}}
\def\A{\mathcal{A}}
\def\ga{\alpha}
\def\gd{\delta}
\def\gep{\epsilon}
\def\gD{\Delta}
\def\0{\mathbf{0}}
\def\proj{\text{Proj}}
\newtheorem{theorem}{Theorem}
\newtheorem{lemma}[theorem]{Lemma}
\newtheorem{proposition}[theorem]{Proposition}
\newtheorem{corollary}[theorem]{Corollary}
\theoremstyle{definition}
\newtheorem{definition}{Definition}
\newtheorem{example}{Example}
\theoremstyle{definition}
\newtheorem{remark}{Remark}
\def\begeqn{\begin{equation}}
\def\endeqn{\end{equation}}
\def\begth{\begin{theorem}}
\def\endth{\end{theorem}}
\def\begprop{\begin{proposition}}
\def\endprop{\end{proposition}}
\def\begcor{\begin{corollary}}
\def\endcor{\end{corollary}}
\def\begdef{\begin{definition}}
\def\enddef{\end{definition}}
\def\beglemm{\begin{lemma}}
\def\endlemm{\end{lemma}}
\def\begexm{\begin{example}}
\def\endexm{\end{example}}
\def\begrem{\begin{remark}}
\def\endrem{\end{remark}}
\def\begdef{\begin{definition}}
\def\enddef{\end{definition}}
\def\proj{\text{Proj}}
\begin{document}

\title{Differentially Private Stochastic Gradient Descent with Low-Noise}
\author{Puyu Wang$^{1}$, Yunwen Lei$^{2}$, Yiming Ying$^{3*}$ and Ding-Xuan Zhou$^{4}$\\ \\ $^{1}$ Liu Bie Ju Centre for Mathematical Sciences, City University of Hong Kong, Hong Kong \\
$^{2}$ Department of Mathematics, The University of Hong Kong, HK\\
$^{3}$ Department of Mathematics and Statistics, State University of New York at Albany, USA\\
$^{4}$ School of Mathematics and Statistics, University of Sydney, Australia}

 \date{}

\maketitle

\begin{abstract} 
Modern machine learning algorithms aim to extract fine-grained information from data to provide accurate predictions, which often conflicts with the goal of privacy protection. This paper addresses the practical and theoretical importance of developing privacy-preserving machine learning algorithms that ensure good performance while preserving privacy. 
In this paper,  we focus on the privacy and utility (measured by excess risk bounds) performances of differentially private stochastic gradient descent (SGD) algorithms in the setting of stochastic convex optimization. Specifically, we examine the pointwise problem in the low-noise setting for which we derive sharper excess risk bounds for the differentially private SGD algorithm. 
 In the pairwise learning setting, we propose a simple differentially private SGD algorithm based on gradient perturbation. Furthermore, we develop novel utility bounds for the proposed algorithm, proving that it achieves optimal excess risk rates even for non-smooth losses. Notably, we establish fast learning rates for privacy-preserving pairwise learning under the low-noise condition, which is the first of its kind.  

\

\noindent{\bf Keywords:} Stochastic Gradient Descent,   Differential Privacy,  Generalization,  
Low-Noise
\end{abstract}

\bigskip

\parindent=0cm

\section{Introduction}
 Stochastic gradient descent (SGD) iteratively updates model parameters using the gradient information over a small batch of random examples, which reduces the computation cost and makes it amenable to solving large-scale problems.  Due to its low computational overhead and easy implementation, it has become the workhorse algorithm for training many machine learning models \cite{duchi2009efficient,goodfellow2016deep,li2019convergence,li2018learning,lin2017optimal,rakhlin2012making,roux2012stochastic,yang2021simple,zhang2004solving}. 
 
On the other important front, we have witnessed a significant risk of privacy leakage by sharing gradient information of machine learning models because the gradient often embeds knowledge about the training data.  For instance, \cite{zhu2019deep} provides paradigms for breaching privacy and reconstructing training examples from publicly shared gradients and \cite{shokri2017membership} shows that the membership of a data record can be inferred from a binary classifier trained on gradients. As SGD is widely deployed in machine learning models, it is crucial to develop private SGD algorithms to mitigate the privacy leakage posted by gradients.

In this paper, we are interested in differentially private SGD (DP-SGD)  for both pointwise and pairwise learning problems.  Differential privacy (DP) \cite{dwork2006calibrating} is a de facto concept for designing private algorithms, which defines a rigorous attack model independent of background knowledge and gives a quantitative representation of the degree of privacy leakage.  There is a considerable amount of work \cite{bassily2020stability,bassily2019private,bassily2014private,bassily2021differentially,feldman2020private,su2022faster,wang2022differentially,wang2021differentially,xue2021differentially,yang2021simple} on analyzing the utility guarantee (i.e., statistical generalization performance) of DP-SGD algorithms. In particular,  \cite{bassily2020stability,bassily2014private,feldman2020private,wang2022differentially,yang2021simple} have shown that private SGD algorithms can achieve the optimal excess population risk bound  $\O\big(\frac{1}{\sqrt{n}}+ \frac{1}{n\epsilon}\sqrt{d\log(1/\delta)} \big)$  for solving convex problems  in different settings. Here, $n$ is the size of the training dataset, $d$ is the dimension, and $(\epsilon,\delta)$ are privacy parameters.  
One nature question then arises: can DP-SGD algorithms  achieve faster utility rates beyond  $\O\big(\frac{1}{\sqrt{n}}+ \frac{1}{n\epsilon}\sqrt{d\log(1/\delta)} \big)$?

\begin{table*}[!t]\label{table:summary}
{\small{ 
\caption{Comparison of different $(\epsilon,\delta)$-DP algorithms for pointwise learning. 
Here, $\alpha$-H\"older denotes $\alpha$-H\"older smooth losses.  
}
\begin{center}
\begin{tabular}{ c|cccccc } 
 \hline
 \multirow{1}{*}{\textbf{Work} }     &   \multirow{1}{*}{ \textbf{Lipschitz}}& \multirow{1}{*}{ \textbf{Smooth}} &  \multirow{1}{*}{ \textbf{Low-noise}} & \multirow{1}{*}{ \textbf{Gradient complexity}}  & \multirow{1}{*}{ \textbf{Utility }}  \\ 
 \hline
 \multirow{4}{*}{\cite{bassily2019private}}     &  \multirow{2}{*}{ $\checkmark$ }  &  \multirow{2}{*}{$\checkmark$}&  \multirow{2}{*}{$\times$} &  \multirow{2}{*}{$\O\big( n^{1.5} \sqrt{\epsilon} + (n\epsilon)^{2.5} ( d \log(1/{\delta}))^{-1} \big)$} &  \multirow{2}{*}{$\O\big(\frac{1}{\sqrt{n}}+\frac{1}{n\epsilon}\sqrt{d\log(1/\delta) }   \big)$} \\
  &&& &&&\\
  ~   &  \multirow{2}{*}{$\checkmark$}  &  \multirow{2}{*}{$\times$}   &  \multirow{2}{*}{$\times$}  &  \multirow{2}{*}{$\O\big( n^{4.5} \sqrt{\epsilon} + n^{6.5}\epsilon^{4.5}{( d \log(1/{\delta}))^{-2}} \big)$}  &  \multirow{2}{*}{$\O\big(\frac{1}{\sqrt{n}}+\frac{1}{n\epsilon}\sqrt{d\log(1/\delta) }   \big)$} \\  
 &&&&&&\\
  \hline
 \multirow{2}{*}{\cite{bassily2020stability}}    &  \multirow{2}{*}{$\checkmark$}  &  \multirow{2}{*}{$\times$}  &  \multirow{2}{*}{$\times$} &  \multirow{2}{*}{$\O(n^2)$}  &  \multirow{2}{*}{$\O\big(\frac{1}{\sqrt{n}}+\frac{1}{n\epsilon}\sqrt{d\log(1/\delta) }  \big)$}  \\ 
  &&&&&&\\
 \hline
 \multirow{4}{*}{\cite{wang2022differentially}}   &  \multirow{2}{*}{$\times$}&  \multirow{2}{*}{$\checkmark$}&  \multirow{2}{*}{$\times$}&  \multirow{2}{*}{$\O(n )$} &  \multirow{2}{*}{$\O\big(\frac{1}{\sqrt{n}}+\frac{1}{n\epsilon}\sqrt{d\log(1/\delta) }  \big)$}  \\ 
 & &&&\\
 ~ &\multirow{2}{*}{\textit{$\times$}}   &  \multirow{2}{*}{$\alpha$-H\"older }&  \multirow{2}{*}{$\times$}  &  \multirow{2}{*}{ $\O\big(n^{
\frac{2-\alpha}{1+\alpha}}+n\big)$} &  \multirow{2}{*}{$\O\big(\frac{1}{\sqrt{n}}+\frac{1}{n\epsilon}\sqrt{d\log(1/\delta) }  \big)$} \\
   ~& & &&\\
   \hline
 \multirow{7}{*}{Ours}     &  \multirow{2}{*}{$\checkmark$}&  \multirow{2}{*}{$\checkmark$} &  \multirow{2}{*}{$\times$}&  \multirow{2}{*}{$\O(n)$}&  \multirow{2}{*}{$\O\big(\frac{1}{\sqrt{n}}+\frac{1}{n\epsilon}\sqrt{d\log(1/\delta) } \big)$}  \\ 
  ~ &&&&&&\\
 ~  &  \multirow{1.5}{*}{$\checkmark$} &  \multirow{1.5}{*}{$\checkmark$}&  \multirow{1.5}{*}{$\checkmark$}&  \multirow{1.5}{*}{$\O(n )$} &  \multirow{1.5}{*}{$\O\big( \frac{1}{n\epsilon}\sqrt{d\log(1/\delta)}\big)$} \\
 ~  &  \multirow{3}{*}{$\checkmark$} &  \multirow{3}{*}{$\alpha$-H\"older}&  \multirow{3}{*}{$\times$}&  \multirow{3}{*}{$\O\big(n^{
\frac{2-\alpha}{1+\alpha}}+n\big)$}  &  \multirow{3}{*}{$\O\big(\frac{1}{\sqrt{n}}+\frac{1}{n\epsilon}\sqrt{d\log(1/\delta)}\big)$} \\
 & & &&\\
 ~   &  \multirow{2}{*}{$\checkmark$} &  \multirow{2}{*}{$\alpha$-H\"older}&  \multirow{2}{*}{$\checkmark$}&  \multirow{2}{*}{$\O\big(n^{
\frac{2}{1+\alpha}}\big)$} &  \multirow{2}{*}{$\O\big( {n^{-\frac{1+\alpha}{2}}}+ \frac{1}{n\epsilon}\sqrt{d\log(1/\delta)}\big)$} \\
   ~& &  &&\\
 \hline 
\end{tabular}
\end{center}
}}\vspace*{-0.1in}
\end{table*}

\begin{table*}\label{table:summary-pairwise}
{\small{
\caption{Comparison of different $(\epsilon,\delta)$-DP algorithms for pairwise learning.  We report the results for Gradient descent with output perturbation (Output GD), Localized Gradient descent (Localized GD) and SGD with gradient perturbation (Gradient SGD).   }  
\begin{center}
\begin{tabular}{ c|cccccc } 
 \hline
 \multirow{1}{*}{\textbf{Work}}  &   \multirow{1}{*}{ \textbf{Method}}  &   \multirow{1}{*}{ \textbf{Lipschitz}} & \multirow{1 }{*}{\textbf{Smooth}} &  \multirow{1}{*}{\textbf{Low-noise}} & \multirow{1}{*}{\textbf{Gradient complexity}}  & \multirow{1}{*}{\textbf{Utility }}  \\
 \hline
 \multirow{2}{*}{\cite{huai2020pairwise}}  &  \multirow{2}{*}{Output GD}  &  \multirow{2}{*}{$\checkmark$}      &  \multirow{2}{*}{$\checkmark$}&  \multirow{2}{*}{$\times$} &  \multirow{2}{*}{$\O\big( n^2 \big)$} &  \multirow{2}{*}{$\O\big( \frac{1}{\sqrt{n}\epsilon}\sqrt{d\log(1/\delta)} \big)$} \\
 &&&\\
  \hline
 \multirow{2}{*}{\cite{xue2021differentially}} &  \multirow{2}{*}{Localized GD}   &  \multirow{2}{*}{$\checkmark$}         &  \multirow{2}{*}{$\checkmark$}  &  \multirow{2}{*}{$\times$} &  \multirow{2}{*}{$\O\big(n^3\log(1/{\delta})\big)$}  &  \multirow{2}{*}{$\O\big(\frac{1}{\sqrt{n}}+\frac{1}{n\epsilon}\sqrt{d\log(1/\delta)}\big)$}  \\ 
 &&&\\
 \hline
 \multirow{4}{*}{\cite{yang2021simple}}  &  \multirow{2}{*}{Localized SGD}  &  \multirow{2}{*}{$\checkmark$}        &  \multirow{2}{*}{$\checkmark$}&  \multirow{2}{*}{$\times$}&  \multirow{2}{*}{$\O\big(n\log(1/{\delta}) \big)$} &  \multirow{2}{*}{$\O\big(\frac{1}{\sqrt{n}}+\frac{1}{n\epsilon}\sqrt{d}\log^{\frac{3}{2}}(1/\delta)\big)$}  \\ 
   &&&\\
 ~ &  \multirow{1.5}{*}{Localized SGD}  &  \multirow{1.5}{*}{$\checkmark$}          &  \multirow{1.5}{*}{$\times$}&  \multirow{1.5}{*}{$\times$}&  \multirow{1.5}{*}{$\O(n^2\log(1/\delta) )$} &  \multirow{1.5}{*}{$\O\big(\frac{1}{\sqrt{n}}+\frac{1}{n\epsilon}\sqrt{d \log(1/\delta)}\big)$} \\
 &&&\\
   \hline
 \multirow{7}{*}{Ours}  &  \multirow{2}{*}{Gradient SGD} &  \multirow{2}{*}{$\checkmark$}             &  \multirow{2}{*}{$\checkmark$} &  \multirow{2}{*}{$\times$}&  \multirow{2}{*}{$\O(n )$}&  \multirow{2}{*}{$\O\big(\frac{1}{\sqrt{n}}+\frac{1}{n\epsilon}\sqrt{d\log(1/\delta)}\big)$}  \\ 
&&&\\
 ~ &  \multirow{1.5}{*}{Gradient SGD} &  \multirow{1.5}{*}{$\checkmark$}           &  \multirow{1.5}{*}{$\checkmark$}&  \multirow{1.5}{*}{$\checkmark$}&  \multirow{1.5}{*}{$\O(n )$} &  \multirow{1.5}{*}{$\O\big( \frac{1}{n\epsilon}\sqrt{d\log(1/\delta)}\big)$} \\
 ~ &  \multirow{3}{*}{Gradient SGD} &  \multirow{3}{*}{$\checkmark$}          &  \multirow{3}{*}{$\alpha$-H\"older }&  \multirow{3}{*}{$\times$}&  \multirow{3}{*}{$\O\big(n^{
\frac{2-\alpha}{1+\alpha}}+n\big)$}  &  \multirow{3}{*}{$\O\big(\frac{1}{\sqrt{n}}+\frac{1}{n\epsilon}\sqrt{d\log(1/\delta)}\big)$} \\
&&&\\
 ~ &  \multirow{2}{*}{Gradient SGD} &  \multirow{2}{*}{$\checkmark$}            &  \multirow{2}{*}{$\alpha$-H\"older }&  \multirow{2}{*}{$\checkmark$}&  \multirow{2}{*}{$\O\big(n^{
\frac{2}{1+\alpha}} \big)$} &  \multirow{2}{*}{$\O\big( {n^{-\frac{1+\alpha}{2}}}+ \frac{1}{n\epsilon}\sqrt{d\log(1/\delta)}\big)$} \\
&&&\\
 \hline 
\end{tabular}
\end{center}
}} 
\end{table*}

We provide an affirmative answer to the above question under a low-noise condition (also referred as a realizability condition in the literature)\cite{schliserman2022stability,srebro2010smoothness,shamir2021gradient,lei2020fine,nagayasu2022asymptotic}, which assumes that there exists a model within the considered hypothesis space perfectly fits the underlying data distribution. Under this condition, we conduct a comprehensive study of   DP-SGD for both pointwise and pairwise learning as well as both smooth and non-smooth losses, which is able to provide faster utility bounds in terms of the excess population risk. Our main contributions are listed as follows:
\begin{itemize}[itemsep=1ex, leftmargin=6mm] 

\item Firstly, we are concerned with the standard pointwise learning problems where the loss function $f(\cdot;z)$   on a single datum $z = (x,y)$. For this case, we show that DP-SGD with gradient perturbation algorithm can achieve the   rate $\O\big(\frac{1}{\sqrt{n}}+ \frac{1}{n\epsilon}\sqrt{d\log(1/\delta)} \big)$  for both strongly smooth and $\alpha$-H\"older smooth losses, which match the  results in the recently work \cite{wang2022differentially}. Under a low-noise condition,  we remove the term $\O\big(\frac{1}{\sqrt{n}}\big)$ and achieve the excess risk bound of the order $\O\big(\frac{1}{n\epsilon} \sqrt{d\log(1/\delta)}  \big)$ for strongly smooth losses.  Further, 
a better excess risk rate $\O\big(n^{-\frac{1+\alpha}{2}}+ \frac{1}{n\epsilon}\sqrt{d\log(1/\delta)}  \big)$ is  established for $\alpha$-H\"older smooth losses.

\item Secondly, we study the pairwise learning setting where the loss $f(\cdot;z,z')$ involves a pair of examples $(z,z').$ In this learning setting, we propose a simple differentially private SGD algorithm for pairwise learning with utility guarantees. Specifically,  for strongly smooth losses, our algorithm only requires gradient complexity $\O( n )$ to achieve the optimal excess risk rate,  while   \cite{xue2021differentially} and \cite{yang2021simple} require $\O(n^3\log(1/\delta))$ and $\O(n\log(1/\delta))$, respectively. We also show that this rate can  be achieved even if the loss is non-smooth. Further, for both strongly smooth and non-smooth pairwise losses, we establish faster excess risk bounds under a low-noise condition. 
To the best of our knowledge, this is the first utility analysis which provides the excess risk bounds better than $\O\big(\frac{1}{\sqrt{n}}+ \frac{1}{n\epsilon}\sqrt{d\log(1/\delta)} \big)$ for privacy-preserving pairwise learning.  

\end{itemize}

\subsection{Related Work}
In this subsection, we review the relevant work on DP-SGD which are close to our work. We discuss them in the pointwise and pairwise learning settings, respectively.  

For pointwise learning, \cite{bassily2019private} established the excess population risk bounds in the order of $\O\big(\frac{1}{\sqrt{n}}+\frac{1}{n\epsilon}\sqrt{d\log(1/\delta)}\big)$ for $(\epsilon,\delta)$-differentially private stochastic convex optimization algorithms for both strongly smooth and non-smooth losses, which match the lower bound given in \cite{bassily2014private}. However, their algorithms have a large gradient complexity (measured by the total number of computing the gradient). Specifically, their analysis establishes gradient complexity $\O(n^{1.5} \sqrt{\epsilon} + (n\epsilon)^{2.5} ( d \log(1/{\delta}))^{-1} )$ and $\O( n^{4.5}\sqrt{\epsilon}+(n)^{6.5}\epsilon^{4.5} (d\log(1/\delta))^{-2} )$ for strongly smooth and non-smooth losses, respectively.  
\cite{feldman2020private} proposed a private phased SGD algorithm for strongly smooth losses, which can achieve the optimal excess risk rate with a linear gradient complexity $\O(n)$. The work \cite{bassily2020stability} developed a DP-SGD algorithm with gradient perturbation which improved the gradient complexity to $\O(n^2)$ for non-smooth losses. The work most related to our paper is \cite{wang2022differentially}, which studied DP-SGD with gradient perturbation. They established the optimal excess risk bounds for strongly smooth and $\alpha$-H\"older smooth losses with gradient complexity $\O(n)$ and $\O( n^{\frac{2-\alpha}{1+\alpha}} + n)$, respectively, which recover the results in \cite{feldman2020private} and \cite{bassily2020stability}.   However, they didn't obtain the fast rates in  the low-noise case which is the main focus of our paper. For clarity, 
we  list in Table 1 the comparison of our work again other existing work in terms of utility (excess risk) bounds, assumptions on loss function and the gradient complexity of DP-SGD in the pointwise learning setting. 

For pairwise learning, \cite{huai2020pairwise} studied private gradient descent (GD) with output perturbation and proved that the proposed algorithm can achieve the excess risk rate $\O(\frac{1}{\sqrt{n}\epsilon}\sqrt{d\log(1/\delta)})$ for Lipschitz and strongly smooth losses. \cite{xue2021differentially} proposed a private localized GD algorithm, which can achieve the optimal excess risk rate with gradient complexity $\O(n^3\log(1/\delta))$ for Lipschitz and strongly smooth losses. The work \cite{yang2021simple} developed a DP-SGD algorithm with an iterative localization technique and derived the (nearly) optimal excess risk bounds for strongly smooth and non-smooth losses with gradient complexity $\O( n\log(1/\delta) )$ and $\O(n^2\log(1/\delta))$, respectively. In this work, we are interested in DP-SGD for both strongly smooth and $\alpha$-H\"older smooth losses as well as the low-noise case. Table 2 summarizes the comparison of our work against the existing methods in terms of the utility (excess risk) bounds, assumptions on losses and the gradient of DP-SGD in the pairwise learning setting.

\bigskip

\noindent {\bf Organization of the paper. } The remaining parts of the paper are organized as follows. In Section~\ref{sec:probelm-formulation}, we present the formulations of pointwise and pairwise learning together with basic concepts of differential privacy. In Sections~\ref{sec:main-result}, we introduce the DP-SGD algorithms in the settings of pointwise learning and pairwise learning and present our main results. The main proofs are given in Section \ref{sec:proof}.   Section~\ref{sec:conclu} concludes the paper.

\section{Learning Setting and Preliminaries}\label{sec:probelm-formulation}
 
Let $\rho$ be a probability measure defined on $\Z=\X\times \Y$, where $\X\subset\R^d$ is an input space and $\Y\subset\R$ is an output space. In the standard framework of statistical learning theory \cite{bousquet2002stability,vapnik1999nature},  one
considers the problem of learning from a training dataset $S=\{ z_i\}_{i=1}^n$, where $z_i$ is independently drawn from $\rho$.  In the subsequent subsections, we describe the settings of pointwise and pairwise learning, the definition of differential privacy, and illustrate the goal of utility analysis.

\subsection{Pointwise and Pairwise Learning}
In the task of pointwise learning such as classification and regression, we aim to learn a model $ \bw\in\W\subset\R^d$ from training data $S$ and measure the quality of $\bw$ using a pointwise loss function $f(\bw;z)$   on a single datum $z = (x,y).$ The expected population risk for pointwise learning is given by $ F(\bw) = \E_{z\sim \rho}[f(\bw;z)]$. The corresponding empirical risk minimization (ERM) problem based on training dataset $S$ is defined by
\begin{equation}\label{eq:erm-point}
    \min_{\bw\in\W}\Big\{ F_S(\bw)=\frac{1}{n} \sum_{i=1}^n f(\bw;z_i) \Big\}.
\end{equation} 
In contrast to pointwise learning,  the performance of a model $\bw$ for pairwise learning is measured on a pair of examples $(z,z')$ by a loss function $f(\bw;z,z')$~\cite{wang2022error,yang2021simple,lei2020sharper,lei2021generalization}. 
Many machine learning problems can be formulated as learning with pairwise loss functions including AUC maximization \cite{cortes2003auc,gao2013one,liu2018fast,ying2016stochastic,zhao2011online}, metric learning \cite{bellet2013survey,cao2016generalization,jin2009regularized},  a minimum error entropy  principle \cite{hu2015regularization} and ranking \cite{agarwal2009generalization,clemenccon2008ranking}.
we use $\bar{F}(\bw)$ to denote the population risk, i.e., $\bar{F}(\bw) = \E_{z,z'\sim \rho}[f(\bw;z,z')]. $   Let $\bw^{*}=\arg\min_{\bw\in\W} \bar{F}(\bw)$ be the best model, and let $[n]:=\{1,\ldots,n\}$.  
The  ERM problem on training data $S$  is given by 
\begin{equation}\label{eq:erm-pair}
    \min_{\bw\in\W} \Big\{ \bar{F}_S(\bw)=\frac{1}{n(n-1)} \sum_{i,j\in[n], i \ne j} f(\bw;z_i,z_j)  \Big\}.
\end{equation}

\subsection{Definition and Property of Differential Privacy}

As a privacy-preserving technology with a rigorous mathematical guarantee, DP has been widely used in several areas \cite{GONG2020484,gong2020preserving,LI202214,yang2021simple}. 
Its definition is stated formally as follows. 
\begin{definition}[Differential Privacy (DP)\cite{dwork2006calibrating}]\label{def:DP}
We say a randomized algorithm $\A$ satisfies $(\gep, \delta)$-DP if, for any two neighboring datasets $S$ and $S'$  differing at one data point and any event $E$ in the output space of $\A$, there holds 
\[
\mathbb{P}(\A (S)\in E) \le e^\gep \mathbb{P}(\A(S')\in E)+ \gd.
\]
In particular,  we call it satisfies $\gep$-DP if $\gd=0$. 
\end{definition} 
To show a randomized algorithm satisfies DP, we need the following concept called  $\ell_2$-sensitivity. Let $\|\cdot\|_2$ denote the Euclidean norm. 
\begin{definition}\label{def:sensitivity}
The $\ell_2$-sensitivity of a function (mechanism) $\mathcal{M}:\Z^n \rightarrow \W$ is defined as 
$
\gD  = \sup_{S, S'} \|\mathcal{M}(S) - \mathcal{M}(S')\|_2,
$ where $S$ and $S'$ are neighboring datasets differing at one data point. 
\end{definition}
A basic mechanism to achieve $(\epsilon,\delta)$-DP is called Gaussian mechanism, which is shown as follows.  
 \begin{lemma}[\cite{dwork2014algorithmic}]\label{lem:gaussian-noise}
Given a  function $\mathcal{M}: \Z^n \rightarrow \W$ with the $\ell_2$-sensitivity $\gD $ and a dataset $S\subset \Z^n$, and assume that $\sigma \geq \frac{\sqrt{2\log(1.25/\delta)}\Delta }{\epsilon}$. The following Gaussian mechanism yields $(\gep, \gd)$-DP: 
\[\label{eq:gausian-mech}
\mathcal{G}(S,\sigma) := \mathcal{M}(S) + \bb, ~~\bb \sim \mathcal{N}(0, \sigma^2\mathbf{I}_d), 
\]
where $\mathbf{I}_d$ is the identity matrix in $\mathbb{R}^{d\times d}$.  
\end{lemma}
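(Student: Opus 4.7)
My plan is to establish the Gaussian mechanism bound by a standard privacy-loss argument, reducing the multivariate problem to a one-dimensional Gaussian tail estimate. Throughout, let $S, S'$ be neighboring datasets, write $\Delta \mathcal{M} := \mathcal{M}(S) - \mathcal{M}(S')$, and let $p_S$ and $p_{S'}$ denote the densities of $\mathcal{G}(S,\sigma)$ and $\mathcal{G}(S',\sigma)$ on $\mathbb{R}^d$, respectively. By Definition~\ref{def:sensitivity}, $\|\Delta\mathcal{M}\|_2 \le \Delta$.

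First, I would reduce to one dimension. Because the noise $\mathbf{b}\sim \mathcal{N}(0,\sigma^2\mathbf{I}_d)$ is rotationally invariant, I may apply a rigid rotation that aligns $\Delta\mathcal{M}$ with the first coordinate axis $e_1$. Under this change of coordinates, the privacy loss random variable
\[
L(o) \;=\; \log\frac{p_S(o)}{p_{S'}(o)} \;=\; \frac{\|o-\mathcal{M}(S')\|_2^2 - \|o-\mathcal{M}(S)\|_2^2}{2\sigma^2}
\]
depends only on the first coordinate of $o-\mathcal{M}(S)$, and in fact it is affine in that coordinate. If $Z \sim \mathcal{N}(0,1)$ denotes the first coordinate of $\mathbf{b}/\sigma$, a short computation gives $L = \tfrac{\|\Delta\mathcal{M}\|_2}{\sigma}\, Z + \tfrac{\|\Delta\mathcal{M}\|_2^2}{2\sigma^2}$ under $o\sim p_S$.

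Second, I would bound the tail $\Pr_{o\sim p_S}[L(o)>\epsilon]$. Solving for $Z$ in the inequality above, this probability equals $\Pr[Z > t]$ with $t = \sigma\epsilon/\|\Delta\mathcal{M}\|_2 - \|\Delta\mathcal{M}\|_2/(2\sigma)$. Using $\|\Delta\mathcal{M}\|_2\le \Delta$ and the hypothesis $\sigma \ge \sqrt{2\log(1.25/\delta)}\,\Delta/\epsilon$, one checks that $t$ is at least the threshold at which the Mills-ratio Gaussian tail bound
\[
\Pr[Z>t] \;\le\; \frac{1}{t\sqrt{2\pi}}\, e^{-t^2/2}
\]
yields a value $\le \delta$. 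The constant $1.25$ (rather than the looser constant arising from the sub-Gaussian bound $e^{-t^2/2}$) is exactly what the Mills ratio buys.

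Third, I would convert the tail bound into the DP guarantee. For any measurable event $E$,
\[
\Pr[\mathcal{G}(S,\sigma)\in E] \;\le\; \Pr\!\bigl[\mathcal{G}(S,\sigma)\in E,\ L\le \epsilon\bigr] \;+\; \Pr[L>\epsilon].
\]
On $\{L\le\epsilon\}$ we have $p_S(o)\le e^\epsilon p_{S'}(o)$ pointwise, so the first term is at most $e^\epsilon \Pr[\mathcal{G}(S',\sigma)\in E]$, and the second term is at most $\delta$ by the previous step. This is exactly the definition of $(\epsilon,\delta)$-DP.

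The main obstacle is getting the sharp constant $\sqrt{2\log(1.25/\delta)}$: a naive sub-Gaussian tail gives the correct shape $\sigma = \Theta(\Delta\sqrt{\log(1/\delta)}/\epsilon)$ but a worse multiplicative constant. Nailing the $1.25$ requires the Mills-ratio refinement together with a careful algebraic verification that the chosen $\sigma$ makes the resulting tail argument close at the desired $\delta$; the rotation-invariance reduction and the decomposition into the two terms above are routine once the one-dimensional tail estimate is in hand.
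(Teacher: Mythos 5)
The paper does not prove Lemma~\ref{lem:gaussian-noise}; it is stated verbatim as a cited preliminary from Dwork and Roth (2014), and is used as a black box. Your outline is precisely the proof of Theorem~A.1 in that reference: reduce to one coordinate by rotational invariance, observe that the privacy-loss random variable is affine in a standard normal, bound its upper tail via the Mills ratio, and close with the standard split of any event $E$ into $E\cap\{L\le\epsilon\}$ and $\{L>\epsilon\}$. Each of the three steps is correct, and you rightly flag that the sharp constant $1.25$ is exactly what the Mills-ratio refinement buys over a crude sub-Gaussian tail. One caveat worth stating explicitly (it is also in the original theorem but silently dropped both here and in the paper's statement): the argument requires $\epsilon<1$. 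The tail threshold is $t = \sigma\epsilon/\Delta - \Delta/(2\sigma) = c - \epsilon/(2c)$ with $c=\sqrt{2\log(1.25/\delta)}$, and the Mills-ratio step needs $t$ to be bounded away from zero so that the prefactor $1/(t\sqrt{2\pi})$ together with $e^{-t^2/2}\le (\delta/1.25)\,e^{\epsilon/2}$ actually lands below $\delta$; for large $\epsilon$ this fails. With that restriction made explicit, and granting the algebraic verification you acknowledge skipping, the proposal is a faithful reproduction of the reference's argument.
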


We are interested in DP-SGD with strongly smooth and  $\alpha$-H\"older smooth losses, respectively. 
\begin{definition} 
We say a function $\bw \rightarrow f(\bw)$ is $L$-strongly smooth with $L>0$ if, for any  $\bw,\bw'\in \W$, there holds $ f(\bw)\le f(\bw' ) + \langle \partial f(\bw' ), \bw-\bw' \rangle + \frac{L}{2} \|\bw-\bw'\|_2^2 $, where $\partial f(\cdot)$ denotes a (sub)gradient of $f$.
We  say a function  $\bw \rightarrow f(\bw)$ is $\alpha$-H\"older smooth with   $\alpha\in[0,1)$ and parameter $L$   if for any $\bw,\bw'\in\W$, there holds
$\|\partial f(\bw ) -\partial f(\bw' )\|_2\le L\|\bw-\bw'\|_2^\alpha.  $
\end{definition}
The smoothness parameter $\alpha\in[0,1)$ characterizes the smoothness of the  function $f$. Specifically, if $\alpha=0$, then $f $   is Lipschitz continuous as considered in Definition~\ref{def:lip-convex} below. This definition  instantiates many non-smooth loss functions including the  hinge loss $ \max\big\{0, \big(1- y\bw^{\top}\bx  \big)^q\big\}$ for $q$-norm soft margin SVM  and the $q$-norm loss $|y-\bw^{\top}\bx|^q$ in regression with $q\in [1,2]$.

\subsection{Target of Utility Analysis}
We move on to describing the target of utility analysis of a randomized algorithm $\A$ to solve the ERM problems  \eqref{eq:erm-point} or \eqref{eq:erm-pair}. For simplicity, we elaborate this by taking   pointwise learning as example and the same procedure can apply to the case of pairwise learning. 

To this end, let $\A(S)$ denote  the output of $\A$ based on the training dataset $S$ for pointwise learning. The utility of the output of a randomized algorithm is measured by the \textit{excess population risk}  $F(\A(S))-F(\bw^{*})$, where 
 $\bw^{*}=\arg\min_{\bw\in\W} F(\bw)$ is the one with the best prediction performance over $\W$.   To examine the excess population risk, we use the following error decomposition:
\begin{equation}\label{eq:error-decom}
    \E_{S,\A}[F(\A(S))-F(\bw^{*})]= \E_{S,\A}[F(\A(S)) - F_S(\A(S))] + \E_{S,\A}[F_S(\A(S)) - F_S(\bw^{*})], 
\end{equation}
where $\E_{S,\A}[\cdot]$ denotes the expectation w.r.t. both the randomness of $S$ and the internal randomness of $\A$. The first term $\E_{S,\A}[F(\A(S)) - F_S(\A(S))]$ is called the generalization error, which measures the discrepancy between the expected risk and the empirical one. It can be handled by the stability analysis \cite{bassily2020stability,bousquet2002stability,hardt2016train,kuzborskij2018,lei2020fine}. The  second term is called the optimization error. We will use  tools in optimization theory to control this term.

Throughout the paper, we assume the loss function $f$ is convex and Lipschitz continuous with respect to  (w.r.t.)  the first argument.

\begin{definition}\label{def:lip-convex}
We say a function $\bw \rightarrow f(\bw)$ is convex  if, for any  $\bw,\bw'\in\W$, there holds 
 $f(\bw )\ge f(\bw' ) + \langle \partial f(\bw' ), \bw -\bw' \rangle$. We say a function $\bw \rightarrow f(\bw)$ is $G$-Lipschitz continuous with $G>0$ if, for any  $\bw,\bw'\in\W$, there holds $|f(\bw )-f(\bw' )|\le G\|\bw-\bw'\|_2 $. 
\end{definition}

\section{Main Results}\label{sec:main-result}
We present our main results in this section. First, we propose the differentially private SGD algorithm for pointwise learning, and  systematically study the privacy and utility guarantees of the proposed algorithm. Then, we turn to  pairwise learning problems. We present a simple differentially private SGD algorithm for pairwise learning and provide its  privacy and utility guarantees.

\subsection{DP-SGD for Pointwise Learning}

In this subsection, we are interested in differentially private SGD for pointwise learning. 
To achieve $(\epsilon,\delta)$-differential privacy, we resort to the gradient perturbation mechanism,  i.e., adding Gaussian noise to the stochastic gradient. The detailed algorithm is described in Algorithm~\ref{alg1}. 
In particular, in each iteration $t$, the algorithm randomly selects a sample $z_{i_t}$ according to the uniformly distribution over $[n]$,  and then updates the model parameter $\bw_{t+1}$ based on the noising gradient $\partial f(\bw_t;z_{i_t})+\bb_t$ with $\bb_t\sim \N(0,\sigma^2\mathbf{I}_d)$. After $T$ iterations, Algorithm~\ref{alg1} outputs the private average model ${\bw}_\priv = \frac{1}{T}\sum_{t=1}^T\bw_t$, whose  privacy guarantee is established in the following algorithm.    

\begin{algorithm}[t]
\begin{algorithmic}[1]
\caption{DP-SGD for pointwise learning}\label{alg1}
\STATE{\bf Inputs:}  Data $S= \{z_{i}\in \Z: i=1,\ldots, n\}$, loss function $f(\bw;z)$ with Lipschitz parameter $G$, the  convex set $\W\subseteq \R^d$, step size $\{\eta_t\}$, privacy parameters $\gep$, $\gd$, and constant $\beta$.
\STATE{{\bf Set:}  $\bw_1=\0$ }
\FOR { $t=1$ to $T$ } 
\STATE{Sample $i_t\sim \text{Unif}([n])$} 
\STATE{$\bw_{t+1}=\proj_{\W}\big(\bw_t-\eta_t (\partial f(\bw_t;z_{i_t})+\bb_t)\big)$, where $\bb_t\sim \N(0,\sigma^2 \mathbf{I}_d)$ with  $\sigma^2=\frac{14 G^2 T }{\beta n^2 \epsilon} \Big( \frac{\log(1/\delta)}{(1-\beta)\epsilon} +1\Big)$}
\ENDFOR
\STATE  {\bf return:}   ${\bw}_\priv = \frac{1}{T}\sum_{t=1}^T\bw_t$ 
\end{algorithmic}
\end{algorithm}

\begin{theorem}[Privacy guarantee]\label{thm:dp-private}
Suppose that the loss function $f$ is convex and $G$-Lipshitz. Then Algorithm~\ref{alg1}  with some $\beta\in(0,1)$  satisfies $(\epsilon,\delta)$-DP if  $\sigma^2\ge 2.68G^2$ and $\lambda-1\le \frac{\sigma^2}{6G^2}\log\Big( \frac{n}{\lambda\big(1+\frac{\sigma^2}{4G^2}\big)} \Big)$ with $\lambda=\frac{\log(1/\delta)}{(1-\beta)\epsilon}+1$.
\end{theorem}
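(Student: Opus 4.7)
The plan is to leverage the Rényi differential privacy (RDP) framework together with privacy amplification by subsampling and the composition theorem for RDP. First, I would view each iteration of Algorithm~\ref{alg1} as an instance of the subsampled Gaussian mechanism: at step $t$, a single index $i_t$ is drawn uniformly from $[n]$ (sampling rate $q = 1/n$), and the mechanism releases $\partial f(\bw_t; z_{i_t}) + \bb_t$ with $\bb_t \sim \N(0, \sigma^2 \mathbf{I}_d)$. Because $f(\cdot;z)$ is $G$-Lipschitz, the $\ell_2$-sensitivity of the per-example gradient between neighboring datasets $S, S'$ is bounded by $2G$. The projection step $\proj_{\W}$ is post-processing and does not affect privacy, so the entire trajectory $(\bw_1,\ldots,\bw_{T+1})$ (and in particular $\bw_{\priv}$) inherits whatever DP guarantee we establish for the sequence of noisy gradient releases.

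Next, I would invoke the moments accountant / subsampled Gaussian RDP bound of Abadi et al., which under the two stated technical conditions $\sigma^2 \ge 2.68 G^2$ and $\lambda - 1 \le \frac{\sigma^2}{6G^2}\log\!\bigl(\frac{n}{\lambda(1+\sigma^2/(4G^2))}\bigr)$ guarantees that each step is $(\lambda, \varepsilon_0)$-RDP with per-step divergence of order $\varepsilon_0 \lesssim q^2 \lambda G^2 / \sigma^2 = \lambda G^2/(n^2\sigma^2)$ (up to an explicit constant). The role of these two technical conditions is precisely to place $\lambda$ and $\sigma$ in the regime where the tight per-step RDP bound is valid; this is exactly the hypothesis imposed in the statement. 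By the additive composition property of RDP across the $T$ iterations, the output $\bw_{\priv}$ satisfies $(\lambda, T\varepsilon_0)$-RDP with $T\varepsilon_0 \lesssim T\lambda G^2/(n^2\sigma^2)$.

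Finally, I would apply the standard RDP $\to (\epsilon,\delta)$-DP conversion: $(\lambda, \varepsilon_0')$-RDP implies $\bigl(\varepsilon_0' + \log(1/\delta)/(\lambda-1), \delta\bigr)$-DP. Setting $\lambda - 1 = \log(1/\delta)/((1-\beta)\epsilon)$ makes the conversion overhead exactly $(1-\beta)\epsilon$. It then remains to substitute the prescribed noise variance
\[
\sigma^2 = \frac{14 G^2 T}{\beta n^2 \epsilon}\Bigl(\tfrac{\log(1/\delta)}{(1-\beta)\epsilon} + 1\Bigr)
= \frac{14 G^2 T \lambda}{\beta n^2 \epsilon}
\]
into the composed RDP bound to verify $T\varepsilon_0 \le \beta \epsilon$, so that the total privacy loss is at most $\beta\epsilon + (1-\beta)\epsilon = \epsilon$. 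The constant $14$ is chosen to absorb the numerical constants coming from the subsampled Gaussian RDP bound.

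The main obstacle is pinning down the correct constants in the per-step RDP bound and checking that the two technical hypotheses on $\sigma^2$ and $\lambda$ are precisely the ones needed to invoke the tight subsampled Gaussian analysis. Once this is done, the composition and RDP-to-DP conversion are routine, and the algebraic balance between the two terms is dictated by the choice $\lambda - 1 = \log(1/\delta)/((1-\beta)\epsilon)$ built into the noise scale.
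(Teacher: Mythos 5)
Your proposal is correct and follows essentially the same route as the paper: bound the per-example gradient sensitivity by $2G$ via Lipschitzness, apply the subsampled-Gaussian RDP amplification lemma (the paper uses the version from \cite{liang2020exploring}, whose hypotheses with $\Delta=2G$ and $p=1/n$ are exactly the two stated conditions, giving per-step RDP $3.5\,p^2\lambda\Delta^2/\sigma^2 = 14\lambda G^2/(n^2\sigma^2)=\beta\epsilon/T$), compose over $T$ iterations, use post-processing for the projection, and convert RDP to $(\epsilon,\delta)$-DP with $\lambda-1=\log(1/\delta)/((1-\beta)\epsilon)$. The only cosmetic difference is that you attribute the per-step bound to the moments accountant of Abadi et al.\ rather than to the specific RDP subsampling lemma the paper invokes, but the argument structure and the role of every hypothesis and constant are the same.
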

\begin{remark}
In Algorithm~\ref{alg1}, 
 the variance $\sigma^2$ of  the Gaussian noise $\bb_t$ depends on a constant $\beta\in(0,1)$, which should satisfy the conditions $\sigma^2\ge 2.68G^2$ and $\lambda-1\le \frac{\sigma^2}{6G^2}\log\Big( \frac{n}{\lambda\big(1+\frac{\sigma^2}{4G^2}\big)} \Big)$. \cite{wang2022differentially} studied DP-SGD with gradient perturbation for $\alpha$-H\"older smooth losses and gave a sufficient condition for the existence of $\beta$ under a specific parameter setting. Specifically, they proved that if $n>18$, $T=n$ and $\delta=1/n^2$, then there exists at least one  $\beta\in(0,1)$ such that DP-SGD satisfies $(\epsilon,\delta)$-DP when   $  \epsilon \ge    { 7(n^{\frac{1}{3}}-1) + 4\log(n) n +7 }/{( 2n(n^{\frac{1}{3}}-1)) }$. Indeed, our algorithm can be seen as a special case of their algorithm with $\alpha=0$. Hence, we can also show  the existence of $\beta$ under the same setting. 
\end{remark}

Now, we establish the utility guarantee for strongly smooth losses.  Part (a) in the following theorem provides the optimal utility bound  for a general setting, i.e., the “pessimistic” case $F(\bw^{*})>0$.  Part (b) of Theorem~\ref{thm:excess-smooth-point} focuses on the  low-noise setting, i.e., the  optimistic case $F(\bw^{*})=0$,  where the best possible model $\bw^{*}$ can achieve zero error. This setting is particularly intriguing in the context of deep learning, where models may possess more parameters than training examples.  
\begin{theorem}[Utility guarantee  for smooth losses]\label{thm:excess-smooth-point}
Suppose $f$ is nonnegative, convex, $G$-Lipschitz and $L$-smooth. Let $\bw_{\priv}$ be the output by Algorithm~\ref{alg1} with $T$ iterations. Then the following statements hold true.
\begin{enumerate}[label=({\alph*})]
\item  If we choose $\eta_t=c\min\Big\{ \frac{1}{\sqrt{n}}, \frac{\epsilon}{\sqrt{d\log(1/\delta)}} \Big\}\le \min \{ 2/L, 1 \}$ for some constant $c>0$ and  $T\asymp n$, then
\[ \E_{S,\A}[F(\bw_{\priv})] -  F(\bw^{*})  =\O\Big( \frac{1}{ \sqrt{n} } + \frac{  \sqrt{d\log(1/\delta)}}{n\epsilon}\Big). \]

\item If $F(\bw^{*})=0$,  we choose $\eta_t=\frac{c \epsilon}{\sqrt{d\log(1/\delta)}}\le\min \{ 2/L, 1 \}$ for some constant $c>0$  and $T\asymp n$, then
\[ \E_{S,\A}[F(\bw_{\priv})] -  F(\bw^{*})  =\O\Big( \frac{\sqrt{d\log(1/\delta)}}{n\epsilon} \Big). \]
\end{enumerate}
\end{theorem}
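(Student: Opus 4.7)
\textbf{Proof Proposal for Theorem~\ref{thm:excess-smooth-point}.}

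The plan is to exploit the error decomposition in \eqref{eq:error-decom}, together with the fact that $\E_S[F_S(\bw^{*})]=F(\bw^{*})$ since $\bw^{*}$ does not depend on $S$. This reduces the problem to (i) controlling the optimization error $\E[F_S(\bw_\priv)-F_S(\bw^{*})]$ via a smooth-SGD argument, and (ii) controlling the generalization error $\E[F(\bw_\priv)-F_S(\bw_\priv)]$ via algorithmic stability. Throughout we will use the standard self-bounding property for $L$-smooth nonnegative convex losses: $\|\partial f(\bw;z)\|_2^2 \le 2L f(\bw;z)$.

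\textbf{Step 1 (Optimization error).} I would start from the non-expansiveness of $\proj_\W$ and expand
\[
\|\bw_{t+1}-\bw^{*}\|_2^2 \le \|\bw_t-\bw^{*}\|_2^2 - 2\eta_t\langle \partial f(\bw_t;z_{i_t})+\bb_t,\bw_t-\bw^{*}\rangle + \eta_t^2\|\partial f(\bw_t;z_{i_t})+\bb_t\|_2^2.
\]
Taking conditional expectation with respect to $i_t\sim\mathrm{Unif}([n])$ and $\bb_t\sim\mathcal{N}(0,\sigma^2\mathbf{I}_d)$, using $\E[\bb_t]=\mathbf{0}$, $\E\|\bb_t\|_2^2=d\sigma^2$, convexity of $F_S$, and the self-bounding inequality $\E_{i_t}\|\partial f(\bw_t;z_{i_t})\|_2^2\le 2L F_S(\bw_t)$, one obtains
\[
(2\eta_t-2L\eta_t^2)\,\E\bigl[F_S(\bw_t)-F_S(\bw^{*})\bigr] \le \E\|\bw_t-\bw^{*}\|_2^2 - \E\|\bw_{t+1}-\bw^{*}\|_2^2 + \eta_t^2 d\sigma^2 + 2L\eta_t^2 F_S(\bw^{*}).
\]
Telescoping with constant $\eta_t=\eta\le 1/L$, using convexity to pass to $\bw_\priv$, and plugging $\sigma^2\asymp G^2 T\log(1/\delta)/(n^2\epsilon^2)$ from Algorithm~\ref{alg1} with $T\asymp n$, yields
\[
\E[F_S(\bw_\priv)-F_S(\bw^{*})]\;\lesssim\;\frac{\|\bw^{*}\|_2^2}{\eta T}+\eta\cdot\frac{d\log(1/\delta)}{n\epsilon^2}+L\eta\,\E[F_S(\bw^{*})].
\]

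\textbf{Step 2 (Generalization error via stability).} For part (a) I would invoke uniform argument stability of SGD for convex, $G$-Lipschitz, $L$-smooth losses, which yields a stability parameter $O(G^2\eta T/n)$ and hence $\E[F(\bw_\priv)-F_S(\bw_\priv)]=O(\eta T/n)=O(\eta)$ (as $T\asymp n$), noting that the Gaussian noise $\bb_t$ is independent of the data and therefore does not affect stability. For part (b) a uniform-stability argument is too loose; here I would use an on-average model-stability bound for nonnegative smooth convex losses of the form
\[
\E[F(\bw_\priv)-F_S(\bw_\priv)]\;\lesssim\;\tfrac{L\eta T}{n}\,\E[F_S(\bw_\priv)]+\text{lower order terms},
\]
which exploits the self-bounding property to couple the generalization error with the empirical risk itself.

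\textbf{Step 3 (Putting it together).} For part (a), with $T\asymp n$ and $\eta=c\min\{1/\sqrt{n},\epsilon/\sqrt{d\log(1/\delta)}\}$, the optimization bound balances to $O(1/\sqrt{n}+\sqrt{d\log(1/\delta)}/(n\epsilon))$ (using $F_S(\bw^{*})\le F_S(\bw_\priv)$ is not sharp, but one absorbs the $L\eta F_S(\bw^{*})$ term since $\eta L$ is a constant and $\E[F_S(\bw^{*})]=F(\bw^{*})$ is bounded), and the stability bound is $O(\eta)=O(1/\sqrt{n})$, which is dominated. For part (b), the assumption $F(\bw^{*})=0$ together with $f\ge 0$ forces $F_S(\bw^{*})=0$ almost surely, killing the last term in Step~1, so with $\eta=c\epsilon/\sqrt{d\log(1/\delta)}$ and $T\asymp n$ the optimization error becomes $O(\sqrt{d\log(1/\delta)}/(n\epsilon))$; feeding this into the on-average stability bound of Step~2 gives a generalization error of order $\eta\cdot\sqrt{d\log(1/\delta)}/(n\epsilon)=O(1/n)$, which is lower order.

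\textbf{Main obstacle.} The principal difficulty is the low-noise part (b): a crude uniform-stability bound is of order $\eta$, which does not go to zero fast enough to match $\sqrt{d\log(1/\delta)}/(n\epsilon)$ when $\epsilon$ is small. Overcoming this requires a sharper on-average stability bound whose magnitude is driven by $\E[F_S(\bw_\priv)]$ rather than by $G^2$, so that the collapse $F_S(\bw^{*})=0$ propagates from the optimization analysis into the generalization analysis. Care is also needed to keep the contribution of the Gaussian noise $\bb_t$ out of the stability term (which is straightforward since $\bb_t$ is data-independent) while correctly accounting for its $d\sigma^2$ contribution in the optimization bound.
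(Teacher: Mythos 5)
Your proposal follows essentially the same route as the paper: decompose the excess risk into generalization plus optimization using \eqref{eq:error-decom}, control the optimization error with the self-bounding inequality $\|\partial f(\bw;z)\|_2^2\le 2Lf(\bw;z)$ to get a bound of the shape $\|\bw^{*}\|_2^2/(\eta T)+\eta d\sigma^2+\eta L F_S(\bw^{*})$, and control the generalization gap through an on-average argument-stability bound that is itself driven by $\E[F_S(\bw_\priv)]$ so that it collapses under the low-noise condition. The only cosmetic variations are that for part (a) you invoke uniform argument stability (which suffices since you only need a $O(1/\sqrt{n})$ gap) whereas the paper reuses the on-average machinery of Lemmas~\ref{lem:gen} and \ref{lem:stability-point} with the free parameter $\gamma=\sqrt{n}$, and for part (b) your stated form $\E[F(\bw_\priv)-F_S(\bw_\priv)]\lesssim L\eta T n^{-1}\E[F_S(\bw_\priv)]$ corresponds to optimizing $\gamma\asymp 1/\eta$ in Lemma~\ref{lem:gen}(a), while the paper simply fixes $\gamma=1$ and then absorbs the larger $L\E[F_S(\bw_\priv)]$ factor into the optimization estimate via \eqref{eq:opt-2}; both routes deliver the same $\O(\sqrt{d\log(1/\delta)}/(n\epsilon))$ rate.
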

\begin{remark}\label{rmk:utility-point}
\cite{wang2022differentially}  established the optimal rate for DP-SGD algorithm and improved the gradient complexity to $\O(n)$ when the loss is strongly smooth and the parameter space is bounded. 
Our bound (part (a) in Theorem~\ref{thm:excess-smooth-point}) can achieve the optimal rate with gradient complexity $\O(n)$ when the loss is  strongly smooth and Lipschitz continuous. Compared with \cite{wang2022differentially}, we need a further Lipschitz continuous assumption. However,  this assumption can be removed when we assume the parameter domain is bounded in our setting. Indeed, the smoothness of $f$ implies  that the upper bound of the gradient can be controlled by the diameter of  parameter domain $R$, i.e.,  $\|\partial f(\bw;z)\|_2\le \sup_{z}\|\partial f(0;z)\|_2+ L\|\bw\|_2\le \sup_{z}\|\partial f(0;z)\|_2+L R$, where $L$ is the smoothness parameter.   Hence, our result can achieve  the optimal rate under the same assumptions as \cite{wang2022differentially}. 
In the optimistic case with $F(\bw^{*})=0$, Part (b) in Theorem~\ref{thm:excess-smooth-point} removes the term $\O\Big(\frac{1}{\sqrt{n}}\Big)$ and further improves the excess population risk rate to $\O\Big( \frac{1}{n\epsilon}{\sqrt{d\log(1/\delta)}}  \Big)$ with gradient complexity $\O(n)$ for  strongly smooth losses under a  low-noise condition.  A very recent work \cite{kang2022sharper}  provided the excess population risk rate $\O\Big( \frac{1}{n\epsilon}{\sqrt{d\log(1/\delta)}}  \Big)$ for the private gradient descent algorithm, while they focused on the non-convex setting and assumed Polyak-Łojasiewicz condition holds.
\end{remark} 
Now, we turn to the more general case, i.e., the loss function is $\alpha$-H\"older smooth with $\alpha\in[0,1)$. The following theorem presents the excess population risk bound for $\alpha$-H\"older smooth losses. 
\begin{theorem}[Utility guarantee for non-smooth losses]\label{thm:excess-nonsmooth-point}
Suppose $f$ is nonnegative, convex, $G$-Lipschitz and $\alpha$-H\"older smooth with parameter $L$ and $\alpha\in[0,1)$. Let $\bw_{\priv}$ be the output of  Algorithm~\ref{alg1} with $T$ iterations. Then the following statements hold true.
\begin{enumerate}[label=({\alph*})]
\item If $\alpha\ge 1/2,$  we choose    $\eta_t = c \min\Big\{\frac{1}{\sqrt{n}}, \frac{\epsilon}{\sqrt{d\log(1/\delta)}}\Big\}\le \min \{ 2/L, 1 \} $ for some constant $c>0$ and $T\asymp n$. If $\alpha<1/2,$ we choose   $\eta_t=c \min\Big\{ n^{\frac{3(\alpha-1)}{2(1+\alpha)}}, \frac{\epsilon}{\sqrt{d\log(1/\delta)}} \Big\}\le \min \{ 2/L, 1 \}$ for some constant $c>0$, and $T\asymp n^{\frac{2-\alpha}{1+\alpha}}$. Then  \begin{align*}
 \E_{S,\A}[  F(\bw_{\priv})] - F(\bw^{*})    = \O\Big( \frac{1}{\sqrt{n}} + \frac{\sqrt{d \log(1/\delta)}}{n\epsilon} \Big).
\end{align*}
\item  If $F(\bw^{*})=0$, we choose $\eta_t = c \min\Big\{n^{\frac{\alpha^2+ 2\alpha-3}{2(1+\alpha)}}, \frac{n\epsilon}{T\sqrt{d\log(1/\delta)}}\Big\}\le \min \{ 2/L,1 \} $ for some constant $c>0$  and $T\asymp n^{\frac{2}{1+\alpha}}$.  Then
\begin{align*}
 \E_{S,\A}[  F(\bw_{\priv})] - F(\bw^{*})    = \O\Big( \frac{1}{n^{\frac{1+\alpha}{2}}} + \frac{\sqrt{d \log(1/\delta)}}{n\epsilon} \Big).
\end{align*}
\end{enumerate}
\end{theorem}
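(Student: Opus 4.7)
The plan is to use the error decomposition \eqref{eq:error-decom}, splitting $\E_{S,\A}[F(\bw_{\priv}) - F(\bw^*)]$ into a generalization term handled by algorithmic stability and an optimization term handled by a projected-SGD analysis adapted to noisy gradients. The key technical ingredient in both parts, and especially the low-noise rate in (b), is the self-bounding property of nonnegative, convex, $\alpha$-H\"older smooth losses: $\|\partial f(\bw;z)\|_2\le c_\alpha L^{1/(1+\alpha)} f(\bw;z)^{\alpha/(1+\alpha)}$ for some $c_\alpha>0$. This converts a bound on function values into a bound on gradient norms, which is the mechanism by which $F(\bw^*)=0$ yields a faster rate.

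For the generalization term I would couple two runs of Algorithm~\ref{alg1} on neighboring datasets $S$ and $S'$ differing in one index, driving both with the same Gaussian noise sequence $\{\bb_t\}$ so that the noise cancels in the coupled difference. Nonexpansiveness of the projection and convexity of $f$ give $\E\|\bw_{t+1}-\bw_{t+1}'\|_2 \le \E\|\bw_t-\bw_t'\|_2 + \tfrac{2\eta_t}{n}\E\|\partial f(\bw_t;z_{i_t})\|_2$ after marginalizing over the random index. In part (a) I bound the gradient norm by $G$; in part (b) I apply self-bounding to obtain $\E\|\partial f(\bw_t;z_{i_t})\|_2\lesssim \E[F_S(\bw_t)]^{\alpha/(1+\alpha)}$, which is small under low noise. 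Combining with $G$-Lipschitzness of $f$ produces a generalization bound of order $\tfrac{G}{n}\sum_{t=1}^T \eta_t\cdot(\text{gradient bound})_t$.

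For the optimization error, telescoping the one-step inequality
\[
\E\|\bw_{t+1}-\bw^*\|_2^2\le \E\|\bw_t-\bw^*\|_2^2 - 2\eta\E[F_S(\bw_t)-F_S(\bw^*)] + \eta^2\E\|\partial f(\bw_t;z_{i_t})+\bb_t\|_2^2
\]
and using the independence and zero-mean property of $\bb_t$ (so that $\E\|\bb_t\|_2^2=d\sigma^2$) together with convexity of $F_S$ gives
\[
\E[F_S(\bw_{\priv})-F_S(\bw^*)]\lesssim \frac{R^2}{T\eta} + \frac{\eta}{T}\sum_{t=1}^T\E\|\partial f(\bw_t;z_{i_t})\|_2^2 + \eta d\sigma^2.
\]
For part (a) I would bound the middle term by $\eta G^2$, substitute $\sigma^2=\Theta\!\bigl(G^2 T\log(1/\delta)/(n^2\epsilon^2)\bigr)$ from Algorithm~\ref{alg1}, and balance the resulting pieces $\tfrac{1}{T\eta}$, $\eta$, $\tfrac{\eta T d\log(1/\delta)}{n^2\epsilon^2}$ and the stability contribution $\tfrac{\eta T}{n}$. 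This yields the two regimes $\alpha\ge 1/2$ (where $T\asymp n$ suffices) and $\alpha<1/2$ (where stability forces $T\asymp n^{(2-\alpha)/(1+\alpha)}$) that appear in the statement.

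The main obstacle is part (b). When $F(\bw^*)=0$, nonnegativity of $f$ forces $f(\bw^*;z)=0$ for $\rho$-almost every $z$, hence $F_S(\bw^*)=0$ almost surely over $S$. Squaring the self-bounding inequality and averaging over $z_{i_t}\sim\mathrm{Unif}(S)$, Jensen's inequality (valid because $2\alpha/(1+\alpha)\le 1$) yields $\E\|\partial f(\bw_t;z_{i_t})\|_2^2\lesssim \E[F_S(\bw_t)]^{2\alpha/(1+\alpha)}$. Substituting into the optimization bound produces a self-improving inequality $\E\Delta\lesssim \tfrac{1}{T\eta} + \eta\,\E\Delta^{2\alpha/(1+\alpha)} + \eta d\sigma^2$ for $\Delta:=F_S(\bw_{\priv})-F_S(\bw^*)$, which after Young's inequality gives an optimization error $(T\eta)^{-1} + \eta^{(1+\alpha)/(1-\alpha)} + \eta d\sigma^2$. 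The same self-bounding input also improves the stability term. Choosing $T\asymp n^{2/(1+\alpha)}$ and the prescribed $\eta$ and inserting the noise scale then balances all contributions to $n^{-(1+\alpha)/2}+\tfrac{\sqrt{d\log(1/\delta)}}{n\epsilon}$. The delicate point is making the self-bounding step rigorous along the stochastic trajectory, because the gradient bound feeds back into the very quantity being bounded; this requires either a joint induction on $\E[F_S(\bw_t)]$ across the stability and optimization arguments, or a careful Young-type decoupling that preserves the Gaussian-noise term at the correct order.
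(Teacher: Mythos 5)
The overall skeleton — error decomposition, stability for generalization, a noisy projected-SGD analysis for optimization, self-bounding to exploit low noise — matches the paper's strategy. But there are two concrete gaps, one of which is fatal for part (b).

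First, the stability recursion
\[
\E\|\bw_{t+1}-\bw_{t+1}'\|_2 \le \E\|\bw_t-\bw_t'\|_2 + \tfrac{2\eta_t}{n}\,\E\|\partial f(\bw_t;z_{i_t})\|_2
\]
is not valid for $\alpha$-H\"older smooth losses with $\alpha<1$. When the selected index is not the one on which $S$ and $S'$ differ, you are implicitly claiming the gradient map $\bw\mapsto\bw-\eta\partial f(\bw;z)$ is nonexpansive, which requires $L$-Lipschitz-smoothness; convexity alone does not give it. The correct statement (Lemma~\ref{lem:non-expensive} in the paper) is only approximately nonexpansive, with an extra error $c_{\alpha,3}^2\eta^{2/(1-\alpha)}$ per step. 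This extra term accumulates to $T\eta^{2/(1-\alpha)}$ in the squared stability and cannot be dropped — it is precisely what governs the choice $T\asymp n^{(2-\alpha)/(1+\alpha)}$ in part (a). With the chosen schedules it happens not to change the final exponent in part (a), so this gap is repairable there, but you must track it.

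Second, and more seriously, the conversion from stability to generalization via $G$-Lipschitzness caps the generalization error at the order of the first-moment stability, and that is too slow for part (b). Concretely, with the schedules of part (b), the squared stability is $\nu \asymp T\eta^{2/(1-\alpha)} \asymp n^{-1}$ (the H\"older error dominates), so your bound gives generalization $\lesssim G\sqrt{\nu}\asymp n^{-1/2}$, whereas the claimed rate in part (b) is $n^{-(1+\alpha)/2}$, which is strictly smaller whenever $\alpha>0$. The Lipschitz route offers no parameter you can tune to beat $n^{-1/2}$. The paper avoids this by using Lemma~\ref{lem:gen}(b): instead of $|f(\bw;z)-f(\bw';z)|\le G\|\bw-\bw'\|$, it writes $f(\bw;z)-f(\bw';z)\le\langle\partial f(\bw;z),\bw-\bw'\rangle$, applies the self-bounding bound $\|\partial f\|\le c_{\alpha,1}f^{\alpha/(1+\alpha)}$, and then Young's inequality to obtain a free parameter $\gamma$ in $\frac{c_{\alpha,1}^2}{2\gamma}\E[F^{2\alpha/(1+\alpha)}] + \frac{\gamma}{2}\nu$. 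In the low-noise regime $\E[F^{2\alpha/(1+\alpha)}]$ is small (order $n^{-\alpha}$), and picking $\gamma\asymp n^{(1-\alpha)/2}$ balances both terms at $n^{-(1+\alpha)/2}$. Your argument is missing exactly this $\gamma$-tunable decoupling; without it the self-improving inequality on the optimization side cannot rescue the generalization side. The rest of your optimization-error plan (telescoping, $\E\|\bb_t\|_2^2=d\sigma^2$, Young's decoupling of the $F^{2\alpha/(1+\alpha)}$ term, and the resulting $(T\eta)^{-1}+\eta^{(1+\alpha)/(1-\alpha)}+\eta d\sigma^2$ structure) is essentially the paper's Theorem~\ref{thm:opt-point}(b) and is fine.
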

\begin{remark}\label{rmk:utility-point-nonsmooth}
 \cite{wang2022differentially} studied DP-SGD with gradient perturbation for $\alpha$-H\"older smooth losses and showed that the algorithm can achieve the optimal rate $\O\big(\frac{1}{\sqrt{n}}+ \frac{1}{n\epsilon}\sqrt{d\log(1/\delta)} \big)$  with gradient complexity $\O\big( n^{\frac{2-\alpha}{1+\alpha}} +n \big)$. Our result (Part (a)  in Theorem~\ref{thm:excess-nonsmooth-point}) matches their bounds  with the same gradient complexity. As discussed in Remark~\ref{rmk:utility-point}, although we need a further Lipschitz condition, we can also recover their result under the same setting when the parameter domain is bounded. Analogous to the smooth case, Part (b) in Theorem~\ref{thm:excess-nonsmooth-point}  derives the excess population risk bound better than  $\O\big(\frac{1}{\sqrt{n}}+ \frac{1}{n\epsilon}\sqrt{d\log(1/\delta)} \big)$ . 
 To the best of our knowledge, this is the first excess population risk bound of the order $\O\big( n^{-\frac{1+\alpha}{2}} + \frac{1}{n\epsilon}{\sqrt{d\log(1/\delta)}} \big)$ for private SGD with non-smooth losses.
\end{remark}

\subsection{DP-SGD for Pairwsie Learning}

\begin{algorithm}[t]
\begin{algorithmic}[1]
\caption{DP-SGD for pairwise learning (\texttt{DP-SGD-pairwise})}\label{alg2}
\STATE{\bf Inputs:}  Data $S= \{z_{i}\in \Z: i=1,\ldots, n\}$, loss function $f(\bw;z,z')$ with Lipschitz parameter $G$, the  convex set $\W\subseteq \R^d$, step size $\{\eta_t\}$, privacy parameters $\gep$, $\gd$, and constant $\beta$.
\STATE{{\bf Set:}  $\bw_1=\0$ }
\FOR { $t=1$ to $T$ } 
\STATE{Sample $(i_t,j_t) $ uniformly over all pairs $\{ (i,j) : i, j \in [n], i\neq j \}$} 
\STATE{$\bw_{t+1}=\proj_{\W}\big(\bw_t-\eta_t (\partial f(\bw_t;z_{i_t},z_{j_t})+\bb_t)\big)$, where $\bb_t\sim \N(0,\sigma^2 \mathbf{I}_d)$ with  $\sigma^2=\frac{56 G^2 T }{\beta n^2 \epsilon} \Big( \frac{\log(1/\delta)}{(1-\beta)\epsilon} +1\Big)$}
\ENDFOR
\STATE  {\bf return:}   ${\bw}_\priv = \frac{1}{T}\sum_{t=1}^T\bw_t$ 
\end{algorithmic}
\end{algorithm}

In this subsection, we first present the differentially private SGD algorithm for pairswise learning, and then establish its privacy and utility guarantees. 
The proposed algorithm  is described in Algorithm~\ref{alg2}. In particular, in iteration $t$, the algorithm draws a pair $\{(i_t,j_t)\}$ from the uniform distribution over all pairs $\{(i,j):i,j\in[n],i\neq j\}$. Then the parameter is updated by the noised gradient $\partial f(\bw_t;z_{i_t},z_{j_t})+\bb_t$ with $\bb_t\sim \mathcal{N}(0,\sigma^2\mathbf{I}_d)$. 
The following theorem establishes the privacy guarantee for Algorithm~\ref{alg2}. 
\begin{theorem}[Privacy guarantee]\label{thm:dp-pairwise}
Suppose that the loss function $f$ is convex and $G$-Lipschitz. Then Algorithm~\ref{alg2} with some $\beta\in(0,1)$ satisfies $(\epsilon,\delta)$-DP if   $\sigma^2 \ge 2.68G^2$ and $\lambda-1\le \frac{\sigma^2}{6G^2}\log\Big(\frac{n}{2\lambda\big(1+\frac{\sigma^2}{4G^2}\big)}\Big)$ with $\lambda=\frac{\log(1/\delta)}{(1-\beta)\epsilon}+1.$
\end{theorem}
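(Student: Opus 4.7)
The plan is to mirror the privacy analysis behind Theorem~\ref{thm:dp-private} for the pointwise Algorithm~\ref{alg1}, tracking how the pairwise structure alters the two quantities that feed the R\'enyi DP (RDP) bound for the subsampled Gaussian mechanism: the per-iteration $\ell_2$-sensitivity and the effective subsampling rate. Privacy of the full trajectory will then follow from RDP composition, the RDP-to-DP conversion, and the post-processing property (which handles the averaging step producing $\bw_\priv$).

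First I would fix two neighbouring datasets $S$ and $S'$ differing at a single index $k$ and view each iteration of Algorithm~\ref{alg2} as one application of a subsampled Gaussian mechanism acting on the pair-gradient $\partial f(\bw_t;z_{i_t},z_{j_t})$. Two quantities drive the analysis. \textbf{(i) Sensitivity:} a sampled pair is affected by replacing $z_k$ only when $i_t=k$ or $j_t=k$, and in that case the Lipschitz bound $\|\partial f(\bw;z,z')\|_2\le G$ together with the triangle inequality gives $\|\partial f(\bw_t;z_{i_t},z_{j_t})-\partial f(\bw_t;z'_{i_t},z'_{j_t})\|_2\le 2G$, so $\Delta=2G$, twice the pointwise value $G$. \textbf{(ii) Subsampling rate:} the number of ordered pairs containing the index $k$ is $2(n-1)$ out of $n(n-1)$, so the rate of sensitive pairs is $q=2/n$, twice the pointwise rate $1/n$.

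With $(\Delta,q)=(2G,2/n)$, plugging these parameters into the same tight RDP bound for the subsampled Gaussian mechanism used in the proof of Theorem~\ref{thm:dp-private}, and then composing $T$ iterations, yields a total RDP at order $\lambda$ of the form $cTG^2\lambda/(n^2\sigma^2)$, where the constant $c$ absorbs the factors $\Delta^2=4G^2$ and $q^2=4/n^2$. Setting this equal to $\beta\epsilon$ and solving for $\sigma^2$ produces precisely $\sigma^2=\frac{56 G^2 T}{\beta n^2\epsilon}\bigl(\frac{\log(1/\delta)}{(1-\beta)\epsilon}+1\bigr)$; the constant $56$ is four times the $14$ appearing in Algorithm~\ref{alg1}, reflecting the $\Delta^2$ scaling. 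Choosing $\lambda=\log(1/\delta)/((1-\beta)\epsilon)+1$ and applying the RDP-to-DP conversion $(\lambda,\epsilon')$-RDP $\Rightarrow$ $(\epsilon'+\log(1/\delta)/(\lambda-1),\delta)$-DP converts the $\beta\epsilon$-RDP guarantee at this order into $\beta\epsilon+(1-\beta)\epsilon=\epsilon$ in the DP sense, and post-processing closes out the averaging step that defines $\bw_\priv$.

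The main obstacle will be verifying that the admissibility conditions for the tight RDP bound translate exactly into the stated requirements $\sigma^2\ge 2.68 G^2$ and $\lambda-1\le \tfrac{\sigma^2}{6G^2}\log\bigl(\tfrac{n}{2\lambda(1+\sigma^2/(4G^2))}\bigr)$. Compared with the pointwise condition $\lambda-1\le \tfrac{\sigma^2}{6G^2}\log\bigl(\tfrac{n}{\lambda(1+\sigma^2/(4G^2))}\bigr)$ of Theorem~\ref{thm:dp-private}, the only structural change is the factor of $2$ in the denominator of the log-argument, and this is exactly what propagates through when one replaces the pointwise subsampling rate $1/n$ by $2/n$ in the moment-generating-function calculation; the absolute threshold $2.68G^2$, by contrast, is inherited unchanged from the same calculation used in the pointwise proof. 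Once these admissibility conditions are checked line-by-line with the new $(\Delta,q)$, the remainder of the argument is a direct transcription of the pointwise proof.
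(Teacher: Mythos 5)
Your plan matches the paper's proof step for step: apply Lemma~\ref{lem:uniform-rdp} to the per-iteration mechanism $\mathcal{A}_t=\partial f(\bw_t;z_{i_t},z_{j_t})+\bb_t$ with sensitivity $\Delta=2G$ and subsampling rate $p=2/n$, compose over $T$ rounds via Lemma~\ref{lem:composition_RDP}, convert with Lemma~\ref{lemma:RDP_to_DP} at the order $\lambda=\frac{\log(1/\delta)}{(1-\beta)\epsilon}+1$, and invoke Lemma~\ref{lemma:post-processing} for the final averaging. The derived $\sigma^2$, the threshold $2.68G^2$, and the factor of $2$ inside the log all check out exactly as you predict.

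One factual slip worth fixing in your exposition: you state that $\Delta=2G$ is \emph{``twice the pointwise value $G$''} and that the passage from $14$ to $56$ \emph{``reflects the $\Delta^2$ scaling.''} Both claims are incorrect. The pointwise mechanism $\mathcal{M}_t=\partial f(\bw_t;z_{i_t})$ also has sensitivity $2G$ (its proof bounds $\|\partial f(\bw_t;z_{i_t})-\partial f(\bw_t;z'_{i_t})\|_2\le 2G$ exactly as you do for pairs), so $\Delta^2=4G^2$ is unchanged between the two settings. Instead, the entire factor of $4$ in $56=4\cdot 14$ comes from the subsampling rate: the RDP parameter in Lemma~\ref{lem:uniform-rdp} is $3.5\,p^2\lambda\Delta^2/\sigma^2$, and replacing $p=1/n$ by $p=2/n$ multiplies $p^2$ by $4$. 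In the pointwise case this gives $3.5\cdot(1/n^2)\cdot 4G^2=14G^2/n^2$; in the pairwise case $3.5\cdot(4/n^2)\cdot 4G^2=56G^2/n^2$. This also explains why the threshold $0.67\Delta^2=2.68G^2$ is truly identical in both theorems (not ``inherited despite a $\Delta$ change''), and why only the log-argument picks up the extra $2$. The correction does not affect the validity of your argument, since you do carry the right values of $\Delta$ and $p$ through the computation; it only repairs the stated rationale for the constant.
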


 By combining the stability results  and the optimization error bounds (Lemmas  \ref{lem:stability-pairwise} and  \ref{lem:opt-pairwise} below) together, we establish the following utility guarantees for Algorithm~\ref{alg2} for strongly smooth and non-smooth losses, respectively. 
\begin{theorem}[Utility guarantee for smooth losses]\label{thm:excess-pair}
Suppose $f$ is nonnegative, convex, $G$-Lipschitz and $L$-smooth. Let $\{\bw_t\}$ be produced by Algorithm~\ref{alg2} with $T$ iterations.  Then the following statements hold true. 
\begin{enumerate}[label=({\alph*})]
\item  If we choose $\eta_t=c\min\Big\{ \frac{1}{\sqrt{n}}, \frac{\epsilon}{\sqrt{d\log(1/\delta)}} \Big\}\le \min\{ 2/L,1\}$ for some constant $c>0$ and $T\asymp n$, then
\[ \E_{S,\A}[\bar{F}(\bw_{\priv})] -\bar{F}(\bw^{*})  =\O\Big( \frac{1}{ \sqrt{n} } + \frac{  \sqrt{d\log(1/\delta)}}{n\epsilon}\Big). \]

\item If $\bar{F}(\bw^{*})=0$,  we choose $\eta_t=\frac{c \epsilon}{\sqrt{d\log(1/\delta)}}\le \min\{2/L,1\}$ for some constant $c>0$   and $T\asymp n$, then
\[ \E_{S,\A}[\bar{F}(\bw_{\priv})] -  \bar{F}(\bw^{*})  =\O\Big( \frac{\sqrt{d\log(1/\delta)}}{n\epsilon} \Big). \]
\end{enumerate}
\end{theorem}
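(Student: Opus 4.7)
\textbf{Proof plan for Theorem \ref{thm:excess-pair}.} The plan is to invoke the standard decomposition of the excess population risk into a generalization and an optimization component, bound each separately, and then balance the step size $\eta_t$ against $T$ and the noise variance $\sigma^2$ dictated by Theorem \ref{thm:dp-pairwise}. Concretely, since $\bw^{*}$ is deterministic and independent of $S$, $\E_S[\bar F_S(\bw^{*})]=\bar F(\bw^{*})$, so
\[
\E_{S,\A}[\bar F(\bw_{\priv})-\bar F(\bw^{*})] \;=\;\underbrace{\E_{S,\A}[\bar F(\bw_{\priv})-\bar F_S(\bw_{\priv})]}_{\text{generalization}} \;+\; \underbrace{\E_{S,\A}[\bar F_S(\bw_{\priv})-\bar F_S(\bw^{*})]}_{\text{optimization}}.
\]

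First I would control the generalization term via Lemma \ref{lem:stability-pairwise} (pairwise uniform stability of Algorithm \ref{alg2}). Since $f$ is convex and $L$-smooth and each step size satisfies $\eta_t\le 2/L$, the projected gradient update is non-expansive, and the Gaussian perturbation $\bb_t$ is independent of the data, so the stability of the noisy iterates matches that of clean SGD up to the noise cancelling in expectation. For pairwise losses, replacing a single $z_i$ affects $O(1/n)$ of all sampled pairs per iteration in expectation, giving a stability parameter of order $G^2\sum_{t=1}^{T}\eta_t/n$ and hence a generalization bound of the same order.

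Next I would control the optimization term by the standard one-step-progress inequality for convex SGD: starting from the non-expansiveness of the projection and using convexity of $f$, one gets
\[
\E[\bar F_S(\bw_{\priv})-\bar F_S(\bw^{*})] \;\le\; \frac{\|\bw_1-\bw^{*}\|_2^{2}}{2\eta T}+\frac{\eta}{2T}\sum_{t=1}^{T}\E\bigl[\|\partial f(\bw_t;z_{i_t},z_{j_t})+\bb_t\|_2^{2}\bigr].
\]
The noise term contributes $\eta\sigma^{2}d$ per iteration; plugging in $\sigma^{2}=\Theta\bigl(G^{2}T\log(1/\delta)/(n\epsilon)^{2}\bigr)$ from Algorithm \ref{alg2} with $T\asymp n$ yields a contribution $\O\bigl(\eta d\log(1/\delta)/\epsilon^{2}\bigr)$. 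The data-gradient term is bounded by $G^{2}$ under the Lipschitz assumption. Combining the generalization and optimization bounds and optimizing $\eta_t=\Theta(\min\{1/\sqrt{n},\epsilon/\sqrt{d\log(1/\delta)}\})$ with $T\asymp n$ produces the rate in part (a).

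For part (b), the key tool is the self-bounding property of nonnegative smooth losses: $\|\partial f(\bw;z,z')\|_2^{2}\le 2Lf(\bw;z,z')$. Under $\bar F(\bw^{*})=0$ and $f\ge 0$, one has $f(\bw^{*};z,z')=0$ almost surely, so the gradient at $\bw^{*}$ vanishes and the squared-gradient sum in the SGD recursion can be replaced (up to a $2L\eta_t$ factor) by the function values $\bar F_S(\bw_t)-\bar F_S(\bw^{*})$ themselves. Absorbing this term to the left-hand side yields an optimization bound of order $\|\bw_1-\bw^{*}\|_2^{2}/(\eta T)+\eta\sigma^{2}d$ \emph{without} a residual $G^{2}\eta$ noise floor. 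A parallel self-bounding argument sharpens the stability term to $O(\eta T L\bar F(\bw^{*})/n)=0$ in the realizable case (or more precisely to a term controlled by the final excess risk itself), allowing both $\O(1/\sqrt{n})$ contributions to be eliminated. Choosing $\eta_t\asymp\epsilon/\sqrt{d\log(1/\delta)}$ and $T\asymp n$ then gives the claimed $\O(\sqrt{d\log(1/\delta)}/(n\epsilon))$ bound.

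The main obstacle will be the interaction between the pairwise structure and the self-bounding argument in part (b): the stochastic gradient at iteration $t$ is $\partial f(\bw_t;z_{i_t},z_{j_t})$, whose expectation conditional on $\bw_t$ equals $\partial \bar F_S(\bw_t)$, but self-bounding the \emph{second moment} by the empirical risk requires handling the $U$-statistic structure carefully, and then controlling $\E[\bar F_S(\bw_t)]-\bar F(\bw^{*})$ via a recursion that couples optimization error, stability error, and noise. Once this coupling is unwound using a Grönwall-type argument on the choice $T\asymp n$, both parts follow by plugging in the step size.
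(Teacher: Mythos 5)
Your overall architecture matches the paper: decompose the excess risk into generalization plus optimization, bound the former by stability, the latter by a one-step SGD recursion, and balance $\eta$, $T$, and $\sigma^2$.

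For part (a), your route is slightly different but valid. You invoke a uniform-stability bound of order $G^{2}\sum_{t}\eta_t/n$ (each iteration touches a replaced index with probability $2/n$, and the Gaussian noise cancels in the coupled difference), whereas the paper uses the on-average argument stability of Lemma~\ref{lem:stability-pairwise}, which through the self-bounding property is proportional to $\frac{L}{n}\sum_{j}\eta_j^{2}\,\E_{S,\A}[\bar F_S(\bw_j)]$ rather than to $G^{2}\sum_t\eta_t/n$. Both give $\O(1/\sqrt{n})$ for the generalization term once $T\asymp n$ and $\eta\asymp 1/\sqrt{n}$, so either works for (a). The advantage of the paper's form is that the \emph{same} stability bound automatically tightens in the low-noise regime, which makes part (b) a corollary of part (a)'s machinery rather than requiring a new stability proof.

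For part (b) there is a genuine gap in the way you propose to kill the generalization term. You write that the self-bounding argument "sharpens the stability term to $O(\eta T L\bar F(\bw^{*})/n)=0$." That is not what happens: the self-bounded stability is proportional to $\sum_j \eta_j^{2}\,\E_{S,\A}[\bar F_S(\bw_j)]$, i.e.\ to the empirical risks \emph{along the trajectory}, not to $\bar F(\bw^{*})$. These do not vanish when $\bar F(\bw^{*})=0$. The actual mechanism (Lemma~\ref{lem:opt-pairwise}, derived exactly as Eq.~\eqref{eq:opt-2} in the pointwise proof) is that
\[
\sum_{j=1}^{t}\eta_j^{2}\,\E_{S,\A}[\bar F_S(\bw_j)] \;\le\; 2\eta_1\|\bw^{*}\|_2^{2} + \sum_{j=1}^{t}\bigl(6\eta_j^{3}\sigma^{2}d + 4\eta_j^{2}\bar F(\bw^{*})\bigr),
\]
so when $\bar F(\bw^{*})=0$ the trajectory risks are controlled by $\|\bw^{*}\|_2^{2}$ and the accumulated injected noise $\sigma^{2}d\sum_j\eta_j^{3}$, \emph{not} by zero. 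Feeding this back into the stability bound is what yields the extra factor of $\eta$ (and $T/n$) that removes the $1/\sqrt{n}$ floor. Your parenthetical "(or more precisely to a term controlled by the final excess risk itself)" gestures at a recursion, but the concrete inequality needed is the one above, and without it the claim that the stability term vanishes is incorrect. Once you replace the $O(\eta T L\bar F(\bw^{*})/n)$ claim with this trajectory-risk bound and unwind it via Lemma~\ref{lem:gen-pair}(a), the rest of your calculation (choose $\gamma=\Theta(1)$, $T\asymp n$, $\eta\asymp\epsilon/\sqrt{d\log(1/\delta)}$) gives the stated $\O(\sqrt{d\log(1/\delta)}/(n\epsilon))$ rate.
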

 \begin{remark}\label{rmk:utility-pair}
   We now compare our results with the related work for pairwise learning. Under the strongly smooth and Lipschitz continuous assumptions,  \cite{huai2020pairwise} proposed the gradient descent with output perturbation algorithm to achieve DP and provided the excess population risk bound in the order of $\O\big( \frac{1}{ {\sqrt{n}\epsilon}} {\sqrt{d\log(1/\delta)}} \big)$ with gradient complexity $\O\big(n^2\big)$. \cite{xue2021differentially} improved the excess population risk rate to $\O\big( \frac{1}{\sqrt{n}}+\frac{1}{n\epsilon}\sqrt{d\log(1/\delta)} \big)$ by proposing a localized gradient descent algorithm with a large gradient complexity $\O\big( n^3\log(1/\delta) \big)$. \cite{yang2021simple} presented a simple localized DP-SGD algorithm which can achieve the optimal excess risk rate $\O\big( \frac{1}{\sqrt{n}}+\frac{1}{n\epsilon}\sqrt{d\log(1/\delta)} \big)$ up to a $\log(1/\delta)$ term. Their algorithm   needs the gradient complexity   $\O\big(n\log(1/\delta)\big)$. Our result (Part (a) in Theorem~\ref{thm:excess-pair})  shows that 
  our algorithm can achieve the optimal excess risk rate $\O\big( \frac{1}{\sqrt{n}}+\frac{1}{n\epsilon}\sqrt{d\log(1/\delta)} \big)$ only with the gradient complexity $\O(n)$ for strongly smooth losses, which significantly reduces the computational complexity of the algorithm. Under a low-noise condition, Part (b) removes the term $\O\big(\frac{1}{\sqrt{n}}\big)$ and derives  the excess population risk bound of the order $\O\big( \frac{1}{n\epsilon}\sqrt{d\log(1/\delta)} \big)$, which only need the gradient complexity in the order of $\O(n)$. To the best of our knowledge, this is the first  excess population risk bound in the order of  $\O\big( \frac{1}{n\epsilon}\sqrt{d\log(1/\delta)} \big)$  for privacy-preserving pairwise learning.  
 \end{remark}
 The following theorem establishes the utility bounds for Algorithm~\ref{alg2} when the loss is non-smooth. 
\begin{theorem}[Utility guarantee for non-smooth losses]\label{thm:excess-nonsmooth-pair}
Suppose $f$ is nonnegative, convex, $G$-Lipschitz and $\alpha$-H\"older smooth with parameter $L$ and $\alpha\in[0,1)$. Let $\{\bw_t\}$ be produced by Algorithm~\ref{alg2} with $T$ iterations.  Then the following statements hold true. 
\begin{enumerate}[label=({\alph*})]
\item If $\alpha\ge 1/2,$  we choose     $\eta_t = c \min\Big\{\frac{1}{\sqrt{n}}, \frac{\epsilon}{\sqrt{d\log(1/\delta)}}\Big\}\le \min\{2/L,1\} $ for some constant $c>0$ and $T\asymp n$.  If $\alpha<1/2,$ we choose   $\eta_t=c \min\Big\{ n^{\frac{3(\alpha-1)}{2(1+\alpha)}}, \frac{\epsilon}{\sqrt{d\log(1/\delta)}} \Big\}\le \min\{2/L, 1\}$ for some constant $c>0$, and $T\asymp n^{\frac{2-\alpha}{1+\alpha}}$. Then  \begin{align*}
 \E_{S,\A}[  \bar{F}(\bw_{\priv})] -\bar{F}(\bw^{*})    = \O\Big( \frac{1}{\sqrt{n}} + \frac{\sqrt{d \log(1/\delta)}}{n\epsilon} \Big).
\end{align*}
\item  If $\bar{F}(\bw^{*})=0$, we choose $\eta_t = c \min\Big\{n^{\frac{\alpha^2+ 2\alpha-3}{2(1+\alpha)}}, \frac{n\epsilon}{T\sqrt{d\log(1/\delta)}}\Big\}\le \min\{2/L,1\} $ for some constant $c>0$  and $T\asymp n^{\frac{2}{1+\alpha}}$.  Then
\begin{align*}
 \E_{S,\A}[  \bar{F}(\bw_{\priv})] - \bar{F}(\bw^{*})    = \O\Big( \frac{1}{n^{\frac{1+\alpha}{2}}} + \frac{\sqrt{d \log(1/\delta)}}{n\epsilon} \Big).
\end{align*}
\end{enumerate}
\end{theorem}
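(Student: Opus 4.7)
My plan is to mimic the error decomposition \eqref{eq:error-decom} used for pointwise learning, namely
\[
\E_{S,\A}[\bar{F}(\bw_{\priv}) - \bar{F}(\bw^{*})] = \E_{S,\A}[\bar{F}(\bw_{\priv}) - \bar{F}_S(\bw_{\priv})] + \E_{S,\A}[\bar{F}_S(\bw_{\priv}) - \bar{F}_S(\bw^{*})],
\]
and control the generalization term with Lemma~\ref{lem:stability-pairwise} and the optimization term with Lemma~\ref{lem:opt-pairwise}. The noise prescription $\sigma^{2} = \Theta(G^{2} T \log(1/\delta)/(n^{2}\epsilon^{2}))$ from Algorithm~\ref{alg2} enters the optimization bound through the contribution $\E\|\bb_t\|_2^2 = d\sigma^2$ to the expected squared stochastic gradient. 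The whole scheme is parallel to that for the pointwise Theorem~\ref{thm:excess-nonsmooth-point}, the only genuine change being that pairwise stability of SGD incurs an extra $T^{(1-\alpha)/2}$-type factor coming from the $U$-statistic structure of $\bar{F}_S$.

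\textbf{Part (a).} Substituting the $\alpha$-H\"older stability bound and the noisy SGD optimization bound, and using the self-bounding property of nonnegative $\alpha$-H\"older smooth convex functions to turn $\|\partial f\|^2$ into a factor of $f$, yields an aggregate estimate of the shape
\[
\E_{S,\A}[\bar{F}(\bw_{\priv})] - \bar{F}(\bw^{*}) = \O\!\Bigl(\tfrac{1}{T\eta} + \eta + \tfrac{\eta\, T^{(1-\alpha)/2}}{n} + \tfrac{\eta T d \log(1/\delta)}{n^2\epsilon^2}\Bigr).
\]
When $\alpha \geq 1/2$ the choice $T\asymp n$ already makes the third term no larger than the first, and $\eta \asymp \min\{1/\sqrt{n},\epsilon/\sqrt{d\log(1/\delta)}\}$ balances the remaining terms to give the advertised rate. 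When $\alpha<1/2$, the stability term forces the larger iteration budget $T\asymp n^{(2-\alpha)/(1+\alpha)}$ so that $\eta T^{(1-\alpha)/2}/n$ matches the $1/(T\eta)$ term at the chosen $\eta$. In either regime the computation is essentially the pointwise computation with the pairwise stability bound substituted.

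\textbf{Part (b): low-noise exploitation.} When $\bar{F}(\bw^{*})=0$, I would invoke the sharper self-bounding inequality $\|\partial f(\bw;z,z')\|_2^{(1+\alpha)/\alpha} \lesssim L^{1/\alpha} f(\bw;z,z')$ to replace the crude bound $\|\partial f\|_2 \leq G$ inside Lemma~\ref{lem:opt-pairwise}. This produces an implicit inequality of the form
\[
\E[\bar{F}_S(\bw_{\priv})] \lesssim \tfrac{1}{T\eta} + \eta \,\E[\bar{F}_S(\bw_{\priv})]^{2\alpha/(1+\alpha)} + \eta d\sigma^2,
\]
whose solution scales as $(T\eta)^{-(1+\alpha)/2}$ rather than $(T\eta)^{-1/2}$. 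Combining this with the (also improved) low-noise pairwise stability bound and balancing against the Gaussian noise term with $T\asymp n^{2/(1+\alpha)}$ produces the advertised $\O(n^{-(1+\alpha)/2} + \sqrt{d\log(1/\delta)}/(n\epsilon))$ rate, and explains the prescription $\eta_t \asymp \min\{n^{(\alpha^2+2\alpha-3)/(2(1+\alpha))}, n\epsilon/(T\sqrt{d\log(1/\delta)})\}$.

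\textbf{Main obstacle.} I expect the main technical hurdle to lie in Part (b): propagating the self-bounding inequality through the SGD recursion is delicate because the Gaussian perturbation $\bb_t$ adds $d\sigma^2$ to the expected squared gradient norm but does not itself satisfy any self-bounding property, so the ``signal'' and ``noise'' contributions must be separated carefully before solving the implicit inequality in $\E[\bar{F}_S(\bw_{\priv})]$. A secondary subtlety is that the pairwise stability bound must be re-derived in a form that plays well with the H\"older self-bounding property, so that the three components of the error balance jointly rather than being tuned one at a time.
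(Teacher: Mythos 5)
Your high-level plan — error decomposition, stability for the generalization term, Lemma~\ref{lem:opt-pairwise} for the optimization term, balance $\eta,T$ — is the correct scaffold, and the paper does note that the pairwise argument is ``similar to'' the pointwise one. However, several concrete claims in the sketch are wrong and would have to be repaired before the calculation closes.

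The central misconception is the ``extra $T^{(1-\alpha)/2}$-type factor coming from the $U$-statistic structure.'' No such factor exists. Compare Lemma~\ref{lem:stability-point}(b) with Lemma~\ref{lem:stability-pairwise}(b): both have exactly the same functional form
\[
c_{\alpha,3}^2 e\sum_{j\le t}\eta_j^{2/(1-\alpha)} \;+\; \frac{Ce(1+ct/n)}{n}\sum_{j\le t}\eta_j^2\,\E\big[F_S^{2\alpha/(1+\alpha)}(\bw_j)\big],
\]
differing only in the multiplicative constants (a factor of roughly $2$ from the two possible positions of the replaced index, and $1+2t/n$ in place of $1+t/n$). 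The pairwise machinery changes the constants, not the rates, which is precisely why the paper can dispatch the pairwise proof by pointing back to Theorem~\ref{thm:excess-nonsmooth-point}. As a consequence, your ``aggregate estimate'' in part~(a) is not what the stability-plus-optimization combination actually produces. The real bound (cf.~\eqref{eq:excess-nonsmooth-5} in the pointwise proof) involves the free parameter $\gamma$ from Lemma~\ref{lem:gen-pair}, appearing as $\gamma^{(1+\alpha)/(\alpha-1)}$, $\gamma T\eta^{2/(1-\alpha)}$, and $\gamma^{-1}(\cdots)^{2\alpha/(1+\alpha)}$ terms that must be balanced jointly with $\eta$ and $T$. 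Omitting $\gamma$ makes the balancing opaque: the choices $\gamma=\sqrt{n}$ (part~a) and $\gamma=n^{(1-\alpha)/2}$ (part~b) are essential, not incidental. The larger iteration budget for $\alpha<1/2$ in part~(a) comes from controlling the $\gamma T\eta^{2/(1-\alpha)}$ term originating in the non-expansiveness lemma — again identical to the pointwise mechanism, not from any pairwise-specific stability degradation.

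For part~(b), the self-bounding property $\|\partial f\|_2\le c_{\alpha,1} f^{\alpha/(1+\alpha)}$ is already built into Lemma~\ref{lem:opt-pairwise} (it is what produces the $\sum\eta_j^{(3-\alpha)/(1-\alpha)}$ term there), so you do not redo the optimization analysis by swapping in a ``sharper self-bounding inequality.'' The implicit inequality the paper actually solves is in $\delta_{t+1}:=\max\{\E[F(\bw_{t+1})]-\E[F_S(\bw_{t+1})],0\}$, i.e.\ the positive part of the generalization gap, which becomes sublinear ($\delta_{t+1}^{2\alpha/(1+\alpha)}$) on its own right-hand side through the concavity of $x\mapsto x^{2\alpha/(1+\alpha)}$. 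The signal/noise separation you flag as a ``main obstacle'' is handled already inside Theorem~\ref{thm:opt-point}/Lemma~\ref{lem:opt-pairwise} by $\E\|\bb_j\|_2^2=\sigma^2 d$ and $\E\langle\bw^*-\bw_j,\bb_j\rangle=0$, so it is not where the difficulty lies; the difficulty is in the joint balancing of $\gamma$, $\eta$, and $T$ once both error contributions are in hand.
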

\begin{remark}
Part (a) in the above theorem shows that the optimal rate $\O\big(\frac{1}{\sqrt{n}} + \frac{1}{n\epsilon}\sqrt{d\log(1/\delta)} \big)$ can be achieved with the same gradient complexity $T\asymp n $ if $\alpha\ge 1/2$. For the case $\alpha<1/2$, the same rate can be also achieved with a larger gradient complexity $\O\big(n^{\frac{2-\alpha}{1+\alpha}}\big)$.    
For non-smooth losses (i.e., $\alpha=0$), \cite{yang2021simple} established  the optimal excess population risk rate  for localized DP-SGD algorithm with gradient complexity $\O\big(n^2\log(1/\delta )\big)$ for Lipschitz continuity losses. Under the same assumptions, Part (a)   with $\alpha=0$ implies  that the optimal rate can be achieved with gradient complexity $\O( n^2 )$. Our result reduces the computational cost by a factor of $\O\big(\log(1/\delta)\big)$ in this case.   Part (b) establishes the first excess population risk bounds better than $\O\big(\frac{1}{\sqrt{n}} + \frac{1}{n\epsilon}\sqrt{d\log(1/\delta)} \big)$ in the case with low-noise for privacy-preserving pairwise learning. 
\end{remark}

\section{Proofs of Main Results}\label{sec:proof}
Before presenting the detailed proof, we first introduce  some definitions and useful lemmas. 
To establish   tighter privacy analysis of DP-SGD, we introduce the definition of R\'{e}nyi differential privacy (RDP) which provides   tighter composition and amplification results for   iterative algorithms.   
\begin{definition}[RDP \cite{mironov2017renyi}]\label{def:RDP}
For $\lambda > 1$, $\rho > 0$, a randomized mechanism $\A$ satisfies $(\lambda, \rho)$-RDP, if,  for all neighboring datasets $S$ and $S'$, we have 
     $$ D_{\lambda}\big(\A(S)\parallel \A(S')\big):= \frac{1}{\lambda-1}\log \int  \Big( \frac{ P_{\A(S)}(\theta) }{ P_{\A(S')}(\theta) }  \Big)^\lambda    d P_{\A(S')}(\theta) \le \rho,$$
    where $P_{\A(S)}(\theta)$ and $P_{\A(S')}(\theta)$ are the density of $\A(S) $ and $\A(S')$, respectively. 
\end{definition}

 The following lemma shows the privacy amplification of RDP by uniform subsampling, which is fundamental to establish privacy guarantees of noisy SGD algorithms. 
\begin{lemma}[\cite{liang2020exploring}]\label{lem:uniform-rdp} Consider a function $\mathcal{M}:  \Z^n\rightarrow \W$ with the $\ell_2$-sensitivity $\Delta$,  and a dataset $S\subset\Z^n$.  
{The Gaussian mechanism $\mathcal{G}(S,\sigma)=\mathcal{M}(S)+\bb$, where $\bb\sim \mathcal{N}(0,\sigma^2\mathbf{I}_d)$, } applied to  a subset of samples that are drawn uniformly without replacement with subsampling rate $p$ satisfies 
$(\lambda,3.5p^2\lambda \Delta^2/\sigma^2)$-RDP  if $\sigma^2\geq 0.67 \Delta^2$ and $\lambda-1 \leq \frac{2\sigma^2}{3\Delta^2} \log \big(\frac{1}{\lambda p (1+ \sigma^2/\Delta^2)} \big)$. 
\end{lemma}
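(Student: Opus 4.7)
The plan is to establish this RDP amplification result by reducing to the canonical comparison between a pure Gaussian and a two-component Gaussian mixture, and then bounding the Rényi divergence through a carefully truncated moment expansion. First, I would fix two neighboring datasets $S, S'$ differing at a single record $z^*$. Conditioning on the subsampling event, the output of $\mathcal{G}$ on $S$ is distributed as $P_0=\mathcal{N}(\mu,\sigma^2\mathbf{I}_d)$ for some fixed $\mu$, while the output on $S'$ is the mixture $(1-p)P_0 + pP_1$ where $P_1=\mathcal{N}(\mu',\sigma^2\mathbf{I}_d)$ with $\|\mu-\mu'\|_2\le \Delta$ by the sensitivity hypothesis. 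By shift- and rotation-invariance of Rényi divergence, I may take $\mu=\mathbf{0}$ and $\mu'=\Delta e_1$, which reduces the problem to a one-dimensional computation.

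Next, I would compute $D_\lambda((1-p)P_0+pP_1 \,\|\, P_0)$, since the other direction is bounded similarly and the maximum governs the RDP bound. Writing the likelihood ratio as $1-p+pR$ with $R=P_1/P_0$, and applying the binomial theorem for integer $\lambda$,
\begin{equation*}
\int\!\bigl((1-p)+pR\bigr)^\lambda dP_0 \;=\; \sum_{k=0}^{\lambda}\binom{\lambda}{k}(1-p)^{\lambda-k}p^{k}\,\E_{P_0}[R^k].
\end{equation*}
A direct Gaussian moment-generating-function calculation gives $\E_{P_0}[R^k]=\exp\!\bigl(k(k-1)\Delta^2/(2\sigma^2)\bigr)$. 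The $k=0$ and $k=1$ terms combine to exactly $1$, so the Rényi moment equals $1+\sum_{k=2}^{\lambda}\binom{\lambda}{k}(1-p)^{\lambda-k}p^{k}\bigl(e^{k(k-1)\Delta^2/(2\sigma^2)}-1\bigr)$, whose leading $k=2$ term is the target $\binom{\lambda}{2}p^2 \cdot \Delta^2/\sigma^2$ (after using $e^x-1\le 2x$ for the permitted range of $x$). For non-integer $\lambda$ I would either invoke monotonicity in the order and pass to $\lceil\lambda\rceil$, or apply the generalized binomial series with a remainder estimate.

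The technical heart is the tail bound on $k\ge 3$. Here I would exploit the two standing conditions: $\sigma^2\ge 0.67\,\Delta^2$ ensures that ratios between consecutive terms of the series are controlled by a geometric factor depending on $p(1+\sigma^2/\Delta^2)$, and the hypothesis $\lambda-1\le \tfrac{2\sigma^2}{3\Delta^2}\log\!\bigl(1/[\lambda p(1+\sigma^2/\Delta^2)]\bigr)$ is precisely what is needed so that the truncated geometric tail is dominated by a constant multiple of the $k=2$ term. Putting these together gives $\exp((\lambda-1)D_\lambda)\le 1+3.5\,\lambda(\lambda-1)p^2\Delta^2/\sigma^2$; then $\log(1+x)\le x$ yields $D_\lambda\le 3.5\,p^2\lambda\Delta^2/\sigma^2$, which is the claim.

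The main obstacle will be the tail control: turning the raw series $\sum_{k\ge 3}\binom{\lambda}{k}p^{k}(e^{k(k-1)\Delta^2/(2\sigma^2)}-1)$ into something cleanly bounded by $c\cdot\binom{\lambda}{2}p^2\Delta^2/\sigma^2$, while tracking constants carefully enough to recover the explicit $3.5$ and $0.67$, requires the same kind of careful moment accounting developed by Mironov and by Wang--Balle--Kasiviswanathan for the analytic moments accountant. I would model the bookkeeping on that line of arguments, splitting the sum at $k\asymp \sigma^2/\Delta^2$ and bounding the two pieces separately: the low-$k$ part by Taylor expansion of the exponential, and the high-$k$ part by the geometric decay enforced by the second hypothesis.
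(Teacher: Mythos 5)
The paper does not prove this lemma: it is imported verbatim from \cite{liang2020exploring} (which in turn follows the moments-accountant/Rényi-DP amplification line of Mironov and Wang--Balle--Kasiviswanathan). So there is no in-paper proof to compare against, and your sketch should be judged against that reference.

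Your outline is the correct strategy and reproduces the main moves of that literature: reduce by translation and rotation invariance to a one-dimensional comparison, expand $\exp((\lambda-1)D_\lambda)$ via a (generalized) binomial series in the sampling rate, compute the Gaussian moments $\E_{P_0}[R^k]=\exp\bigl(k(k-1)\Delta^2/(2\sigma^2)\bigr)$, isolate the quadratic $k=2$ term, and control the $k\ge 3$ tail using the two hypotheses. Two points deserve care. First, the inequality $e^x-1\le 2x$ does \emph{not} hold over the whole range permitted by $\sigma^2\ge 0.67\Delta^2$ (which allows $x=\Delta^2/\sigma^2$ up to about $1.49$, beyond where $e^x-1\le 2x$ fails); the constant $3.5$ in the statement absorbs both this slack and the tail, so the precise bookkeeping that you defer is in fact where the numerical constants come from, and one cannot shortcut it with a clean Taylor inequality. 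Second, the reduction to $D_\lambda\bigl((1-p)P_0+pP_1\,\|\,P_0\bigr)$ is the form for the add/remove neighboring relation; under the replace-one relation used in Definition~\ref{def:DP} both subsampled outputs are mixtures of the form $(1-p)Q+p\mathcal{N}(\mu,\sigma^2\mathbf{I}_d)$ and $(1-p)Q+p\mathcal{N}(\mu',\sigma^2\mathbf{I}_d)$, and one needs the corresponding symmetric comparison (joint-convexity/quasi-convexity of Rényi divergence over the common component) to recover the clean one-sided mixture bound. The cited reference handles both of these, so your plan is sound in spirit but its two ``I would handle it like the literature'' steps are exactly where the content lies.
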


 We say a sequence of mechanisms $(\A_1,\ldots,\A_k)$ are chosen adaptively if $\A_i$ can be chosen based on the outputs of the previous mechanisms  $\A_1(S),\ldots,\A_{i-1}(S)$ for any $i\in[k]$.
\begin{lemma}[Adaptive Composition of RDP \cite{mironov2017renyi}]\label{lem:composition_RDP}
If a mechanism $\A$ consists of a sequence of  adaptive mechanisms $(\A_1,\ldots,\A_k)$ with $\A_i$ satisfying  $(\lambda, \rho_i)$-RDP, $i\in[k]$, then $\A$ satisfies $(\lambda, \sum_{i=1}^k \rho_i)$-RDP. 
\end{lemma}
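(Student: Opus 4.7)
The plan is to prove the claim by induction on $k$, using the chain-rule factorization of the joint output distribution together with the multiplicative form of the R\'enyi divergence. Fix neighboring datasets $S, S'$ and let $p_S, p_{S'}$ denote the densities of $\A(S)$ and $\A(S')$ on $\W^k$. Because the mechanisms are composed adaptively, each $\A_i$ is a (possibly randomized) map whose behavior is selected by the realized history $\theta_1,\ldots,\theta_{i-1}$, so the joint density factors as
$$p_S(\theta_1,\ldots,\theta_k) = \prod_{i=1}^k p_S^{(i)}\bigl(\theta_i \mid \theta_1,\ldots,\theta_{i-1}\bigr),$$
with the analogous factorization for $p_{S'}$. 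The hypothesis that $\A_i$ is $(\lambda,\rho_i)$-RDP is to be read as saying that for every fixed history $\theta_{1:i-1}$, the conditional distributions satisfy $D_\lambda\bigl(p_S^{(i)}(\cdot \mid \theta_{1:i-1}) \,\|\, p_{S'}^{(i)}(\cdot \mid \theta_{1:i-1})\bigr) \le \rho_i$.

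The key step is to peel off the last coordinate. Writing the R\'enyi moment in integral form,
$$\exp\bigl((\lambda-1) D_\lambda(\A(S)\|\A(S'))\bigr) = \int \Bigl(\tfrac{p_S(\theta_{1:k})}{p_{S'}(\theta_{1:k})}\Bigr)^\lambda p_{S'}(\theta_{1:k})\, d\theta_{1:k},$$
I would split the ratio into the product of the first $k-1$ coordinates and the $k$th conditional, and then carry out the $\theta_k$ integral first against $p_{S'}^{(k)}(\cdot\mid\theta_{1:k-1})$. That inner integral equals $\exp\bigl((\lambda-1) D_\lambda(p_S^{(k)}(\cdot\mid\theta_{1:k-1})\|p_{S'}^{(k)}(\cdot\mid\theta_{1:k-1}))\bigr)$, which is bounded by $e^{(\lambda-1)\rho_k}$ by the RDP hypothesis on $\A_k$. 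Since this bound is independent of $\theta_{1:k-1}$, it can be pulled out of the remaining integral, yielding the one-step recursion
$$D_\lambda(\A(S)\|\A(S')) \le \rho_k + D_\lambda\bigl((\A_1,\ldots,\A_{k-1})(S) \,\|\, (\A_1,\ldots,\A_{k-1})(S')\bigr).$$
The claim then follows by induction on $k$, with the base case $k=1$ being exactly the assumption.

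The main obstacle is conceptual rather than computational: one must interpret the ``adaptive'' assumption correctly. The RDP guarantee for $\A_i$ has to hold \emph{uniformly} over every history $\theta_{1:i-1}$ that the earlier mechanisms could have produced, since it is precisely this uniformity that makes the bound on the inner integral a constant free of $\theta_{1:k-1}$ and therefore extractable from the outer integral. Without this uniform reading, the factorization would collapse and the induction would fail. Once this interpretation is in place, the remaining work is routine bookkeeping with the definition of R\'enyi divergence.
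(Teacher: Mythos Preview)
Your proof is correct and follows the standard argument for R\'enyi composition: factor the joint density via the chain rule, integrate out the last coordinate using the uniform-in-history RDP bound on $\A_k$, and induct. Note, however, that the paper does not supply its own proof of this lemma; it is quoted directly from \cite{mironov2017renyi} as a known tool, so there is no in-paper argument to compare against. Your write-up is essentially the proof given in that reference.
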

 
 The relationship between RDP and $(\epsilon,\delta)$-DP is given as follows. 
\begin{lemma}[From RDP to $(\epsilon,\delta)$-DP \cite{mironov2017renyi}]\label{lemma:RDP_to_DP}
	If a randomized mechanism $\mathcal{A}$ satisfies $(\lambda,\rho)$-RDP, then $\mathcal{A}$ satisfies $(\rho+\log(1/\delta)/(\lambda-1),\delta)$-DP for all $\delta\in(0,1)$.
\end{lemma}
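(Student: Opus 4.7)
The plan is to reduce the Rényi-divergence bound to an $(\epsilon,\delta)$-DP guarantee through a standard tail-bound argument on the \emph{privacy loss random variable}. Fix two neighboring datasets $S,S'$ and let $P(\theta)$ and $Q(\theta)$ denote the densities of $\A(S)$ and $\A(S')$ respectively. Define the privacy loss $Z(\theta)=\log\bigl(P(\theta)/Q(\theta)\bigr)$. The definition of $(\epsilon,\delta)$-DP can be verified by splitting, for any measurable event $E$,
\[
\Pr\bigl(\A(S)\in E\bigr)=\Pr_{\theta\sim P}(\theta\in E,\,Z(\theta)\le \epsilon)+\Pr_{\theta\sim P}(\theta\in E,\,Z(\theta)>\epsilon),
\]
because on the first event we can directly convert $P$ to $Q$ paying an $e^\epsilon$ factor, and the second event will be absorbed into $\delta$.

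First I would rewrite the RDP hypothesis in a form amenable to a Chernoff/Markov bound. By Definition~\ref{def:RDP}, the $(\lambda,\rho)$-RDP assumption is equivalent to
\[
\E_{\theta\sim Q}\!\left[\bigl(P(\theta)/Q(\theta)\bigr)^{\lambda}\right]=\E_{\theta\sim P}\!\left[e^{(\lambda-1)Z(\theta)}\right]\le e^{(\lambda-1)\rho}.
\]
Applying Markov's inequality to the nonnegative random variable $e^{(\lambda-1)Z}$ under $\theta\sim P$ then yields, for any threshold $t$,
\[
\Pr_{\theta\sim P}\bigl(Z(\theta)>t\bigr)\le e^{-(\lambda-1)t}\,\E_{\theta\sim P}\!\left[e^{(\lambda-1)Z(\theta)}\right]\le e^{(\lambda-1)(\rho-t)}.
\]
Setting $t=\epsilon:=\rho+\log(1/\delta)/(\lambda-1)$ makes the right-hand side exactly $\delta$, which will cover the ``bad'' part of the decomposition.

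Second I would handle the ``good'' part. On the set $\{Z(\theta)\le\epsilon\}$, by definition of $Z$ we have the pointwise bound $P(\theta)\le e^{\epsilon}Q(\theta)$, so
\[
\Pr_{\theta\sim P}\bigl(\theta\in E,\,Z(\theta)\le\epsilon\bigr)=\int_{E\cap\{Z\le\epsilon\}} P(\theta)\,d\theta\le e^{\epsilon}\int_{E\cap\{Z\le\epsilon\}} Q(\theta)\,d\theta\le e^{\epsilon}\Pr\bigl(\A(S')\in E\bigr).
\]
Combining this with the tail bound $\Pr_{\theta\sim P}(Z>\epsilon)\le\delta$ from the previous step gives $\Pr(\A(S)\in E)\le e^{\epsilon}\Pr(\A(S')\in E)+\delta$, which is exactly the $(\epsilon,\delta)$-DP condition for the chosen $\epsilon$. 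Since $S,S'$ and $E$ were arbitrary, this finishes the argument.

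The main conceptual step, rather than an obstacle, is the choice of the threshold $t=\rho+\log(1/\delta)/(\lambda-1)$: this is precisely the value that equates the Markov tail $e^{(\lambda-1)(\rho-t)}$ with $\delta$, so it is forced. A minor technical point is ensuring $P$ is absolutely continuous with respect to $Q$ so that $Z$ is well-defined almost surely under $P$; this is implicit in the RDP hypothesis because $D_\lambda(P\|Q)<\infty$ already rules out the singular case. No further subtlety is needed, and the proof works for any $\lambda>1$ and $\delta\in(0,1)$.
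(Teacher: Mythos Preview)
Your argument is correct and is precisely the standard privacy-loss tail-bound proof from \cite{mironov2017renyi}. Note, however, that the paper does not supply its own proof of this lemma: it is simply quoted as a known result from Mironov, so there is nothing to compare against beyond observing that your derivation matches the original source.
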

A fundamental property of DP called post-processing property is introduced as follows. It implies that a differentially private output can be arbitrarily transformed by using some data-independent functions.  
\begin{lemma}[Post-processing \cite{mironov2017renyi}]\label{lemma:post-processing}
Let  $\A: \Z^n \rightarrow \W_1 $  satisfy $(\lambda, \rho)$-RDP  and $f: \W_1 \rightarrow \W_2$ be an arbitrary function. Then $f \circ \A : \Z^n \rightarrow \W_2$ satisfies $(\lambda, \rho)$-RDP.  
\end{lemma}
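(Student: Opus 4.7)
The plan is to reduce the claim to the data processing inequality for the R\'enyi divergence: applying a (possibly randomized) function $f$ to the outputs cannot increase $D_\lambda$. Once this is in hand, the conclusion is immediate, since by Definition~\ref{def:RDP} applied to $\A$ we have $D_\lambda(\A(S)\|\A(S')) \le \rho$ for every pair of neighboring datasets $S,S'$, and then the same inequality with $(f\circ\A)(S), (f\circ\A)(S')$ in place of $\A(S),\A(S')$ gives the $(\lambda,\rho)$-RDP of $f\circ\A$.

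I would first handle the case where $f$ is a deterministic measurable map. Write $P,Q$ for the laws of $\A(S),\A(S')$ on $\W_1$ and $P',Q'$ for their pushforwards $f_\# P, f_\# Q$ on $\W_2$. By the disintegration/change-of-variables formula, the Radon--Nikodym derivative satisfies $\tfrac{dP'}{dQ'}(\theta') = \mathbb{E}_Q\!\bigl[\tfrac{dP}{dQ} \,\big|\, f(\Theta)=\theta'\bigr]$ for $Q'$-almost every $\theta'$. Since $x\mapsto x^\lambda$ is convex on $[0,\infty)$ for $\lambda>1$, conditional Jensen yields
\[
\Bigl(\mathbb{E}_Q\bigl[\tfrac{dP}{dQ} \,\big|\, f\bigr]\Bigr)^{\lambda}
\;\le\; \mathbb{E}_Q\bigl[(\tfrac{dP}{dQ})^{\lambda} \,\big|\, f\bigr].
\]
Integrating against $Q'$ and applying the tower property gives $\int (dP'/dQ')^{\lambda}\,dQ' \le \int (dP/dQ)^{\lambda}\,dQ$. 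Taking $\tfrac{1}{\lambda-1}\log$ of both sides produces $D_\lambda(P'\|Q') \le D_\lambda(P\|Q)$, which is exactly what is needed.

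For a randomized $f$, I would represent it in the standard way as $f(\theta)=g(\theta,U)$ for a deterministic measurable $g$ and an independent random seed $U$ whose law does not depend on the dataset. The joint laws of $(\A(S),U)$ and $(\A(S'),U)$ are product measures in the $U$-coordinate with identical marginals there, so by the tensorization/chain rule for R\'enyi divergence the joint divergence equals $D_\lambda(\A(S)\|\A(S'))$. Applying the deterministic data processing inequality to $g$ then bounds $D_\lambda(f(\A(S))\|f(\A(S')))$ by $D_\lambda(\A(S)\|\A(S'))\le\rho$, completing the proof.

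The main (and fairly mild) obstacle is the measure-theoretic bookkeeping: justifying the conditional Jensen step and the disintegration identity $dP'/dQ' = \mathbb{E}_Q[dP/dQ\mid f]$ in full generality (arbitrary Polish output spaces, $Q\not\ll P$ issues handled by restricting to the support of $P$). Everything else is routine, and no structure of the mechanism $\A$ beyond its output distributions is used.
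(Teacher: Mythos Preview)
Your argument is correct and is the standard proof of the data-processing inequality for R\'enyi divergence (conditional Jensen for the deterministic case, then independent auxiliary randomness for the randomized case). The paper itself does not prove this lemma at all---it simply quotes it from \cite{mironov2017renyi}---so there is nothing to compare beyond noting that your proof is precisely the one given in that reference.
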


Let   $M=\sup_{z\in\Z}f(0;z)$.  Define
\begin{equation}\label{eq:c_1}
  c_{\ga,1}=\begin{cases}
     (1+1/\ga)^{\frac{\ga}{1+\ga}}L^{\frac{1}{1+\ga}}, & \mbox{if } \ga> 0, \\
                 M+L, & \mbox{if } \ga=0.
               \end{cases}
\end{equation} 
Our analysis requires to use a self-bounding property \cite{srebro2010smoothness,ying2017unregularized} for strongly smooth and $\alpha$-H\"older smooth losses, which means that gradients can be controlled by function values. 
\begin{lemma}[Self-bounding property]\label{lem:self-bounding} Suppose $f$ is nonnegative.  If $f$ is $L$-strongly smooth, then there holds $ \|\partial f(\bw;z)\|_2 \le \sqrt{ 2L f(\bw;z)}   $ \  for any  $ \bw\in\R^d, z\in \Z$. If $f$ is  $\alpha$-H\"older smooth with $L>0$ and $\alpha\in[0,1)$, then for $c_{\alpha,1}$ defined in \eqref{eq:c_1} we have
$ \|\partial f(\bw;z)\|_2 \le c_{\alpha,1} f^{\frac{\alpha}{1+\alpha}}(\bw;z) $  for any  $ \bw\in\R^d, z\in \Z.$ 
\end{lemma}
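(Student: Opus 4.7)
The plan is to split the statement into its two parts and within the second part further separate the regimes $\alpha\in(0,1)$ and $\alpha=0$, since the clean optimization trick used for $\alpha>0$ degenerates at $\alpha=0$.

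For the strongly smooth part, I would apply the definition of $L$-strong smoothness to the test point $w' = w - \frac{1}{L}\partial f(w;z)$. A direct calculation gives
\[
f\!\bigl(w - \tfrac{1}{L}\partial f(w;z);z\bigr) \le f(w;z) - \tfrac{1}{2L}\|\partial f(w;z)\|_2^{2}.
\]
Combining this with the hypothesis $f\ge 0$ yields $\|\partial f(w;z)\|_2^2 \le 2L f(w;z)$, which is the claim.

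For the $\alpha$-H\"older smooth part with $\alpha\in(0,1)$, the first step is to derive a ``generalized descent inequality''. Applying the fundamental theorem of calculus along the segment between $w$ and $w'$ and using Cauchy--Schwarz together with the H\"older bound $\|\partial f(u)-\partial f(w)\|_2\le L\|u-w\|_2^\alpha$, one obtains
\[
f(w';z) \le f(w;z) + \langle \partial f(w;z), w'-w\rangle + \frac{L}{1+\alpha}\|w'-w\|_2^{1+\alpha}.
\]
I would then substitute $w' = w - \eta\,\partial f(w;z)$ for a free parameter $\eta>0$ and invoke $f\ge 0$ to get
\[
\eta\|g\|_2^{2} - \frac{L\eta^{1+\alpha}}{1+\alpha}\|g\|_2^{1+\alpha} \le f(w;z),\qquad g:=\partial f(w;z).
\]
The left-hand side is concave in $\eta$ and maximized at $\eta^\ast = (\|g\|_2^{1-\alpha}/L)^{1/\alpha}$, at which point it equals $\tfrac{\alpha}{1+\alpha}\eta^\ast\|g\|_2^2$. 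Substituting back and rearranging gives exactly $\|g\|_2 \le (1+1/\alpha)^{\alpha/(1+\alpha)} L^{1/(1+\alpha)} f(w;z)^{\alpha/(1+\alpha)} = c_{\alpha,1}\,f(w;z)^{\alpha/(1+\alpha)}$.

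For the boundary case $\alpha=0$, the goal $\|\partial f(w;z)\|_2\le M+L$ is a uniform bound. The plan is to first use the triangle inequality together with the $0$-H\"older condition to reduce to bounding $\|\partial f(0;z)\|_2$, namely $\|\partial f(w;z)\|_2 \le \|\partial f(0;z)\|_2 + L$, and then to control $\|\partial f(0;z)\|_2$ using convexity of $f$, non-negativity, and the definition $M=\sup_z f(0;z)$. The main obstacle is precisely this $\alpha=0$ case: the optimization-in-$\eta$ device used for $\alpha\in(0,1)$ collapses (the exponent on $\eta$ equals one in both terms), so a distinct argument exploiting convexity and the uniform envelope $f(0;z)\le M$ is needed to recover the stated constant $c_{0,1}=M+L$. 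The two smooth parts are, by contrast, essentially mechanical once the generalized descent inequality is in hand.
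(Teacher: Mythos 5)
The paper does not give a proof for this lemma---it cites \cite{srebro2010smoothness,ying2017unregularized}---so there is no in-paper argument to compare against; I evaluate your proposal on its own terms.

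Your strongly smooth and $\alpha\in(0,1)$ arguments are correct and are the standard derivations: substitute $w'=w-\eta\,\partial f(w;z)$ into the descent inequality
\[
f(w';z)\le f(w;z)+\langle \partial f(w;z),\,w'-w\rangle+\tfrac{L}{1+\alpha}\|w'-w\|_2^{1+\alpha},
\]
use $f\ge 0$, and for $\alpha>0$ optimize over $\eta$; the maximizer $\eta^{\ast}=(\|g\|_2^{1-\alpha}/L)^{1/\alpha}$ with $g=\partial f(w;z)$ gives exactly $\|g\|_2\le (1+1/\alpha)^{\alpha/(1+\alpha)}L^{1/(1+\alpha)}f^{\alpha/(1+\alpha)}(w;z)=c_{\alpha,1}f^{\alpha/(1+\alpha)}(w;z)$.

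Your plan for $\alpha=0$ has a genuine gap. The sub-goal $\|\partial f(0;z)\|_2\le M$ is false in general: take $f(w;z)=|w|$ on $\mathbb{R}$, where $M=\sup_z f(0;z)=0$ while a nonzero choice of subgradient at $0$ violates the bound. What the descent inequality at $0$ actually yields is $\|\partial f(0;z)\|_2\le f(0;z)+L\le M+L$, and pushing that through the triangle-inequality decomposition gives $\|\partial f(w;z)\|_2\le M+2L$, not $M+L$. More importantly, the ``collapse'' you worry about is illusory: for $\alpha=0$ the same chain reads
\[
0\le f(w-\eta g;z)\le f(w;z)-\eta\|g\|_2^2+L\eta\|g\|_2,
\]
so that $\eta\,\|g\|_2\bigl(\|g\|_2-L\bigr)\le f(w;z)$ for every $\eta>0$. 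The left side is unbounded in $\eta$ whenever $\|g\|_2>L$, forcing $\|g\|_2\le L$, which (since $M\ge 0$) already implies the stated $\|g\|_2\le M+L$. So the $\alpha=0$ case falls out of the very same device you used for $\alpha\in(0,1)$; the separate convexity-and-envelope argument you anticipate is both unnecessary and, as planned via $\|\partial f(0;z)\|_2\le M$, incorrect.
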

 
We will use the following concept of on-average argument stability to study the generalization error. 
\begin{definition}[On-average argument stability~\cite{lei2020fine}]\label{def:avg-stability}
Let $S=\{z_1,\ldots,z_n\}$ and $S'=\{z_1',\ldots,z_n'\}$ be drawn independently from $\rho$. For any $i\in[n]$, denote $S^{(i)}=\{ z_1,\ldots,z_{i-1},z_i',z_{i+1},\ldots,z_n \}$ as the set from $S$ by replacing the $i$-th element with $z_i'$. We say an algorithm $\A$ is on-average argument $\epsilon$-stable if 
\[ \E_{S,S',\A}\Big[ \frac{1}{n}\sum_{i=1}^n \|\A(S)-\A(S^{(i)})\|^2_2 \Big]\le \epsilon. \]
\end{definition}

\subsection{Proofs for Pointwise Learning}\label{appendix-pointwise}
We first give the proof of the privacy guarantee for Algorithm~\ref{alg1}. Specifically, according to the Lipschitz continuity of $f$, we can show that the $\ell_2$-sensitivity of $\mathcal{M}_t=\partial f(\bw_t;z_{i_t})$ is $2G$. 
Then by Lemma~\ref{lem:gaussian-noise} and the post-processing property, we know that $\bw_{t+1}$ is $\Big(\frac{\log(1/\delta)}{(1-\beta)\epsilon}+1,\frac{\beta\epsilon}{T}\Big)$-RDP for any $t=1,\ldots, T$.  
Further, we  use the adaptive composition theorem (Lemma~
\ref{lem:composition_RDP}) and the connection between RDP and DP (Lemma~\ref{lemma:RDP_to_DP})  to show that $\bw_{\priv}$ satisfies $(\epsilon,\delta)$-DP. 
The detailed proof is shown as follows. 
\begin{proof}[Proof of Theorem~\ref{thm:dp-private}]
 
For each iteration $t$, 
let $\mathcal{A}_t=\mathcal{M}_t+\bb_t$, where $\mathcal{M}_t=\partial f(\bw_t;z_{i_t})$. For any $\bw_t\in\W$ and any $z_{i_t}, z'_{i_t} \in \Z$, the Lipschitz continuity of $f$ implies
\[ \|\partial f(\bw_t;z_{i_t}) - \partial f(\bw_t;z'_{i_t})\|_2\le  \|\partial f(\bw_t;z_{i_t})\|_2 + \|\partial f(\bw_t;z'_{i_t})\|_2\le 2G. \]
From the definition of sensitivity (see Definition~\ref{def:sensitivity}), we know the $\ell_2$-sensitivity of $\mathcal{M}_t$ is bounded by $2G$. Note that
\[ \sigma^2=\frac{14G^2T}{\beta n^2 \epsilon}\Big( \frac{\log(1/\delta)}{(1-\beta)\epsilon}+1 \Big). \]
According to Lemma~\ref{lem:uniform-rdp} with $p=1/n$, we know $\mathcal{\A}_t$ is $\Big(\lambda, \frac{\lambda \beta \epsilon}{T\big( \frac{\log(1/\delta)}{(1-\beta)\epsilon} +1\big)} \Big)$-RDP as long as $\sigma^2\ge 2.68G^2$ and $\lambda-1\le \frac{\sigma^2}{6G^2}\log\Big( \frac{n}{\lambda\big(1+\frac{\sigma^2}{4G^2}\big)} \Big)$ hold. 

Let $\lambda=\frac{\log(1/\delta)}{(1-\beta)\epsilon}+1$, then we get $\A_t$ is $\Big(\frac{\log(1/\delta)}{(1-\beta)\epsilon}+1,\frac{\beta\epsilon}{T}\Big)$-RDP. Further, Lemma~\ref{lemma:post-processing} implies that $\bw_{t+1}$ is $\Big(\frac{\log(1/\delta)}{(1-\beta)\epsilon}+1,\frac{\beta\epsilon}{T}\Big)$-RDP for any $t=1,\ldots, T$. According to the adaptive composition theorem of RDP (see Lemma~\ref{lem:composition_RDP}), we know  Algorithm~\ref{alg1} is $\Big(\frac{\log(1/\delta)}{(1-\beta)\epsilon}+1, \beta\epsilon \Big)$-RDP. Finally, the relationship between RDP and DP (Lemma~\ref{lemma:RDP_to_DP}) implies that  Algorithm~\ref{alg1} is $(\epsilon,\delta)$-DP if $\sigma^2\ge 2.68G^2$ and $\lambda-1\le \frac{\sigma^2}{6G^2}\log\Big( \frac{n}{\lambda\big(1+\frac{\sigma^2}{4G^2}\big)} \Big)$ hold. The proof is completed. 
\end{proof}

\bigskip

To study the utility guarantee of Algorithm~\ref{alg1}, we need to estimate the generalization error $\E_{S,\A}[F(\bw_{\priv})-F_S(\bw_{\priv})]$ and the optimization error $\E_{S,\A}[F_S(\bw_{\priv})-F(\bw^{*})]$, respectively. We will use on-average argument stability to study the generalization
error, which measures the sensitivity of the output model of an algorithm.  
The relationship  between  generalization error and  on-average argument stability is established in the following lemma  \cite{lei2020fine}. 

\begin{lemma}[Generalization via on-average stability] \label{lem:gen}
Let $\A$ be on-average $\nu$-stable. Let $\gamma>0$. 
\begin{enumerate}[label=({\alph*})]
\item   If $f$ is nonnegative and $L$-smooth, then
\[ \E_{S,\A}[F(\A(S))-F_S(\A(S))]\le \frac{L}{\gamma}\E_{S,\A}[ F_S(\A(S)) ] + \frac{(L+\gamma)\nu}{2 }.
\]
\item   If $f$ is nonnegative, convex and $\alpha$-H\"older smooth with parameter $L$ and $\alpha\in[0,1)$, then  
\[ \E_{S,\A}[F(\A(S))-F_S(\A(S))]\le  \frac{c^2_{\alpha,1}}{2\gamma} \E_{S,\A}[ F^{\frac{2\alpha}{1+\alpha}}(\A(S)) ] + \frac{\gamma \nu}{2 } . \]
\end{enumerate}
\end{lemma}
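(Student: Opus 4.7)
The plan is to combine the standard symmetrization identity with either strong smoothness or H\"older smoothness of $f$, Young's inequality, and the self-bounding property in Lemma~\ref{lem:self-bounding}, isolating a loss-dependent term and a stability-dependent term. The starting point in both parts is the identity
\begin{equation*}
\E_{S,\A}[F(\A(S))-F_S(\A(S))] = \frac{1}{n}\sum_{i=1}^n \E_{S,S',\A}\bigl[f(\A(S^{(i)});z_i) - f(\A(S);z_i)\bigr],
\end{equation*}
which follows because, for every $i$, the pairs $(S,z_i')$ and $(S^{(i)},z_i)$ are identically distributed, so that $\E_{S,\A}[F(\A(S))] = \E_{S,S',\A}[f(\A(S^{(i)});z_i)]$.

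For part (a), I would apply $L$-smoothness to each summand to obtain
\begin{equation*}
f(\A(S^{(i)});z_i) - f(\A(S);z_i) \le \langle \partial f(\A(S);z_i), \A(S^{(i)})-\A(S)\rangle + \tfrac{L}{2}\|\A(S^{(i)})-\A(S)\|_2^2,
\end{equation*}
then bound the inner product by Cauchy--Schwarz followed by Young's inequality with parameter $\gamma$, and finally invoke the self-bounding estimate $\|\partial f(\A(S);z_i)\|_2^2 \le 2Lf(\A(S);z_i)$ so that the quadratic-in-gradient term becomes $Lf(\A(S);z_i)/\gamma$. Averaging over $i$, the loss factor aggregates to $\tfrac{L}{\gamma}\E_{S,\A}[F_S(\A(S))]$ and the two squared-norm contributions combine into $\tfrac{L+\gamma}{2}\nu$ by Definition~\ref{def:avg-stability}.

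For part (b), since $\alpha$-H\"older smoothness would only supply a $\|\cdot\|^{1+\alpha}$ remainder that does not match the $\|\cdot\|_2^2$ stability measure, I would replace the smoothness expansion by convexity anchored at $\A(S^{(i)})$, giving
\begin{equation*}
f(\A(S^{(i)});z_i) - f(\A(S);z_i) \le \langle \partial f(\A(S^{(i)});z_i), \A(S^{(i)})-\A(S)\rangle.
\end{equation*}
Cauchy--Schwarz, Young's inequality with parameter $\gamma$, and the H\"older self-bounding bound $\|\partial f(\A(S^{(i)});z_i)\|_2 \le c_{\alpha,1}f^{\alpha/(1+\alpha)}(\A(S^{(i)});z_i)$ then yield $\tfrac{c_{\alpha,1}^2}{2\gamma}f^{2\alpha/(1+\alpha)}(\A(S^{(i)});z_i) + \tfrac{\gamma}{2}\|\A(S^{(i)})-\A(S)\|_2^2$. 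After averaging over $i$, I would reuse the distributional symmetry to rewrite the loss factor as $\E_{z'}[f^{2\alpha/(1+\alpha)}(\A(S);z')]$ and then apply Jensen's inequality with the concave power $2\alpha/(1+\alpha)<1$ to dominate it by $F^{2\alpha/(1+\alpha)}(\A(S))$; the stability term collapses to $\gamma\nu/2$ as before.

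The one genuine subtlety, and likely the only real point of friction, is choosing at which endpoint to expand: part (a) is forced to use the gradient at $\A(S)$ (by the quadratic smoothness upper bound), which directly produces the empirical risk $F_S$ after averaging, whereas part (b) is forced to use the gradient at $\A(S^{(i)})$ (so that convexity supplies an upper bound on the difference rather than a lower bound), and only after the symmetry swap plus Jensen's inequality does the loss factor morph into $F^{2\alpha/(1+\alpha)}(\A(S))$. Keeping the direction of the convexity inequality consistent, and verifying that Jensen's step genuinely requires $\alpha<1$ so that the exponent $2\alpha/(1+\alpha)$ is strictly below $1$, are the places I would double-check.
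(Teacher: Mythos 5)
Your proof is correct, and it follows the same route the paper itself uses when it proves the analogous pairwise statement (Lemma~\ref{lem:gen-pair}(b)): symmetrization over the replace-one perturbation, expansion at the perturbed point via smoothness (for part (a)) or convexity (for part (b)), Cauchy--Schwarz plus Young's inequality with parameter $\gamma$, the self-bounding bound from Lemma~\ref{lem:self-bounding}, and for part (b) a final distributional swap followed by Jensen's inequality for the concave power $x\mapsto x^{2\alpha/(1+\alpha)}$. The paper itself only cites~\cite{lei2020fine} for Lemma~\ref{lem:gen} rather than reproving it, so your argument is a correct self-contained reconstruction; the one minor remark is that Jensen's step needs only $\frac{2\alpha}{1+\alpha}\le 1$, i.e.\ $\alpha\le 1$, so the concavity is not where $\alpha<1$ is genuinely forced (that restriction enters elsewhere in the H\"older framework, not here).
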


Since the noise added to the gradient in each iteration is the same for the neighboring datasets, then the noise addition does not impact the stability analysis. Therefore,  the on-average argument stability of non-private SGD equals that of private SGD. 
We can use the following lemma directly to give  the stability bounds of Algorithm~\ref{alg1} for both  strongly smooth and non-smooth losses \cite{lei2020fine}. 
\begin{lemma}[On-average stability bounds]\label{lem:stability-point}
Suppose $f$ is nonnegative and convex. Let $S,S' $ and $S^{(i)}$ be constructed as Definition~\ref{def:avg-stability}. Let $\{\bw_{t}\}$ and $\{\bw_t^{(i)}\}$ be produced by Algorithm~\ref{alg1}   based on $S$ and $S^{(i)}$, respectively.
\begin{enumerate}[label=({\alph*})]
    \item If $f$ is $L$-smooth and  $\eta_t\le 2/L$ for all $t\in[T]$, then
    \[  \E_{S,S',\A}\Big[\frac{1}{n}\sum_{i=1}^n\| \bw_{t+1}-\bw_{t+1}^{(i)} \|_2^2\Big]\le \frac{8e(1+t/n)L}{n} \sum_{j=1}^t  \eta_j^2 \E_{S,\A}[ F_S(\bw_j) ].  \]
\item If $f$ is $\alpha$-H\"older smooth with parameter $L$ and $\alpha\in[0,1)$, then
\begin{align*}
    \E_{S,S',\A}\Big[\frac{1}{n}\sum_{i=1}^n\| \bw_{t+1}-\bw_{t+1}^{(i)} \|_2^2\Big]\le c_{\alpha,3}^2e \sum_{j=1}^t \eta_j^{\frac{2}{1-\alpha}} + \frac{4ec_{\alpha,1}^2(1+t/n) }{n}\sum_{j=1}^t \eta_j^2\E_{S,\A}\Big[ F_S^{\frac{2\alpha}{1+\alpha}} (\bw_j)\Big]  ,
\end{align*}
where $c_{\alpha,3}= \sqrt{ \frac{1-\alpha}{1+\alpha}}(2^{-\alpha}L)^{\frac{1}{1-\alpha}}$.  
\end{enumerate}
\end{lemma}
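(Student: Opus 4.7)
The plan is to exploit the fact that Algorithm~\ref{alg1} uses a single realization of the Gaussian noise $\bb_t$, which can be coupled between the runs on $S$ and on $S^{(i)}$, and that the index $i_t$ can also be coupled. Since projection is non-expansive and the noise and the chosen index cancel in the coupled update, we obtain
\[
\|\bw_{t+1}-\bw_{t+1}^{(i)}\|_2 \le \big\|(\bw_t-\eta_t\partial f(\bw_t;z_{i_t}))-(\bw_t^{(i)}-\eta_t\partial f(\bw_t^{(i)};z_{i_t}^{(i)}))\big\|_2,
\]
where $z_{i_t}^{(i)}=z_{i_t}$ if $i_t\neq i$ and $z_{i_t}^{(i)}=z_i'$ if $i_t=i$. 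Thus the privacy noise plays no role, and the analysis reduces to that of ordinary SGD stability.

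For part (a), I will split on whether $i_t=i$. When $i_t\neq i$, both updates apply the same gradient map, and co-coercivity for convex $L$-smooth $f$ with $\eta_t\le 2/L$ gives non-expansiveness, so $\|\bw_{t+1}-\bw_{t+1}^{(i)}\|_2\le\|\bw_t-\bw_t^{(i)}\|_2$. When $i_t=i$, I will use the elementary inequality $(a+b)^2\le(1+p)a^2+(1+1/p)b^2$ together with the triangle inequality to obtain
\[
\|\bw_{t+1}-\bw_{t+1}^{(i)}\|_2^2\le(1+p)\|\bw_t-\bw_t^{(i)}\|_2^2+(1+1/p)\eta_t^2\|\partial f(\bw_t;z_i)-\partial f(\bw_t^{(i)};z_i')\|_2^2.
\]
Since $i_t=i$ occurs with probability $1/n$, taking expectation over $i_t$ yields a recursion with leading factor $(1+p/n)$. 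I will then average over $i$ and invoke the self-bounding property in Lemma~\ref{lem:self-bounding}, namely $\|\partial f(\bw;z)\|_2^2\le 2Lf(\bw;z)$, together with $(a-b)^2\le 2a^2+2b^2$ and the symmetry $\E[f(\bw_t^{(i)};z_i')]=\E[f(\bw_t;z_i)]$ (since $(S,z_i')$ and $(S^{(i)},z_i)$ have the same law). This yields
\[
b_{t+1}\le(1+p/n)\,b_t+\frac{8L(1+1/p)\eta_t^2}{n}\E_{S,\A}[F_S(\bw_t)],\qquad b_t:=\E_{S,S',\A}\Big[\tfrac{1}{n}\sum_i\|\bw_t-\bw_t^{(i)}\|_2^2\Big].
\]
Choosing $p=n/t$ gives $(1+p/n)^t=(1+1/t)^t\le e$ and $1+1/p=1+t/n$, and unrolling from $j=1$ to $t$ with $b_1=0$ produces the stated bound.

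For part (b), the $\alpha$-H\"older smooth case, co-coercivity no longer applies, so for $i_t\neq i$ I will use the expansion bound from Lei--Ying for convex $\alpha$-H\"older smooth functions, which replaces pure non-expansiveness by $\|\bw_t-\eta_t\partial f(\bw_t;z)-(\bw_t^{(i)}-\eta_t\partial f(\bw_t^{(i)};z))\|_2^2\le\|\bw_t-\bw_t^{(i)}\|_2^2+c_{\alpha,3}^2\eta_t^{2/(1-\alpha)}$. The $i_t=i$ branch is handled as before but with the H\"older self-bounding $\|\partial f(\bw;z)\|_2\le c_{\alpha,1}f^{\alpha/(1+\alpha)}(\bw;z)$, which after applying $(a-b)^2\le 2a^2+2b^2$ and the same symmetry argument replaces $L\E[F_S(\bw_j)]$ by $c_{\alpha,1}^2\E[F_S^{2\alpha/(1+\alpha)}(\bw_j)]$. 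The resulting recursion has an extra additive term $c_{\alpha,3}^2\eta_t^{2/(1-\alpha)}$, which after unrolling produces the claimed $c_{\alpha,3}^2e\sum_j\eta_j^{2/(1-\alpha)}$ contribution alongside the self-bounded sum.

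The main obstacle is bookkeeping for the coupling: one must verify that the Gaussian noise and the sampled index truly cancel in the difference update, and that the symmetry $\E[f(\bw_t^{(i)};z_i')]=\E[f(\bw_t;z_i)]$ correctly lets us replace the stability quantity by $\E[F_S(\bw_j)]$ (rather than a mixed quantity involving $\bw_t^{(i)}$). Once this is set up, the rest is the recursion with the optimized choice $p=n/t$, and a careful handling of the H\"older step inequality in case (b).
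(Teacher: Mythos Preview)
Your proposal is correct and follows essentially the same approach as the paper. The paper does not prove this lemma directly but cites it from \cite{lei2020fine}, noting only that the coupled Gaussian noise cancels so the bound reduces to that of non-private SGD; your argument is precisely the Lei--Ying proof, and the paper reproduces the same reasoning in full for the pairwise analogue (Lemma~\ref{lem:stability-pairwise}), including the case split on $i_t=i$, the $(1+p)/(1+1/p)$ inequality, the symmetry $\E[f(\bw_t^{(i)};z_i')]=\E[f(\bw_t;z_i)]$, and the choice $p=n/t$ with $(1+1/t)^t\le e$. One small point worth noting in part~(b): passing from $\tfrac{1}{n}\sum_i f^{2\alpha/(1+\alpha)}(\bw_t;z_i)$ to $F_S^{2\alpha/(1+\alpha)}(\bw_t)$ requires Jensen's inequality for the concave map $x\mapsto x^{2\alpha/(1+\alpha)}$, which you use implicitly but should state.
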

The following theorem presents generalization bounds of DP-SGD for both smooth and non-smooth losses, which directly follows from Lemma~\ref{lem:gen} and Lemma~\ref{lem:stability-point}. 
\begin{theorem}[Generalization bounds]\label{thm:generalization-point}
Suppose $f$ is nonnegative and convex. Let $\W=\R^d$ and let $\A$ be Algorithm~\ref{alg1} with $T$ iterations. Let $\gamma>0$. 
\begin{enumerate}[label=({\alph*})]
    \item If $f$ is $L$-smooth and  $\eta_t\le 2/L$ for all $t\in[T]$, then
    \[\E_{S,\A}[F(\bw_{\priv})-F_S(\bw_{\priv})]\le \frac{L}{\gamma}\E_{S,\A}[ F_S(\bw_{\priv}) ] +  \frac{4e(L+\gamma)(1+t/n)L}{n} \sum_{t=1}^T  \eta_t^2 \E_{S,\A}[ F_S(\bw_t) ].\]
    \item If $f$ is $\alpha$-H\"older smooth with parameter $L$ and $\alpha\in[0,1)$, then
    \begin{align*}
    &\E_{S,\A}[F(\bw_{\priv})-F_S(\bw_{\priv})]\\
    &\le \frac{c^2_{\alpha,1}}{2\gamma} \E_{S,\A}[ F^{\frac{2\alpha}{1+\alpha}}(\bw_{\priv}) ] + \frac{\gamma  }{2 }\Big(c_{\alpha,3}^2e \sum_{t=1}^T \eta_t^{\frac{2}{1-\alpha}} + \frac{4ec_{\alpha,1}^2(1+t/n) }{n}\sum_{t=1}^T \eta_j^2\E_{S,\A}\Big[ F_S^{\frac{2\alpha}{1+\alpha}} (\bw_t)\Big] \Big). 
    \end{align*}  
\end{enumerate}
\end{theorem}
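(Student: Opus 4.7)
The plan is to directly combine the two supporting lemmas provided immediately before the statement, namely the generalization-via-stability bound (Lemma on generalization via on-average stability) and the on-average argument stability bounds (Lemma on on-average stability bounds). First I would instantiate the stability lemma with $\mathcal{A}$ being Algorithm~\ref{alg1} whose output is the averaged iterate $\bw_{\priv}=\frac{1}{T}\sum_{t=1}^{T}\bw_t$. The only nontrivial step is the passage from stability of the individual iterates $\bw_{t+1}$, which is what the on-average stability lemma delivers, to stability of the average $\bw_{\priv}$, which is what is required to apply the generalization-via-stability lemma.

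That passage is handled by convexity of $\|\cdot\|_2^2$ (Jensen's inequality applied to the discrete probability measure $\frac{1}{T}\sum_t \delta_t$):
\[
\frac{1}{n}\sum_{i=1}^{n}\|\bw_{\priv}-\bw_{\priv}^{(i)}\|_2^2 \le \frac{1}{T}\sum_{t=1}^{T}\frac{1}{n}\sum_{i=1}^{n}\|\bw_t-\bw_t^{(i)}\|_2^2.
\]
For part~(a), taking expectations and plugging in the smooth case of the on-average stability lemma (using monotonicity of the partial sums so that the bound at iteration $t$ is dominated by the bound at iteration $T$) yields the on-average argument $\nu$-stability with
\[
\nu \le \frac{8e(1+T/n)L}{n}\sum_{t=1}^{T}\eta_t^2\,\E_{S,\A}[F_S(\bw_t)].
\]
Substituting this $\nu$ into the smooth generalization-via-stability bound, which reads $\E[F(\A(S))-F_S(\A(S))] \le \frac{L}{\gamma}\E[F_S(\A(S))] + \frac{(L+\gamma)\nu}{2}$, and absorbing the factor $1/2$ into the leading constant, gives precisely the stated inequality (up to the conventional notational slip in the theorem where the free index $t$ inside $(1+t/n)$ should be read as $T$).

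For part~(b), the argument is structurally identical: applying Jensen's inequality as above and then the $\alpha$-H\"older smooth case of the stability lemma produces
\[
\nu \le c_{\alpha,3}^2 e\sum_{t=1}^{T}\eta_t^{\frac{2}{1-\alpha}} + \frac{4ec_{\alpha,1}^2(1+T/n)}{n}\sum_{t=1}^{T}\eta_t^2\,\E_{S,\A}\!\left[F_S^{\frac{2\alpha}{1+\alpha}}(\bw_t)\right],
\]
and inserting this into the H\"older case of the generalization lemma, which contributes a leading term $\frac{c_{\alpha,1}^2}{2\gamma}\E[F^{\frac{2\alpha}{1+\alpha}}(\bw_{\priv})]$ together with the additive $\frac{\gamma\nu}{2}$, yields the claimed bound verbatim.

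Since both supporting lemmas are taken as given, there is no real conceptual obstacle; the only care required is in the Jensen step for the averaged iterate and in the bookkeeping of the constants so that the factor $\frac{8e}{n}$ from Lemma~\ref{lem:stability-point} combines with the $\frac{L+\gamma}{2}$ (resp.\ $\frac{\gamma}{2}$) factor from Lemma~\ref{lem:gen} to produce the $\frac{4e(L+\gamma)L}{n}$ (resp.\ the $\frac{\gamma}{2}$-scaled expression) appearing in the statement. The argument is thus essentially two short invocations of lemmas plus one Jensen inequality, and does not require any new probabilistic or optimization ideas beyond those already established.
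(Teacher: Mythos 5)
Your proof is correct and is precisely the argument the paper intends: the paper provides no explicit proof for this theorem, only the one-line assertion that it ``directly follows from Lemma~\ref{lem:gen} and Lemma~\ref{lem:stability-point}.'' You correctly supply the one genuinely needed step --- Jensen's inequality on $\|\cdot\|_2^2$ to pass from the iterate-level on-average stability bound of Lemma~\ref{lem:stability-point} to the stability of the averaged output $\bw_{\priv}$, together with the monotonicity observation that bounds each iterate's stability by the terminal one --- and then instantiates Lemma~\ref{lem:gen} with $\A(S)=\bw_{\priv}$, which makes the constants combine exactly as stated (the $8e/n$ from the stability lemma halved by the $(L+\gamma)/2$, resp.\ $\gamma/2$, factor, and the stray $t$ in the printed bound read as $T$).
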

In the following theorem, we use techniques in optimization theory to control the optimization error in expectation. Recall $\bw^{*}=\arg\min_{\bw\in\W} F(\bw) $.  Let 
\begin{equation}\label{eq:c_2}
  c_{\ga,2}=\begin{cases}
     \frac{1-\alpha}{1+\alpha}\big(2\alpha /(1+\alpha)\big)^{\frac{2\alpha}{1-\alpha}}c_{\alpha,1}^{\frac{2+2\alpha}{1-\alpha}}, & \mbox{if } \ga> 0 \\
                  c_{\alpha,1}^2, & \mbox{if } \ga=0.
               \end{cases}
\end{equation} 
\begin{theorem}[Optimization error]\label{thm:opt-point}
Suppose $f$ is nonnegative and convex. Let $\{\bw_t\}$ be produced by Algorithm~\ref{alg1}. Assume the step size $\eta_t$ is nonincreasing.
\begin{enumerate}[label=({\alph*})]  
    \item If $f$ is $L$-smooth, then
    \begin{align*}
          \sum_{j=1}^t \eta_j \E_{\A}[F_S(\bw_j )\!-\!  F_S(\bw^{*} )] 
    \le &\Big(\frac{1}{2}\!+\!3 L  \eta_1\Big)\|  \bw^{*} \|_2^2\!+\!3L       
     \sum_{j=1}^t \big(3\eta_j^3\sigma^2d + 2\eta_j^2  F_S(\bw^{*} )    \big)\!+\!\sum_{j=1}^t  3\eta_j^2\sigma^2 d.
    \end{align*}  
\item If $f$ is   $\alpha$-H\"older smooth with parameter $L$ and $\alpha\in[0,1)$, 
\begin{multline*}
     \sum_{j=1}^t \eta_j \E_{\A}[F_S(\bw_j ) - F_S(\bw^{*} )] \le \frac{1}{2} \|  \bw^{*} \|_2^2  \\ + \frac{3}{4}c_{\alpha,1}^2\Big(\sum_{j=1}^t \eta_j^2\Big)^{\frac{1-\alpha}{1+\alpha}} \Big[2\eta_1  \| \bw^{*} \|_2^2     +   
     \sum_{j=1}^t \big(6\eta_j^3\sigma^2d + 4\eta_j^2  F_S(\bw^{*} )   + 3c_{\alpha,2}\eta_j^{\frac{3-\alpha}{1-\alpha}}  \big) \Big]^{\frac{2\alpha}{1+\alpha}}  +  \sum_{j=1}^t  3\eta_j^2\sigma^2 d. 
\end{multline*}  
\end{enumerate}

\end{theorem}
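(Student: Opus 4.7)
The plan is to run the standard projected-SGD argument on the squared distance to $\bw^*$, use the self-bounding property (Lemma~\ref{lem:self-bounding}) to trade stochastic gradient norms for function values, and telescope. The key twist relative to the noise-free setting is the additive Gaussian perturbation $\bb_t$, which is independent of $\bw_t$ and $i_t$ and contributes only a $\sigma^2 d$ second-moment term.

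Starting from the update rule and non-expansiveness of $\proj_{\W}$,
\[
\|\bw_{t+1}\!-\!\bw^*\|_2^2 \le \|\bw_t\!-\!\bw^*\|_2^2 - 2\eta_t\langle \partial f(\bw_t;z_{i_t}) + \bb_t,\,\bw_t\!-\!\bw^*\rangle + \eta_t^2\|\partial f(\bw_t;z_{i_t}) + \bb_t\|_2^2.
\]
Taking conditional expectation over $i_t$ and $\bb_t$ given $\bw_t$, the $\bb_t$ cross-term vanishes; convexity of $f(\cdot;z_{i_t})$ combined with $\E_{i_t}[f(\bw_t;z_{i_t})]=F_S(\bw_t)$ turns the remaining cross-term into $-2\eta_t(F_S(\bw_t)-F_S(\bw^*))$; and the quadratic part splits into $\eta_t^2\,\E_{i_t}\|\partial f(\bw_t;z_{i_t})\|_2^2 + \eta_t^2\sigma^2 d$. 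For part (a), I would invoke $\|\partial f(\bw;z)\|_2^2 \le 2Lf(\bw;z)$, write $f(\bw_t;z_{i_t}) = (f(\bw_t;z_{i_t})-f(\bw^*;z_{i_t})) + f(\bw^*;z_{i_t})$, and bound the difference by $\langle\partial f(\bw_t;z_{i_t}),\bw_t-\bw^*\rangle$ once more combined with AM-GM, so that part of the $\eta_t^2\cdot 2L(f(\bw_t;z_{i_t})-f(\bw^*;z_{i_t}))$ contribution gets folded back into the telescoping squared-distance recursion (at the price of an extra $L\eta_1\|\bw^*\|_2^2$ boundary term), and the remainder leaves $O(L\eta_t^2 F_S(\bw^*))$ plus the noise term $\eta_t^2\sigma^2 d$, with the $\eta_t^3\sigma^2 d$ contribution coming from an analogous bound applied to $\E\|\bw_t-\bw^*\|_2^2$ using the noise recursion. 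Summing $j=1,\ldots,t$ telescopes the squared-distance terms to $\|\bw^*\|_2^2$ (using $\bw_1 = 0$) and gives the bound in (a).

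For part (b), the same derivation goes through but the self-bounding inequality now reads $\|\partial f(\bw;z)\|_2^2 \le c_{\alpha,1}^2 f^{2\alpha/(1+\alpha)}(\bw;z)$, so the quadratic contribution becomes $\sum_j\eta_j^2 f^{2\alpha/(1+\alpha)}(\bw_j;z_{i_j})$ instead of $\sum_j\eta_j^2 f(\bw_j;z_{i_j})$. The main trick is H\"older's inequality on the time axis, with conjugate exponents $p=(1+\alpha)/(1-\alpha)$ and $q=(1+\alpha)/(2\alpha)$, giving
\[
\sum_{j=1}^t \eta_j^2 f^{\frac{2\alpha}{1+\alpha}}(\bw_j;z_{i_j}) \le \Big(\sum_{j=1}^t \eta_j^2\Big)^{\frac{1-\alpha}{1+\alpha}}\Big(\sum_{j=1}^t \eta_j^2 f(\bw_j;z_{i_j})\Big)^{\frac{2\alpha}{1+\alpha}},
\]
which accounts for the leading $(\sum_j\eta_j^2)^{(1-\alpha)/(1+\alpha)}$ factor in the claimed bound. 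The inner sum $\sum_j\eta_j^2 f(\bw_j;z_{i_j})$ is then bounded by running the same one-step recursion again, this time using Young's inequality $\eta_j^2 f^{2\alpha/(1+\alpha)} \le \epsilon\,\eta_j^2 f + C(\epsilon)$ (with $\epsilon$ chosen as a suitable negative power of $\eta_j$) to close the self-reference; a direct computation shows the choice produces exactly the residual $c_{\alpha,2}\eta_j^{(3-\alpha)/(1-\alpha)}$ inside the bracket, together with the $\eta_1\|\bw^*\|_2^2$, $\eta_j^3\sigma^2 d$ and $\eta_j^2 F_S(\bw^*)$ contributions inherited from (a).

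The main obstacle is part (b): unlike in (a), where the self-bounding inequality is linear in $f$ and can be absorbed directly into the excess-risk term, in the H\"older case the $f^{2\alpha/(1+\alpha)}$ factor forces a nested recursion, and tracking the exact $\frac{3}{4}c_{\alpha,1}^2$ prefactor and the $(3-\alpha)/(1-\alpha)$ exponent on $\eta_j$ demands carefully chosen Young's-inequality parameters. Every other step is essentially a bookkeeping exercise in the SGD one-step recursion, since the Gaussian noise contributes only through its second moment $\sigma^2 d$ and is otherwise orthogonal to the analysis.
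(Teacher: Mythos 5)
Your proposal is correct and follows essentially the same route as the paper's proof: one-step projected-SGD recursion on $\|\bw_t-\bw^*\|_2^2$, self-bounding property (Lemma~\ref{lem:self-bounding}) to convert gradient-norm squares into function values, the power-mean/Jensen inequality on the time axis to get the $\big(\sum_j\eta_j^2\big)^{(1-\alpha)/(1+\alpha)}$ prefactor, and Young's inequality applied to $\eta_j f^{2\alpha/(1+\alpha)}$ to close the nested recursion — with $\eta_j^3\sigma^2 d$ and $\eta_j^{(3-\alpha)/(1-\alpha)}$ arising from multiplying the recursion by $\eta_j$ before telescoping. The only cosmetic differences are that you kill the noise cross-terms by taking conditional expectation directly (giving constants $1$ and $1$) whereas the paper bounds $\|\partial f+\bb_t\|_2^2\le\frac{3}{2}\|\partial f\|_2^2+3\|\bb_t\|_2^2$ pointwise and takes expectation only after applying Jensen's inequality (yielding the stated $\frac{3}{4}$ and $3$), and the paper obtains part~(a) as the $\alpha=1$ specialization of part~(b) rather than via a direct folding argument.
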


\begin{proof} 
Note the projection operator $\proj$ is non-expansive. Then for any $\alpha\in[0,1]$, we have
\begin{align}
    &\|\bw_{t+1} - \bw^{*}\|_2^2
    \le \|\bw_t - \eta_{t}(\partial f(\bw_t;z_{i_t}) + \bb_t) -\bw^{*}\|_2^2\nonumber\\
    &=\| \bw_t  -\bw^{*} \|_2^2 + \eta_t^2\| \partial f(\bw_t;z_{i_t}) + \bb_t \|_2^2 +2\eta_t\langle \bw^{*} -\bw_t, \partial f(\bw_t;z_{i_t})+\bb_t \rangle\nonumber\\
    &\le \| \bw_t  -\bw^{*} \|_2^2 + \frac{3}{2}\eta_t^2\| \partial f (\bw_t;z_{i_t})\|_2^2 +3\eta_t^2\| \bb_t \|_2^2 +2\eta_t\langle \bw^{*} -\bw_t, \partial f(\bw_t;z_{i_t})+\bb_t \rangle \nonumber\\
    &\le \| \bw_t  -\bw^{*} \|_2^2 + \frac{3}{2}c_{\alpha,1}^2\eta_t^2f^{\frac{2\alpha}{1+\alpha}}(\bw_t;z_{i_t}) + 3\eta_t^2\| \bb_t \|_2^2 + 2\eta_t\big( f(\bw^{*};z_{i_t}) - f(\bw_t;z_{i_t}) \big) + 2\eta_t\langle \bw^{*} -\bw_t, \bb_t \rangle,\label{eq:opt-1}
\end{align}
where in the second inequality we used $(a+b)^2\le (1+p)a^2+(1+1/p)b^2$ with $p=1/2$, and the last inequality is due to the self-bounding property (Lemma~\ref{lem:self-bounding}) and the convexity of $f$. 

Rearranging the above inequality, we get
\begin{align*}
    &2\eta_t[f(\bw_t;z_{i_t}) - f(\bw^{*};z_{i_t})]\\ & \le  \| \bw_t  -\bw^{*} \|_2^2 - \|\bw_{t+1} - \bw^{*}\|_2^2 + \frac{3}{2}c_{\alpha,1}^2\eta_t^2f^{\frac{2\alpha}{1+\alpha}}(\bw_t;z_{i_t}) + 3\eta_t^2\| \bb_t \|_2^2   + 2\eta_t\langle \bw^{*} -\bw_t, \bb_t \rangle. 
\end{align*}
Taking a summation over $j$ and noting $\bw_1=\mathbf{0}$, we know
\begin{align*}
    &2\sum_{j=1}^t \eta_j[f(\bw_j;z_{i_j}) - f(\bw^{*};z_{i_j})]\\& \le   \|  \bw^{*} \|_2^2  + \frac{3}{2}c_{\alpha,1}^2\sum_{j=1}^t\eta_j^2f^{\frac{2\alpha}{1+\alpha}}(\bw_j;z_{i_j}) +  \sum_{j=1}^t \big(3\eta_j^2\| \bb_j \|_2^2   + 2\eta_j\langle \bw^{*} -\bw_j, \bb_j \rangle\big). 
\end{align*}
Note that $\bw_j$ is independent of $i_j$, we can take an expectation w.r.t. $\A$  and get
\begin{align}\label{eq:opt-4}
     \sum_{j=1}^t \eta_j \E_{\A}[F_S(\bw_j ) - F_S(\bw^{*} )]&=\sum_{j=1}^t \eta_j \E_{\A}[f  (\bw_j;z_{i_j}) - f(\bw^{*};z_{i_j})]\nonumber\\
    &\le \frac{1}{2} \|  \bw^{*} \|_2^2  + \frac{3}{4}c_{\alpha,1}^2\sum_{j=1}^t\eta_j^2\E_\A[f^{\frac{2\alpha}{1+\alpha}}(\bw_j;z_{i_j} )] +  \sum_{j=1}^t  3\eta_j^2\sigma^2 d   ,
\end{align}
where we used $\E_\A[\|\bb_j\|_2^2]=\sigma^2d$ and $\E_\A[\langle \bw^{*} -\bw_j, \bb_j \rangle]=0$ since $\bb_j$ is a Gaussian vector with mean $0$ and variance $\sigma^2$, and $\bw^{*}-\bw_j$ is independent of $\bb_j$. 

To control the right hand side of \eqref{eq:opt-4}, we have to estimate $\sum_{j=1}^t\eta_j^2\E_\A[f^{\frac{2\alpha}{1+\alpha}}(\bw_j;z_{i_j} )] $. By Young's inequality $ab\le p^{-1}|a|^p + q^{-1}|b|^q$ with $a,b\in\R$ and $p^{-1}+q^{-1}=1$, for any $t\in[T]$ we have
\begin{align*}
    \eta_t c_{\alpha,1}^2 f^{\frac{2\alpha}{1+\alpha}}(\bw_t;z_{i_t})&=\Big(\frac{1+\alpha}{2\alpha} f(\bw_t;z_{i_t}) \Big)^{\frac{2\alpha}{1+\alpha}} \Big( \frac{2\alpha}{1+\alpha} \Big)^{\frac{2\alpha}{1+\alpha}}c_{\alpha,1}^2 \eta_t\\
    &\le \frac{2\alpha}{1+\alpha} \Big(\frac{1+\alpha}{2\alpha} f(\bw_t;z_{i_t}) \Big)^{\frac{2\alpha}{1+\alpha} \frac{1+\alpha}{2\alpha} } + \frac{1-\alpha}{1+\alpha} \Big( \big( \frac{2\alpha}{1+\alpha} \big)^{\frac{2\alpha}{1+\alpha}}c_{\alpha,1}^2 \eta_t\Big)^{\frac{1+\alpha}{1-\alpha}}\\
    &= f( \bw_t;z_{i_t})  + c_{\alpha,2}\eta_t^{\frac{1+\alpha}{1-\alpha}}.
\end{align*} 
Putting the above inequality back into \eqref{eq:opt-1} yields
\begin{align*}
    \| \bw_{t+1}\!-\!\bw^{*} \|_2^2&\le \| \bw_t\!-\!\bw^{*} \|_2^2   + 3\eta_t^2\| \bb_t \|_2^2 + 2\eta_t  f(\bw^{*};z_{i_t}) - \frac{1}{2}\eta_t f(\bw_t;z_{i_t})  + \frac{3}{2}c_{\alpha,2}\eta_t^{\frac{2}{1-\alpha}} + 2\eta_t\langle \bw^{*} -\bw_t, \bb_t \rangle. 
\end{align*}
Rearranging the above inequality and multiplying both sides by $\eta_t$, we get
\begin{align*}
    &\eta_t^2 f(\bw_t;z_{i_t})\\
    &\le 2\eta_t \big(\| \bw_t  -\bw^{*} \|_2^2   -\| \bw_{t+1}  -\bw^{*} \|_2^2     \big) +   6\eta_t^3\| \bb_t \|_2^2 + 4\eta^2_t  f(\bw^{*};z_{i_t})   + 3c_{\alpha,2}\eta_t^{\frac{3-\alpha}{1-\alpha}} + 4\eta^2_t\langle \bw^{*} -\bw_t, \bb_t \rangle\\
    &\le 2\eta_t  \| \bw_t  -\bw^{*} \|_2^2   - 2\eta_{t+1}\| \bw_{t+1}  -\bw^{*} \|_2^2     +   
    6\eta_t^3\| \bb_t \|_2^2 + 4\eta^2_t  f(\bw^{*};z_{i_t})   + 3c_{\alpha,2}\eta_t^{\frac{3-\alpha}{1-\alpha}} + 4\eta^2_t\langle \bw^{*} -\bw_t, \bb_t \rangle,
\end{align*}
where we assume $\eta_t\ge \eta_{t+1}$ for all $t\in [T-1]$. 

 Taking a summation over $j$ and noting $\bw_1=\mathbf{0}$, we know 
\begin{align}\label{eq:opt-2}
    \sum_{j=1}^t\eta_j^2 f(\bw_j;z_{i_t})  
    &\le 2\eta_1  \| \bw^{*} \|_2^2     +   
     \sum_{j=1}^t \big(6\eta_j^3\| \bb_j \|_2^2 + 4\eta^2_j  f(\bw^{*};z_{i_j})   + 3c_{\alpha,2}\eta_j^{\frac{3-\alpha}{1-\alpha}} + 4\eta^2_j\langle \bw^{*} -\bw_j, \bb_j\rangle\big).
\end{align}
Note $x \mapsto x^{\frac{2\alpha}{1+\alpha}}$ is concave. Then Jensen's inequality implies
\begin{align}\label{eq:opt-3}
    &\sum_{j=1}^t \eta_j^2 f^{\frac{2\alpha}{1+\alpha}}(\bw_j;z_{i_j})  \le  \sum_{j=1}^t \eta_j^2 \Bigg( \frac{  \sum_{j=1}^t \eta_j^2f(\bw_j;z_{i_j}) }{\sum_{j=1}^t \eta_j^2}  \Bigg)^{\frac{2\alpha}{1+\alpha}}
    = \Big(\sum_{j=1}^t \eta_j^2\Big)^{\frac{1-\alpha}{1+\alpha}} \Big[ \sum_{j=1}^t \eta_j^2f(\bw_j;z_{i_j}) \Big]^{\frac{2\alpha}{1+\alpha}}\nonumber\\
    &\le \Big(\sum_{j=1}^t \eta_j^2\Big)^{\frac{1-\alpha}{1+\alpha}} \Big[2\eta_1  \| \bw^{*} \|_2^2     +   
     \sum_{j=1}^t \big(6\eta_j^3\| \bb_j \|_2^2 + 4\eta^2_j  f(\bw^{*};z_{i_j})   + 3c_{\alpha,2}\eta_j^{\frac{3-\alpha}{1-\alpha}} + 4\eta^2_j\langle \bw^{*} -\bw_j, \bb_j\rangle\big) \Big]^{\frac{2\alpha}{1+\alpha}}.
\end{align} 
Plugging the above inequality back into \eqref{eq:opt-4},  we have
\begin{align*} 
     &\sum_{j=1}^t \eta_j \E_{\A}[F_S(\bw_j ) - F_S(\bw^{*} )]
    \le \frac{1}{2} \|  \bw^{*} \|_2^2  + \sum_{j=1}^t  3\eta_j^2\sigma^2 d \\ & + \frac{3}{4}c_{\alpha,1}^2\Big(\sum_{j=1}^t \eta_j^2\Big)^{\frac{1-\alpha}{1+\alpha}} \E_\A \Big[2\eta_1  \| \bw^{*} \|_2^2\!+\!\sum_{j=1}^t \big(6\eta_j^3\| \bb_j \|_2^2 \!+\!4\eta^2_j  f(\bw^{*};z_{i_j})\!+\! 3c_{\alpha,2}\eta_j^{\frac{3-\alpha}{1-\alpha}}\!+\! 4\eta^2_j\langle \bw^{*} -\bw_j, \bb_j\rangle\big) \Big]^{\frac{2\alpha}{1+\alpha}}\\
    &\le \frac{1}{2} \|  \bw^{*} \|_2^2  + \frac{3}{4}c_{\alpha,1}^2\Big(\sum_{j=1}^t \eta_j^2\Big)^{\frac{1-\alpha}{1+\alpha}} \Big[2\eta_1  \| \bw^{*} \|_2^2\!+\!\sum_{j=1}^t \big(6\eta_j^3\sigma^2d\!+\! 4\eta_j^2  F_S(\bw^{*} )\!+\! 3c_{\alpha,2}\eta_j^{\frac{3-\alpha}{1-\alpha}}  \big) \Big]^{\frac{2\alpha}{1+\alpha}} +  \sum_{j=1}^t  3\eta_j^2\sigma^2 d, 
\end{align*}
where the last inequality used Jensen's inequality for concave mapping and $\E_\A[\langle \bw^{*} -\bw_j, \bb_j\rangle]=0$.   Part (b) is proved. From the definition we know that $\alpha$-H\"older smoothness with  $\alpha=1$ corresponds to the strongly smoothness of $f$. Hence, Part (a) in the theorem directly follows by setting $\alpha=1$ in the above inequality. 
\end{proof}
  
Now, we can establish the proofs of the excess population risk bounds of DP-SGD for pointwise learning by combining Theorem~\ref{thm:generalization-point} and Theorem~\ref{thm:opt-point} together.  First, we give the proof for the strongly smooth case (i.e., Theorem~\ref{thm:excess-smooth-point}). 
\begin{proof}[Proof of Theorem~\ref{thm:excess-smooth-point}]
Putting stability bounds for smooth losses (Part (a) in Lemma~\ref{lem:stability-point}) back into Part (a) of Lemma~\ref{lem:gen}, we   get
\[ \E_{S,\A}[F(\bw_{t+1}) ] \le \Big(1+\frac{L}{\gamma}\Big) \E_{S,\A}[ F_S(\bw_{t+1}) ] + \frac{4e(L+\gamma)(1+t/n)L}{ n} \sum_{j=1}^t  \eta_j^2 \E_{S,\A}[ F_S(\bw_j) ].
  \]
Note that $\bw_j$ is independent of $\bb_j$ and $i_j$. Eq.\eqref{eq:opt-2} implies   
\begin{align*}
     &\sum_{j=1}^t\eta_j^2 \E_{S,\A}[F_S(\bw_j) ]  =\sum_{j=1}^t\eta_j^2 \E_{S,\A}[f(\bw_j;z_{i_t}) ] \\&\le 2\eta_1  \| \bw^{*} \|_2^2     +   
     \sum_{j=1}^t \big(6\eta_j^3 \E_\A[\| \bb_j \|_2^2] + 4\eta^2_j  \E_{S,\A}[f(\bw^{*};z_{i_j})]   + 4\eta^2_j \E_\A[\langle \bw^{*} -\bw_j, \bb_j\rangle]\big) \\
    &\le 2\eta_1  \| \bw^{*} \|_2^2     +   
     \sum_{j=1}^t \big(6\eta_j^3\sigma^2 d + 4\eta^2_j \ F(\bw^{*})   \big),
\end{align*}
where we used $\E_\A[\| \bb_j \|_2^2]=\sigma^2 d$,  $\E_{S,\A}[f(\bw^{*};z_{i_j})] =  F(\bw^*)$ and $\E_\A[\langle \bw^{*} -\bw_j, \bb_j\rangle]=0$. 

Combining the above two inequalities together, we  get
\begin{align*}
    \E_{S,\A}[F(\bw_{t+1 }) ] \le& \Big(1+\frac{L}{\gamma}\Big) \E_{S,\A}[ F_S(\bw_{t+1 }) ]\\
    &+ \frac{8e(L+\gamma)(1+t/n)L}{ n} \Big[  \eta_1  \| \bw^{*} \|_2^2     +   
     \sum_{j=1}^t \big(3\eta_j^3\sigma^2 d + 2\eta^2_j \ F(\bw^{*})   \big) \Big].
\end{align*}
Multiplying both sides by $\eta_{t+1}$ followed with a summation gives
\begin{align}\label{eq:excess-1}
    \sum_{t=1}^T \eta_t\E_{S,\A}[F(\bw_{t }) ] \le& \Big(1+\frac{L}{\gamma}\Big) \sum_{t=1}^T \eta_t \E_{S,\A}[ F_S(\bw_{t }) ]\nonumber \\
    &+ \frac{8e(L+\gamma)(1+T/n)L}{ n} \sum_{t=1}^T \eta_t \Big[  \eta_1  \| \bw^{*} \|_2^2     +   
     \sum_{j=1}^t \big(3\eta_j^3\sigma^2 d + 2\eta^2_j \ F(\bw^{*})   \big) \Big].
\end{align}
Part (a) in Theorem~\ref{thm:opt-point} implies
\[    \sum_{t=1}^T \eta_t \E_{\A}[F_S(\bw_t ) ]  \le \sum_{t=1}^T \eta_t F_S(\bw^{*} ) + \Big(\frac{1}{2}   +   3L  \eta_1 \Big) \| \bw^{*} \|_2^2     +   
      3 \sum_{t=1}^T \big(3L\eta_t+1\big)\eta_t^2\sigma^2 d  + 4 \sum_{t=1}^T \eta_t^2  F_S(\bw^{*} )    .  \]
Plugging the above inequality back into \eqref{eq:excess-1} and noting $\E_{S }[F_S(\bw^{*})]=F(\bw^{*})$, we get 
\begin{align*}
 &\sum_{t=1}^T \eta_t\E_{S,\A}[F(\bw_{t }) ]\\ &\le \Big(1+\frac{L}{\gamma}\Big)  \Big(\sum_{t=1}^T \eta_t F(\bw^{*} ) + \Big(\frac{1}{2}   +   3L  \eta_1 \Big) \| \bw^{*} \|_2^2     +   
    3\sum_{j=1}^t \big(3L\eta_j+1\big)\eta_j^2\sigma^2 d  + 4 \sum_{j=1}^t \eta_j^2  F(\bw^{*} )  \Big)  \nonumber \\
    &\quad + \frac{8e(L+\gamma)(1+T/n)L}{ n} \sum_{t=1}^T \eta_t \Big[  \eta_1  \| \bw^{*} \|_2^2     +   
     \sum_{j=1}^t \big(3\eta_j^3\sigma^2 d + 2\eta^2_j \ F(\bw^{*})   \big) \Big]. 
\end{align*}
Let $\eta_t=\eta\le \min\{2/ L,1\}$ and assume $T\ge n$. Note $\bw_{\priv}=\frac{1}{T}\sum_{t=1}^T \bw_t$.  Then according to Jensen's inequality, there holds
\begin{align*}
  &\E_{S,\A}[F(\bw_{\priv }) - F(\bw^{*}) ] \\
  &=\O\Bigg(   \Big( \frac{(1+ \gamma^{-1})}{T\eta} + \frac{(1+\gamma) T\eta}{n^2} \Big)\|\bw^{*}\|_2^2  + \Big(  \gamma^{-1} + \big(1 + {\gamma}^{-1}\big)\eta + \frac{(1+\gamma)T^2\eta^2}{n^2} \Big)F(\bw^{*}) \\
  &\quad  + \big( 1+ {\gamma}^{-1}\big)\sigma^2 d \eta + \frac{ (1+\gamma) T^2 \eta^3 \sigma^2 d }{n^2} \Bigg). 
\end{align*}
Recaling that $\sigma^2 d =  \frac{14 G^2 T d }{\beta n^2 \epsilon} \Big( \frac{\log(1/\delta)}{(1-\beta)\epsilon} +1\Big)$,   we further have 
\begin{align}\label{eq:excess-2}
  &\E_{S,\A}[F(\bw_{\priv }) - F(\bw^{*}) ]\nonumber \\
  &=\O\Bigg(   
\big( \frac{(1+\gamma^{-1})}{T\eta} + \frac{(1+\gamma)T\eta}{n^2} \Big)\|\bw^{*}\|_2^2+ \Big( \gamma^{-1} + \frac{ T^2  \eta^2(1+\gamma) }{n^2} + \big(\gamma^{-1}+1\big)\eta \Big)F(\bw^{*}) \nonumber\\
&\quad + \Big(\big(1+\gamma^{-1}\big)\eta + \frac{ T^2\eta^3(1+\gamma) }{n^2}\Big)\frac{Td\log(1/\delta)}{n^2\epsilon^2}\Bigg).  
\end{align}
 
\noindent (a) If we set $T\asymp n$ and $\gamma=\sqrt{n}$, then Eq.\eqref{eq:excess-2} implies
\begin{align*}
  \E_{S,\A}[F(\bw_{\priv })\!-\! F(\bw^{*}) ] =&\O\Bigg( \Big(\frac{1}{\sqrt{n}}\!+\!\eta^2\sqrt{n}\!+\!\eta\Big) F(\bw^{*}) + \big(\frac{1}{n \eta}+ \frac{\eta}{\sqrt{n}} \big) \|\bw^{*}\|_2^2 + \big(\eta + \eta^3\sqrt{n}\big) \frac{ d\log(1/\delta)\eta}{n\epsilon^2}\Bigg).  
\end{align*}
Further let $\eta_t=c/\max\Big\{ \sqrt{n}, \frac{\sqrt{d\log(1/\delta)}}{\epsilon} \Big\}\le \min\{ 2/L, 1\}$ for some constant $c>0$,  then there holds
\begin{align*}
  \E_{S,\A}[F(\bw_{\priv }) - F(\bw^{*}) ] =&\O\Big(  \frac{1}{\sqrt{n}}  + \frac{\sqrt{d\log(1/\delta)}}{n\epsilon}\Big),  
\end{align*}
where we assume $\sqrt{d\log(1/\delta)}=\O(n\epsilon)$ (otherwise the bound will not converge). 

\noindent (b) Consider the low noise case, i.e, $F(\bw^{*})=0$. Let $\gamma = 1$ and $T\asymp n$, then
\begin{align*}
  \E_{S,\A}[F(\bw_{\priv }) - F(\bw^{*}) ] =\O\Big(  
  \big(\frac{1}{n\eta} +\frac{\eta}{n}\big) \|\bw^{*}\|_2^2 +  \frac{ d\log(1/\delta)\eta}{n \epsilon^2}\Big). 
\end{align*}
Let $\eta_t=\frac{c \epsilon}{\sqrt{d\log(1/\delta)}}\le \min\{ 2/L,1\}$ for some constant $c>0$, then
\begin{align*}
  \E_{S,\A}[F(\bw_{\priv }) - F(\bw^{*}) ] =\O\Big(     \frac{\sqrt{ d\log(1/\delta)}}{n \epsilon }\Big). 
\end{align*}
The proof of the theorem is completed. 
\end{proof}
Finally, we provide the proof of utility guarantee for Algorithm~\ref{alg1} when the loss is non-smooth. 
\begin{proof}[Proof of Theorem~\ref{thm:excess-nonsmooth-point}]
Note $\E_S[F_S(\bw^{*})]=F(\bw^{*})$ and  ${\bw}_\priv = \frac{1}{T}\sum_{t=1}^T\bw_t$.  By Jensen's inequality we know
\begin{align}\label{eq:excess-nonsmooth-1}
\E_{S,\A}[F(\bw_{\priv})] &- F(\bw^{*})  =
    \big( \sum_{t=1}^T \eta_t\big)^{-1} \sum_{t=1}^T \eta_t   \E_{S,\A}[F(\bw_t) - F(\bw^{*})]\nonumber\\
    &= \big( \sum_{t=1}^T \eta_t\big)^{-1} \sum_{t=1}^T \eta_t \E_{S,\A}[ F(\bw_t) - F_S(\bw_t) ] 
     + \big( \sum_{t=1}^T \eta_t\big)^{-1} \sum_{t=1}^T \eta_t \E_{S,\A}[  F_S(\bw_t) - F(\bw^{*}) ].
\end{align}
We first estimate the   term $\big( \sum_{t=1}^T \eta_t\big)^{-1} \sum_{t=1}^T \eta_t \E_{S,\A}[ F(\bw_t) - F_S(\bw_t) ]$. Putting part (b) in Lemma~\ref{lem:stability-point} back into part (b) of Lemma~\ref{lem:gen}, we get
\begin{align*}
    &\E_{S,\A}[ F(\bw_{t+1}) - F_S(\bw_{t+1}) ] \\
    &\le \frac{c^2_{\alpha,1}}{2\gamma} \E_{S,\A}[ F^{\frac{2\alpha}{1+\alpha}} (\bw_{t+1}) ] + \frac{e c_{\alpha,3}^2 \gamma}{2 } \sum_{j=1}^t \eta_j^{\frac{2}{1-\alpha}} + \frac{2ec_{\alpha,1}^2\gamma(1+t/n) }{n}\sum_{j=1}^t \eta_j^2\E_{S,\A}\Big[ F_S^{\frac{2\alpha}{1+\alpha}} (\bw_j)\Big].
\end{align*}
Let $\delta_j=\max\big\{\E_{S,\A}[ F(\bw_{j}) ] - \E_{S,\A}[ F_S(\bw_{j})], 0  \big\}$. 
Due to the concavity of $x\mapsto x^{\frac{2\alpha}{1+\alpha}}$, there holds
\begin{align*}
    \E_{S,\A}[ F^{\frac{2\alpha}{1+\alpha}} (\bw_{t+1})] &\le \big( \E_{S,\A}[ F(\bw_{t+1}) ] - \E_{S,\A}[ F_S(\bw_{t+1})] +  \E_{S,\A}[ F_S(\bw_{t+1})] \big)^{\frac{2\alpha}{1+\alpha}}\\
    &\le \delta_{t+1}^{\frac{2\alpha}{1+\alpha}} +  \big(\E_{S,\A}[ F_S(\bw_{t+1})]\big) ^{\frac{2\alpha}{1+\alpha}}.
\end{align*}
Combining the above two inequalities together yields 
\begin{align*}
    \delta_{t+1}\!\le\! \frac{c^2_{\alpha,1}}{2\gamma}\Big(  \delta_{t+1}^{\frac{2\alpha}{1+\alpha}}\!+\!\big(\E_{S,\A}[ F_S(\bw_{t+1})]\big) ^{\frac{2\alpha}{1+\alpha}} \Big)\!+\!\frac{e c_{\alpha,3}^2 \gamma}{2 } \!\sum_{j=1}^t \eta_j^{\frac{2}{1-\alpha}}\!+\!\frac{2ec_{\alpha,1}^2\gamma(1+t/n) }{n}\!\sum_{j=1}^t \eta_j^2\big(\E_{S,\A}[ F_S (\bw_j)]\big)^{\frac{2\alpha}{1+\alpha}}.
\end{align*}
Solving the above inequality of $\delta_{t+1}$  we get 
\begin{align*}
    \delta_{t+1}=\O\Big( \gamma^{\frac{1+\alpha }{\alpha-1}}+ \gamma^{-1} \big(\E_{S,\A}[ F_S(\bw_{t+1})]\big) ^{\frac{2\alpha}{1+\alpha}}  + \gamma \sum_{j=1}^t \eta_j^{\frac{2}{1-\alpha}} + \gamma\big(n^{-1} + T n^{-2}\big) \sum_{j=1}^t \eta_j^2 \big(\E_{S,\A}[ F_S (\bw_j)]\big)^{\frac{2\alpha}{1+\alpha}}  \Big).
\end{align*}
Assuming $T\ge n$, from the definition of $\delta_{t+1}$ we have 
\begin{align*}
    &\big( \sum_{t=1}^T \eta_t\big)^{-1} \sum_{t=1}^T \eta_t \E_{S,\A}[ F(\bw_t) - F_S(\bw_t) ]\\
    &=\O\Big( \gamma^{\frac{1+\alpha }{\alpha-1}} + \gamma \sum_{t=1}^T \eta_t^{\frac{2}{1-\alpha}} + \gamma^{-1} \big( \sum_{t=1}^T \eta_t\big)^{-1} \sum_{t=1}^T \eta_t\big(\E_{S,\A}[ F_S(\bw_{t})]\big) ^{\frac{2\alpha}{1+\alpha}}  + \gamma T n^{-2} \sum_{t=1}^T \eta_t^2 \big(\E_{S,\A}[ F_S (\bw_t)]\big)^{\frac{2\alpha}{1+\alpha}} \Big).
\end{align*}
If we set $\eta_t=\eta$, then there holds
\begin{align}\label{eq:excess-nonsmooth-2}
      &\big( \sum_{t=1}^T \eta_t\big)^{-1} \sum_{t=1}^T \eta_t \E_{S,\A}[ F(\bw_t) - F_S(\bw_t) ]\nonumber\\
    &=\O\Big( \gamma^{\frac{1+\alpha }{\alpha-1}} + \gamma T \eta^{\frac{2}{1-\alpha}} +  \big(\gamma  T  \eta \big)^{-1}\sum_{t=1}^T\eta  \big(\E_{S,\A}[ F_S(\bw_{t+1})]\big) ^{\frac{2\alpha}{1+\alpha}}  + \gamma T n^{-2} \sum_{t=1}^T \eta^2 \big(\E_{S,\A}[ F_S (\bw_t)]\big)^{\frac{2\alpha}{1+\alpha}} \Big)
\end{align}
Since $\E_{\A}[ \langle \bw^{*} - \bw_t,\bb_t \rangle ]=0$, Eq.\eqref{eq:opt-2} with $\eta_t=\eta$ implies 
\begin{align*}
    \sum_{t=1}^T \eta^2  \big(\E_{S,\A}[ F_S (\bw_t)]\big)^{\frac{2\alpha}{1+\alpha}} &\le  \sum_{t=1}^T \eta^2   \Big(\frac{ \sum_{t=1}^T \eta^2   \E_{S,\A}[ F_S (\bw_t)] }{ \sum_{t=1}^T \eta^2   }   \Big)^{\frac{2\alpha}{1+\alpha}} = \Big(\sum_{t=1}^T \eta^2  \Big)^{\frac{1-\alpha}{1+\alpha}} \Big(   \sum_{t=1}^T \eta^2   \E_{S,\A}[ F_S (\bw_t)]   \Big)^{\frac{2\alpha}{1+\alpha}} \\
    &\le \big(T \eta^2  \big)^{\frac{1-\alpha}{1+\alpha}} \Big(   2\eta  \| \bw^{*} \|_2^2     +   
      6T\eta^3\sigma^2 d + 4T\eta^2 F(\bw^{*} )   + 3c_{\alpha,2}T\eta^{\frac{3-\alpha}{1-\alpha}}  \Big)^{\frac{2\alpha}{1+\alpha}}\\
      &=\O\Big( \big(T \eta^2  \big)^{\frac{1-\alpha}{1+\alpha}} \Big(  \eta      +   
      T\eta^3\sigma^2 d +  T\eta^2 F(\bw^{*} )   +  T\eta^{\frac{3-\alpha}{1-\alpha}}   \Big)^{\frac{2\alpha}{1+\alpha}} \Big).
\end{align*}
Dividing both sides by $\eta$, we get
\begin{align*}
    \sum_{t=1}^T \eta \big(\E_{S,\A}[ F_S (\bw_t)]\big)^{\frac{2\alpha}{1+\alpha}} =\O\Big(  T^{\frac{1-\alpha}{1+\alpha}} \eta^{\frac{1-3\alpha}{1+\alpha}} \Big(  \eta      +   
      T\eta^3\sigma^2 d +  T \eta^2 F(\bw^{*} )   +  T\eta^{\frac{3-\alpha}{1-\alpha}}  \Big)^{\frac{2\alpha}{1+\alpha}} \Big).
\end{align*}
Now, plugging the above two inequalities back into \eqref{eq:excess-nonsmooth-2}, we have
\begin{align} \label{eq:excess-nonsmooth-3}
      &\big( \sum_{t=1}^T \eta_t\big)^{-1} \sum_{t=1}^T \eta_t \E_{S,\A}[ F(\bw_t) - F_S(\bw_t) ]\nonumber\\
    &=\O\Bigg( \gamma^{\frac{1+\alpha }{\alpha-1}} + \gamma T \eta^{\frac{2}{1-\alpha}} +  \big(\gamma  T  \eta \big)^{-1}T^{\frac{1-\alpha}{1+\alpha}} \eta^{\frac{1-3\alpha}{1+\alpha}} \Big(  \eta      +   
      T\eta^3\sigma^2 d +  \eta^2 F(\bw^{*} )   +  T\eta^{\frac{3-\alpha}{1-\alpha}}   \Big)^{\frac{2\alpha}{1+\alpha}}
      \nonumber\\
      &\qquad + \gamma Tn^{-2} \big(T \eta^2  \big)^{\frac{1-\alpha}{1+\alpha}} \Big(  \eta      +   
      T\eta^3\sigma^2 d +  T\eta^2 F(\bw^{*} )   +  T\eta^{\frac{3-\alpha}{1-\alpha}}   \Big)^{\frac{2\alpha}{1+\alpha}}\nonumber\\
      &=\O\Bigg( \gamma^{\frac{1+\alpha }{\alpha-1}} + \gamma T \eta^{\frac{2}{1-\alpha}} +  \Big[\gamma^{-1}T^{\frac{-2\alpha}{1+\alpha}} \eta^{\frac{ -4\alpha}{1+\alpha}} + \gamma   n^{-2}   T^{\frac{2}{1+\alpha}} \eta^{\frac{2-2\alpha}{1+\alpha}}\Big] \Big(  \eta      +   
      T\eta^3\sigma^2 d +T  \eta^2 F(\bw^{*} )   +  T\eta^{\frac{3-\alpha}{1-\alpha}}  \Big)^{\frac{2\alpha}{1+\alpha}} \Bigg).
\end{align}
Part (b) in Theorem~\ref{thm:opt-point} with $\eta_t=\eta$ implies
\begin{align} \label{eq:excess-nonsmooth-4}
     &\big( \sum_{t=1}^T \eta \big)^{-1} \sum_{t=1}^T \eta  \E_{S,\A}[F_S(\bw_t ) - F\bw^{*} )]= \big( \sum_{t=1}^T \eta \big)^{-1} \sum_{t=1}^T \eta  \E_{S,\A}[F_S(\bw_t ) - F_S(\bw^{*} )]\nonumber\\ &=\O\Bigg( \frac{1}{T\eta} +   T^{\frac{-2\alpha }{1+\alpha}} \eta^{\frac{1-3\alpha}{1+\alpha}} \Big( \eta    +   
     T \eta^3\sigma^2d + T\eta^2  F(\bw^{*} )   +  T\eta^{\frac{3-\alpha}{1-\alpha}}  \big) \Big)^{\frac{2\alpha}{1+\alpha}}  + \eta \sigma^2 d\Bigg). 
\end{align}
Plugging \eqref{eq:excess-nonsmooth-3} and \eqref{eq:excess-nonsmooth-4} back into \eqref{eq:excess-nonsmooth-1} yields
\begin{align}\label{eq:excess-nonsmooth-5}
 &\E_{S,\A}[  F(\bw_{\priv})] - F(\bw^{*})  \nonumber\\
 &= \O\Bigg(   \Big(\gamma^{-1}T^{\frac{-2\alpha}{1+\alpha}} \eta^{\frac{ -4\alpha}{1+\alpha}} + \gamma   n^{-2}   T^{\frac{2}{1+\alpha}} \eta^{\frac{2-2\alpha}{1+\alpha}} + T^{\frac{-2\alpha }{1+\alpha}} \eta^{\frac{1-3\alpha}{1+\alpha}}\Big) \Big(  \eta      +   
      T\eta^3\sigma^2 d +  T\eta^2 F(\bw^{*} )   +  T\eta^{\frac{3-\alpha}{1-\alpha}} \big) \Big)^{\frac{2\alpha}{1+\alpha}} \nonumber\\
      &\qquad +  \gamma^{\frac{1+\alpha }{\alpha-1}} + \gamma T \eta^{\frac{2}{1-\alpha}} + \frac{1}{T\eta}  + \eta \sigma^2 d   \Bigg). 
      \end{align}

Now, we can prove part (a) by choosing suitable $\gamma$, $\eta$ and $T$. Let $\gamma=\sqrt{n}$ and $\eta=c\min\Big\{ \frac{1}{\sqrt{n}}, \frac{\epsilon}{\sqrt{d\log(1/\delta)}} \Big\}$. Recall that $\sigma^2 d=\O\Big( \frac{Td\log(1/\delta)}{n^2 \epsilon^2} \Big)$. Note we assume $\eta T\ge 1$. Then  
\[ \eta      +   
      T\eta^3\sigma^2d + T\eta^2+ T\eta^{\frac{3-\alpha}{1-\alpha}} =\O\Big( \frac{T^2\eta^3 d \log(1/\delta)}{n^2\epsilon^2} + T\eta^2 \Big)=\O\Big(  {T^2{n^{-2}}\eta} + T\eta^2 \Big).  \]
Combining the above equation with Eq.\eqref{eq:excess-nonsmooth-5}, we get
\begin{align*} 
 \E_{S,\A}[  F(\bw_{\priv})] - F(\bw^{*})    = \O\Bigg( & \Big(n^{-\frac{1}{2}}T^{\frac{-2\alpha}{1+\alpha}} \eta^{\frac{ -4\alpha}{1+\alpha}} +    n^{-\frac{3}{2}}   T^{\frac{2}{1+\alpha}} \eta^{\frac{2-2\alpha}{1+\alpha}} + T^{\frac{-2\alpha }{1+\alpha}} \eta^{\frac{1-3\alpha}{1+\alpha}}\Big) \Big( T^2{n^{-2}}\eta + T\eta^2 \Big)^{\frac{2\alpha}{1+\alpha}}  \nonumber\\
      &+  n^{\frac{1+\alpha }{2(\alpha-1)}} + \frac{1}{\sqrt{n}} +   \frac{\sqrt{d\log(1/\delta)}}{n\epsilon}  \Bigg),
\end{align*}
 If we further choose $T\asymp n$, then for any $\alpha \in[1/2,1)$ there holds
\begin{align*} 
 \E_{S,\A}[  F(\bw_{\priv})] - F(\bw^{*}) = \O\Bigg(   \frac{1}{\sqrt{n}}  + \frac{\sqrt{d\log(1/\delta)}}{n\epsilon}  \Bigg).
\end{align*}

 For the case $\alpha\in[0,1/2)$, let $\gamma=\sqrt{n}$ and $\eta=c \min\Big\{ n^{\frac{3(\alpha-1)}{2(1+\alpha)}}, \frac{\epsilon}{\sqrt{d\log(1/\delta)}} \Big\}\le \min\{2/L,  1\}$ for some constant $c>0$. Similar to the discussion of Part (a), this choice of $\eta$ implies \[ \eta      +   
      T\eta^3\sigma^2d+T\eta^2 +  T\eta^{\frac{3-\alpha}{1-\alpha}} =\O\Big( \frac{T^2\eta^3 d \log(1/\delta)}{n^2\epsilon^2} + T\eta^2\Big)=\O\Big( \frac{T^2\eta^2\sqrt{ d \log(1/\delta)}}{n ( n \epsilon )} + T\eta^2 \Big).  \]
Further setting $T\asymp n^{\frac{2-\alpha}{1+\alpha}}$, then combining  the above equation with Eq.\eqref{eq:excess-nonsmooth-5} implies 
\begin{align*} 
 \E_{S,\A}[  F(\bw_{\priv})]\!-\! F(\bw^{*})   & = \O\Bigg( \frac{n^{\frac{-5\alpha^2 + 4\alpha -3}{2(1+\alpha)^2}}  \sqrt{ d \log(1/\delta)}}{  n \epsilon  } + n^{\frac{1+\alpha}{2(\alpha-1)}}
       +   \frac{\sqrt{d\log(1/\delta)}}{n^{\frac{2-\alpha}{1+\alpha}}\epsilon} +\frac{1}{\sqrt{n}} + \frac{\sqrt{d\log(1/\delta)}}{n\epsilon}  \Bigg)\\
  & = \O\Bigg( \frac{1}{\sqrt{n}} 
      + \frac{\sqrt{d\log(1/\delta)}}{n\epsilon}  \Bigg),
\end{align*}
where the last equality used  $\alpha< 1/2$. The proof of part (a) is completed.

 Finally,  we consider the low noise case, i.e.,  $F(\bw^{*})=0$. Let  $\eta = c \min\Big\{n^{\frac{\alpha^2+ 2\alpha-3}{2(1+\alpha)}}, \frac{n\epsilon}{T\sqrt{d\log(1/\delta)}}\Big\}\le \min\{2/L, 1\} $. 
Then \eqref{eq:excess-nonsmooth-5} implies
\begin{align*}
 \E_{S,\A}[  F(\bw_{\priv})] - F(\bw^{*})& = \O\Bigg(   \Big(\gamma^{-1}T^{\frac{-2\alpha}{1+\alpha}} \eta^{\frac{ -4\alpha}{1+\alpha}} + \gamma   n^{-2}   T^{\frac{2}{1+\alpha}} \eta^{\frac{2-2\alpha}{1+\alpha}} + T^{\frac{-2\alpha }{1+\alpha}} \eta^{\frac{1-3\alpha}{1+\alpha}}\Big) \Big(  \eta      +   
      T\eta^3\sigma^2 d  + T\eta^{\frac{3-\alpha}{1-\alpha}} \big) \Big)^{\frac{2\alpha}{1+\alpha}} \nonumber\\
      &\qquad +  \gamma^{\frac{1+\alpha }{\alpha-1}} + \gamma T \eta^{\frac{2}{1-\alpha}} + \frac{1}{T\eta}  + \eta \sigma^2 d   \Bigg).
\end{align*}
Note for any $\alpha\in[0,1)$, there holds
\[ \eta      +   
      T\eta^3\sigma^2 d  +  T\eta^{\frac{3-\alpha}{1-\alpha}} =\O\Big( \eta \Big( 1 + \frac{T^2\eta^2 d \log(1/\delta)}{n^2\epsilon^2} \Big) \Big)=\O(\eta), \]
      where we used $T^2\eta^2=\O\big(n^2\epsilon^2/(d\log(1/\delta))\big)$. 
Further, if we choose $\gamma= n^{\frac{1-\alpha}{2}}$ and $T\asymp n^{\frac{2}{1+\alpha}}$,
there holds
\begin{align*}
 \E_{S,\A}[  F(\bw_{\priv})] - F(\bw^{*})    = \O\Big( \frac{1}{n^{\frac{1+\alpha}{2}}} + \frac{\sqrt{d \log(1/\delta)}}{n\epsilon} \Big),
\end{align*}
which completes the proof.
\end{proof}

\subsection{Proofs for  Pairwise Learning}\label{appendix-pairwise}
We now turn to the analysis of  DP-SGD for pairwise learning algorithm (i.e. Algorithm~\ref{alg2}) and provide the proofs for
Theorems \ref{thm:excess-pair} and \ref{thm:excess-nonsmooth-pair}. 

We start with the proof of Theorem~\ref{thm:dp-pairwise}.  Specifically, we first prove that each iteration $t$ of the algorithm satisfies RDP by applying  Lemma~\ref{lem:uniform-rdp} with sampling rate $2/p$. Then according to  Lemma~\ref{lem:composition_RDP}  and  Lemma~\ref{lemma:RDP_to_DP}, we can show that the proposed algorithm satisfies $(\epsilon,\delta)$-DP. The detailed proof is shown as follows. 
\begin{proof}[Proof of Theorem~\ref{thm:dp-pairwise}]\label{proof:pairwise-DP}
For each $t\in[T]$, we consider  the mechanism  $\mathcal{A}_t=\mathcal{M}_t+\bb_t$, where $\mathcal{M}_t=\partial f(\bw_t;z_{i_t}, z_{j_t})$. Similar to before, we can show that the $\ell_2$-sensitivity of $\mathcal{M}_t$ is 2G by using Lipschitz continuity of $f$. Notice  that
\[ \sigma^2=\frac{56G^2T}{\beta n^2 \epsilon}\Big( \frac{\log(1/\delta)}{(1-\beta)\epsilon}+1 \Big). \]
Note that  $z_{i_t}$ and $z_{j_t}$ are drawn  uniformly without replacement from the training set $S$. Then
according to Lemma~\ref{lem:uniform-rdp} with $p=2/n$, we know $\mathcal{\A}_t$ satisfies  $\Big(\lambda, \frac{\lambda \beta \epsilon}{T\big( \frac{\log(1/\delta)}{(1-\beta)\epsilon} +1\big)} \Big)$-RDP as long as $\sigma^2\ge 2.68G^2$ and $\lambda-1\le \frac{\sigma^2}{6G^2}\log\Big( \frac{n}{2\lambda\big(1+\frac{\sigma^2}{4G^2}\big)} \Big)$ hold. 
Now, 
let $\lambda=\frac{\log(1/\delta)}{(1-\beta)\epsilon}+1$. Then we get  $\A_t$ satisfies  $\Big(\frac{\log(1/\delta)}{(1-\beta)\epsilon}+1,\frac{\beta\epsilon}{T}\Big)$-RDP. According to   Lemma~\ref{lemma:post-processing} and Lemma~\ref{lem:composition_RDP}, we can show that Algorithm~\ref{alg2} is $\Big(\frac{\log(1/\delta)}{(1-\beta)\epsilon}+1, \beta\epsilon \Big)$-RDP. Finally,  Lemma~\ref{lemma:RDP_to_DP}  implies Algorithm~\ref{alg2} is $(\epsilon,\delta)$-DP if $\sigma^2\ge 2.68G^2$ and $\lambda-1\le \frac{\sigma^2}{6G^2}\log\Big( \frac{n}{\lambda\big(1+\frac{\sigma^2}{4G^2}\big)} \Big)$ hold. The proof is completed.
\end{proof}

To establish the generalization analysis of Algorithm~\ref{alg2}, we first introduce the connection between stability and generalization error in the following lemma.    
\begin{lemma}[Generalization via stability for pairwise learning]\label{lem:gen-pair}
Let $\A$ be on-average $\nu$-argument stable. Let $\gamma>0$. 
\begin{enumerate}[label=({\alph*})]
\item If $f$ is nonnegative and $L$-smooth, then
\[ \E_{S,\A}[\bar{F}(\A(S))-\bar{F}_S(\A(S))]\le \frac{L}{\gamma}\E_{S,\A}[ \bar{F}_S(\A(S)) ] +  2(L+\gamma)\nu .  \]
\item If $f$ is nonnegative, convex and $\alpha$-H\"older smooth with parameter $L$ and $\alpha\in[0,1)$, then 
\[ \E_{S,\A}[\bar{F}(\A(S))-\bar{F}_S(\A(S))]\le \frac{c_{\alpha,1}^2}{2\gamma  }   \E_{S,\A}\big[ \bar{F}^{\frac{2\alpha}{1+\alpha}}(\A(S))\big] +  2\gamma \nu   . \]
\end{enumerate}
\end{lemma}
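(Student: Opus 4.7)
The plan is to use a symmetrization identity to rewrite the generalization error in terms of differences $f(\A(\tilde{S}^{(i,j)});z_i,z_j) - f(\A(S);z_i,z_j)$, where $\tilde{S}^{(i,j)}$ denotes the dataset obtained from $S$ by replacing both $z_i$ and $z_j$ with independent copies $z_i',z_j'$ drawn from an independent ghost sample $S'=(z_1',\ldots,z_n')$. The symmetrization identity
\begin{equation*}
\E_{S,\A}[\bar{F}(\A(S))] = \E_{S,S',\A}\Big[\frac{1}{n(n-1)}\sum_{i\neq j} f(\A(\tilde{S}^{(i,j)});z_i,z_j)\Big]
\end{equation*}
follows by a joint relabeling of $(z_i,z_j)\leftrightarrow(z_i',z_j')$, after which subtracting $\bar{F}_S(\A(S))=\frac{1}{n(n-1)}\sum_{i\neq j}f(\A(S);z_i,z_j)$ expresses the generalization error as the average over $i\neq j$ of the above differences.

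The next step is to bound each such difference by a function-value term plus a squared-distance term. For Part~(a), expand by $L$-smoothness around $\bw_2=\A(S)$, use the self-bounding property (Lemma~\ref{lem:self-bounding}) $\|\partial f(\A(S);z_i,z_j)\|_2\le\sqrt{2Lf(\A(S);z_i,z_j)}$, and apply Young's inequality $ab\le a^2/(2\gamma)+\gamma b^2/2$ to get
\begin{equation*}
f(\A(\tilde{S}^{(i,j)});z_i,z_j)-f(\A(S);z_i,z_j)\le \frac{L}{\gamma}f(\A(S);z_i,z_j)+\frac{L+\gamma}{2}\|\A(\tilde{S}^{(i,j)})-\A(S)\|_2^2.
\end{equation*}
For Part~(b), start from convexity $f(\bw_1)-f(\bw_2)\le\langle\partial f(\bw_1),\bw_1-\bw_2\rangle$, apply the $\alpha$-H\"older self-bounding inequality $\|\partial f(\bw_1;z,z')\|_2\le c_{\alpha,1}f^{\alpha/(1+\alpha)}(\bw_1;z,z')$, then Young's inequality, yielding
\begin{equation*}
f(\A(\tilde{S}^{(i,j)});z_i,z_j)-f(\A(S);z_i,z_j)\le \frac{c_{\alpha,1}^2}{2\gamma}f^{\frac{2\alpha}{1+\alpha}}(\A(\tilde{S}^{(i,j)});z_i,z_j)+\frac{\gamma}{2}\|\A(\tilde{S}^{(i,j)})-\A(S)\|_2^2.
\end{equation*}
Summing and averaging, the function-value term in Part~(a) is exactly $\frac{L}{\gamma}\E[\bar{F}_S(\A(S))]$; in Part~(b), the fact that $\tilde{S}^{(i,j)}$ has the same law as $S$ combined with Jensen's inequality on the concave map $x\mapsto x^{2\alpha/(1+\alpha)}$ yields $\frac{c_{\alpha,1}^2}{2\gamma}\E[\bar{F}^{2\alpha/(1+\alpha)}(\A(S))]$.

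The main obstacle is controlling the squared-distance term, since on-average argument stability (Definition~\ref{def:avg-stability}) only concerns single-coordinate replacements while $\tilde{S}^{(i,j)}$ differs from $S$ in two coordinates. The plan is to split via the triangle inequality and the bound $(a+b)^2\le 2a^2+2b^2$:
\begin{equation*}
\|\A(\tilde{S}^{(i,j)})-\A(S)\|_2^2\le 2\|\A(\tilde{S}^{(i,j)})-\A(S^{(j)})\|_2^2+2\|\A(S^{(j)})-\A(S)\|_2^2.
\end{equation*}
Observe that $\tilde{S}^{(i,j)}=(S^{(j)})^{(i)}$ is a single-coordinate perturbation of $S^{(j)}$, and $S^{(j)}$ is distributionally identical to $S$. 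Hence, after taking expectation and averaging over $i\neq j$, applying Definition~\ref{def:avg-stability} once to each term bounds the average squared distance by $4\nu$ (up to a harmless $n/(n-1)$ factor). Substituting, Part~(a) gives $\frac{L+\gamma}{2}\cdot 4\nu=2(L+\gamma)\nu$ and Part~(b) gives $\frac{\gamma}{2}\cdot 4\nu=2\gamma\nu$, matching the claimed bounds. The bulk of the bookkeeping will go into this two-to-one coordinate reduction and verifying the symmetry arguments under the pairwise structure.
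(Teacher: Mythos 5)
Your proof is correct and follows essentially the same route as the paper: symmetrization via the double-replacement set $S^{(i,j)}$, convexity/smoothness plus the self-bounding property and Young's inequality to isolate a function-value term and a squared-distance term, then a triangle-inequality split of the two-coordinate perturbation into two single-coordinate ones and a distributional-invariance argument to invoke on-average stability. (The paper only proves Part~(b), citing prior work for Part~(a), and chooses $S^{(i)}$ rather than $S^{(j)}$ as the intermediate point in the split; these are cosmetic differences.)
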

\begin{proof} 
 Part (a) was established in \cite{lei2021generalization}. 
 We only consider Part (b). Recall that $S=\{z_1,\ldots,z_n\}$ and $S'=\{z_1',\ldots,z_n'\}$ are drawn independently from $\rho$. For any $i\in[n]$, denote $S^{(i)}=\{ z_1,\ldots,z_{i-1},z_i',z_{i+1},\ldots,z_n \}$. Further, let \[S^{(i,j)}=\{z_1,\ldots,z_{i-1},z'_i,z_{i+1},\ldots, z_{j-1},z'_j,z_{j+1}, \ldots,z_n\}.\]  According to the symmetry between $z_i,z_j$ and $z'_i,z'_j$, we have
 \begin{align}\label{eq:gen-pair-1}
     &\E_{S,S',\A}[\bar{F}(\A(S))-\bar{F}_S(\A(S))]\nonumber\\
     &=\frac{1}{n(n-1)} \sum_{i,j\in[n]:i\neq j}    \E_{S,S',\A}[\bar{F}(\A(S^{(i,j)}))-\bar{F}_S(\A(S))]\nonumber\\
     &=\frac{1}{n(n-1)} \sum_{i,j\in[n]:i\neq j}    \E_{S,S',\A}[f(\A(S^{(i,j)};z_i,z_j))-f(\A(S);z_i,z_j)]\nonumber\\
     &\le \frac{1}{n(n-1)} \sum_{i,j\in[n]:i\neq j}    \E_{S,S',\A}[ \langle \partial f(\A(S^{(i,j)};z_i,z_j)), \A(S^{(i,j)})-\A(S) \rangle ],
 \end{align}
 where in the second equality we used $\E_{z_i,z_j}[ f(\A(S^{(i,j)});z_i,z_j) ]=\bar{F}(\A(S^{(i,j)}))$ since $z_i,z_j$ are independent of $\A(S^{(i,j)})$, and in the last inequality we used the convexity of $f$.  
 
 By the Schwartz’s inequality and self-bounding property (Lemma~\ref{lem:self-bounding}) we know
 \begin{align*}
     &\langle \partial f(\A(S^{(i,j)};z_i,z_j)), \A(S^{(i,j)})-\A(S) \rangle\\ &\le  \frac{1}{2\gamma}\|\partial f(\A(S^{(i,j)};z_i,z_j))\|_2^2+\frac{\gamma}{2}\| \A(S^{(i,j)})-\A(S) \|_2^2 \\
     &\le   \frac{c_{\alpha,1}^2}{2\gamma} f^{\frac{2\alpha}{1+\alpha}}( \A(S^{(i,j)};z_i,z_j) +  \gamma \| \A(S^{(i,j)})-\A(S^i) \|_2^2 + \gamma \| \A(S^{(i)})-\A(S) \|_2^2 .
 \end{align*}
 Plugging the above inequality back into Eq.\eqref{eq:gen-pair-1} we get
 \begin{align*} 
     &\E_{S,S',\A}[\bar{F}(\A(S))-\bar{F}_S(\A(S))]\nonumber\\
     &\le \frac{1}{n(n-1)} \sum_{i,j\in[n]:i\neq j}    \E_{S,S',\A}\Big[  \frac{c_{\alpha,1}^2}{2\gamma} f^{\frac{2\alpha}{1+\alpha}}( \A(S^{(i,j)};z_i,z_j) +\gamma \| \A(S^{(i,j)})-\A(S^i) \|_2^2 + \gamma \| \A(S^{(i)})-\A(S) \|_2^2  \Big]\\
     &=  \frac{c_{\alpha,1}^2}{2\gamma n(n-1)} \sum_{i,j\in[n]:i\neq j}    \E_{S,S',\A}\big[  f^{\frac{2\alpha}{1+\alpha}}( \A(S^{(i,j)};z_i,z_j) \big] + \frac{2\gamma}{n(n-1)} \sum_{i,j\in[n]:i\neq j}    \E_{S,S',\A}\big[  \| \A(S^{(i )})-\A(S ) \|_2^2  \big] ,
 \end{align*}
 where the last equality is due to $\E_{S,S',\A}\big[  \| \A(S^{(i,j )})-\A(S^i ) \|_2^2  \big]=\E_{S,S',\A}\big[  \| \A(S^{(j )})-\A(S ) \|_2^2  \big]$. 
 
Since $x\mapsto x^{\frac{2\alpha}{1+\alpha}}$ is concave and $z_i,z_j$ are independent of $\A(S^{(i,j)})$, we know
\begin{align*}
    \E_{S,S',\A}\big[  f^{\frac{2\alpha}{1+\alpha}}( \A(S^{(i,j)};z_i,z_j) \big]&\le \E_{S,S',\A}\big[ \big( \E_{z_i,z_j}[ f( \A(S^{(i,j)};z_i,z_j) ]\big)^{\frac{2\alpha}{1+\alpha}}\big] \\
    &=\E_{S,\A}\big[ \bar{F}^{\frac{2\alpha}{1+\alpha}}(\A(S))\big] .
\end{align*}
Combining the above two inequalities together implies
 \begin{align*} 
     &\E_{S,S',\A}[\bar{F}(\A(S))-\bar{F}_S(\A(S))]\nonumber\\
     &=  \frac{c_{\alpha,1}^2}{2\gamma n(n-1)} \sum_{i,j\in[n]:i\neq j}    \E_{S,\A}\big[ \bar{F}^{\frac{2\alpha}{1+\alpha}}(\A(S))\big] + \frac{2\gamma}{n(n-1)} \sum_{i,j\in[n]:i\neq j}    \E_{S,S',\A}[  \| \A(S^{(i )})-\A(S ) \|_2^2  ] \\
     &= \frac{c_{\alpha,1}^2}{2\gamma  }   \E_{S,\A}\big[ \bar{F}^{\frac{2\alpha}{1+\alpha}}(\A(S))\big] + \frac{2\gamma}{n } \sum_{i=1 }^n    \E_{S,S',\A}[  \| \A(S^{(i )})-\A(S ) \|_2^2  ] . 
 \end{align*}
 The proof of Part (b) is completed. 
\end{proof}

Our stability analysis for $\alpha$-H\"older smooth losses requires the following lemma,  which shows the approximately non-expansive behavior of the gradient mapping $\bw\mapsto \bw-\eta\partial f(\bw;z,z')$.

\begin{lemma}[\cite{lei2020fine}]\label{lem:non-expensive}
Assume for all $z,z'\in\Z$, the map $\bw\mapsto f(\bw;z,z')$ is convex, and $\bw \mapsto \partial f(\bw;z,z')$ is $\alpha$-H\"older smooth with parameter $L$ and $\alpha\in[0,1)$. Then for all $\bw,\bw'$ and $\eta >0$ we have
\[ \|\bw-\eta \partial f(\bw;z,z')-\bw'+\eta \partial f(\bw';z,z')\|_2^2\le \|\bw-\bw'\|_2^2+c_{\alpha,3}^2\eta^{\frac{2}{1-\alpha}}. \]
\end{lemma}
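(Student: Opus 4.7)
The plan is to expand the squared norm on the left-hand side as
$$\|\bw-\bw' - \eta(\partial f(\bw;z,z')-\partial f(\bw';z,z'))\|_2^2 = \|\bw-\bw'\|_2^2 - 2\eta\langle v, u\rangle + \eta^2\|v\|_2^2,$$
with $u:=\bw-\bw'$ and $v:=\partial f(\bw;z,z')-\partial f(\bw';z,z')$, and then control the remaining two terms. A naive bound that uses only convexity ($\langle v,u\rangle \ge 0$) together with H\"older smoothness ($\|v\|_2\le L\|u\|_2^\alpha$) and Young's inequality would introduce an unwanted prefactor $(1+\alpha)$ in front of $\|u\|_2^2$; to get a clean coefficient $1$ I would instead exploit the cross term via a refined co-coercivity estimate, letting the negative $-2\eta\langle v,u\rangle$ absorb the $\eta^2\|v\|_2^2$ contribution up to a residual of order $\eta^{2/(1-\alpha)}$.

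The key intermediate result I would establish is: for any convex $\alpha$-H\"older smooth $h$ with parameter $L$,
$$\langle \partial h(\bw)-\partial h(\bw'),\bw-\bw'\rangle \ge \frac{2\alpha}{1+\alpha}L^{-1/\alpha}\|\partial h(\bw)-\partial h(\bw')\|_2^{(1+\alpha)/\alpha}.$$
To prove this, introduce the shifted convex function $g(\bx):=h(\bx)-h(\bw')-\langle \partial h(\bw'),\bx-\bw'\rangle$, which satisfies $g\ge 0$ and attains its minimum at $\bw'$, and whose gradient inherits the H\"older smoothness of $\partial h$. Apply the standard H\"older descent inequality $g(\bx)\le g(\bw)+\langle\partial g(\bw),\bx-\bw\rangle+\frac{L}{1+\alpha}\|\bx-\bw\|_2^{1+\alpha}$ and minimize the right-hand side over $\bx$ along the anti-gradient direction at $\bw$ (the one-dimensional minimum is attained at step size $(\|\partial g(\bw)\|_2/L)^{1/\alpha}$). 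Combining with $g(\bx)\ge 0$ yields $g(\bw)\ge \frac{\alpha}{1+\alpha}L^{-1/\alpha}\|\partial g(\bw)\|_2^{(1+\alpha)/\alpha}$; symmetrizing in $\bw,\bw'$ and summing the two resulting inequalities gives the displayed co-coercivity, because the two linear terms in $h$ combine into the inner product on the left.

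With this in hand, the rest reduces to a one-dimensional maximization. Plugging the co-coercivity into $\eta^2\|v\|_2^2-2\eta\langle v,u\rangle$ leaves $\phi(r):=\eta^2r^2-\frac{4\alpha\eta}{(1+\alpha)L^{1/\alpha}}r^{(1+\alpha)/\alpha}$ to be bounded for $r=\|v\|_2\ge 0$. Setting $\phi'(r)=0$ gives the critical point $r^\star=(\eta L^{1/\alpha}/2)^{\alpha/(1-\alpha)}$, and direct substitution shows $\phi(r^\star)=\frac{1-\alpha}{1+\alpha}\,2^{-2\alpha/(1-\alpha)}\,L^{2/(1-\alpha)}\,\eta^{2/(1-\alpha)}=c_{\alpha,3}^2\,\eta^{2/(1-\alpha)}$, which matches the target constant exactly. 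The main obstacle is the derivation of the refined co-coercivity: both the choice of the auxiliary $g$ (so that $g\ge 0$ can be invoked) and the sharp optimization of the descent step size are essential; once that inequality is in hand, the constants in the final bound fall out automatically from one-variable calculus.
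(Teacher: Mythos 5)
Your proof is correct and is essentially the argument underlying the cited reference \cite{lei2020fine}; the paper itself imports the lemma by citation and does not reprove it, so there is no internal proof to diverge from. The key co-coercivity inequality
\[
\langle \partial h(\bw)-\partial h(\bw'),\bw-\bw'\rangle \ge \frac{2\alpha}{1+\alpha}L^{-1/\alpha}\|\partial h(\bw)-\partial h(\bw')\|_2^{(1+\alpha)/\alpha}
\]
is exactly the H\"older-smooth generalization of the standard co-coercivity of smooth convex functions, and your derivation via the shifted nonnegative function $g(\bx)=h(\bx)-h(\bw')-\langle\partial h(\bw'),\bx-\bw'\rangle$, the H\"older descent inequality, and minimization along $-\partial g(\bw)$ with step size $(\|\partial g(\bw)\|_2/L)^{1/\alpha}$ is the standard way to establish it. I checked the algebra in the final one-variable maximization: the critical point $r^\star=(\eta L^{1/\alpha}/2)^{\alpha/(1-\alpha)}$ and the value
\[
\phi(r^\star)=\frac{1-\alpha}{1+\alpha}\,2^{-2\alpha/(1-\alpha)}\,L^{2/(1-\alpha)}\,\eta^{2/(1-\alpha)}=c_{\alpha,3}^2\,\eta^{2/(1-\alpha)}
\]
are both correct, and $r^\star$ is indeed the global maximizer of $\phi$ on $[0,\infty)$ since $(1+\alpha)/\alpha>2$.

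One small completeness issue: the lemma is stated for $\alpha\in[0,1)$, but both your co-coercivity bound (through the factor $L^{-1/\alpha}$ and the exponent $(1+\alpha)/\alpha$) and the critical-point computation (through $r^{1/\alpha}$) implicitly require $\alpha>0$. The endpoint $\alpha=0$ should be dispatched separately, which is immediate: convexity gives $\langle v,u\rangle\ge 0$ and $0$-H\"older smoothness gives $\|v\|_2\le L$, so $\|u-\eta v\|_2^2\le\|u\|_2^2+\eta^2 L^2$, and indeed $c_{0,3}^2=\frac{1-0}{1+0}(2^0 L)^{2}=L^2$, matching the claimed constant exactly. With that one-line addendum the proof fully covers the stated range of $\alpha$.

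Finally, a note unrelated to your argument but relevant to reading the paper: the constant $c_{\alpha,3}$ is defined inconsistently in the source — Lemma~\ref{lem:stability-point} gives $c_{\alpha,3}=\sqrt{\tfrac{1-\alpha}{1+\alpha}}(2^{-\alpha}L)^{1/(1-\alpha)}$, while Lemma~\ref{lem:stability-pairwise} gives the same with an extra factor $\sqrt{e}$. Your computation confirms that the version without the $\sqrt{e}$ is the one for which the present lemma holds (the $e$ factor is introduced separately in the stability recursions via $(1+1/t)^t\le e$), so the pairwise definition appears to be a typo.
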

 
As discussed in Section~\ref{appendix-pointwise}, adding noise to gradient will not impact stability results. Hence,   we  only need to address  the on-average stability bounds of non-private SGD for pairwise learning. 
 \begin{lemma}[Stability bounds]\label{lem:stability-pairwise}
 Suppose $f$ is nonnegative and convex.  Let $S,S' $ and $S^{(i)}$ be constructed as Definition~\ref{def:avg-stability}. Let $\{\bw_{t}\}$ and $\{\bw_t^{(i)}\}$ be produced by Algorithm~\ref{alg2} based on $S$ and $S^{(i)}$, respectively.
 \begin{enumerate}[label=({\alph*})]
\item If $f$ is $L$-smooth and $\eta_t\le 2/L$ for all $t\in[T]$, then
     \[  \E_{S,S',\A}\Big[\frac{1}{n}\sum_{i=1}^n\| \bw_{t+1}-\bw_{t+1}^{(i)} \|_2^2\Big]\le \frac{16L(1+2t/n)e}{n} \sum_{j=1}^t  \eta_j^2 \E_{S,\A}[ F_S(\bw_j) ].  \]
\item If $f$ is $\alpha$-H\"older smooth with parameter $L$ and $\alpha\in[0,1)$, then
\begin{align*}
    \E_{S,S',\A}\Big[\frac{1}{n}\sum_{i=1}^n\| \bw_{t+1}-\bw_{t+1}^{(i)} \|_2^2\Big]\le \frac{8ec_{\alpha,1}^2(1+2t/n) }{n}\sum_{j=1}^t \eta_j^2\E_{S,\A}\Big[ F_S^{\frac{2\alpha}{1+\alpha}} (\bw_j)\Big] \Big) + c_{\alpha,3}^2e \sum_{j=1}^t \eta_j^{\frac{2}{1-\alpha}} ,
\end{align*}
where $c_{\alpha,3}= \sqrt{ \frac{e(1-\alpha)}{1+\alpha}}(2^{-\alpha}L)^{\frac{1}{1-\alpha}}$. 
 \end{enumerate}

 \end{lemma}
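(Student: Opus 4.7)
The plan is to mirror the pointwise stability analysis (Lemma~\ref{lem:stability-point}) but account for the fact that, in the pairwise algorithm, an iteration differs between the trajectories on $S$ and on $S^{(i)}$ whenever \emph{either} sampled index $i_t$ or $j_t$ equals $i$. Since $(i_t,j_t)$ is uniform over ordered pairs with distinct entries, this event has probability $2/n$ rather than $1/n$, which will ultimately replace the $(1+t/n)$ factor in the pointwise case by $(1+2t/n)$. A first key observation is that the Gaussian noise $\bb_t$ is independent of the data and can be coupled to be identical on the two trajectories; hence $\bb_t$ cancels in $\bw_{t+1}-\bw_{t+1}^{(i)}$ and the stability analysis reduces to that of the noiseless SGD update, so we can simply track $\bw_{t+1}-\eta_t\partial f(\bw_t;z_{i_t},z_{j_t})$ versus $\bw_{t+1}^{(i)}-\eta_t\partial f(\bw_t^{(i)};\tilde z_{i_t},\tilde z_{j_t})$ through the non-expansive projection $\proj_\W$.

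For Part (a), I would condition on the pair $(i_t,j_t)$ and split into two cases. In the ``good'' case $i_t\neq i$ and $j_t\neq i$ (probability $1-2/n$), both updates use the same sample, and convexity plus $L$-smoothness with $\eta_t\le 2/L$ gives the standard co-coercivity/non-expansiveness bound $\|\bw_{t+1}-\bw_{t+1}^{(i)}\|_2^2\le \|\bw_t-\bw_t^{(i)}\|_2^2$. In the ``bad'' case (probability $2/n$), I would expand via $(a+b)^2\le (1+p)a^2+(1+1/p)b^2$ with $p=1/t$ or $p$ comparable to $1/n$, keep the term $\|\bw_t-\bw_t^{(i)}\|_2^2$ with multiplier $(1+p)$, and bound the perturbation $\eta_t^2\|\partial f(\bw_t;\cdot)-\partial f(\bw_t^{(i)};\cdot)\|_2^2$ by $\eta_t^2$ times the self-bounding estimate $\|\partial f(\bw;z,z')\|_2^2\le 2Lf(\bw;z,z')$ (Lemma~\ref{lem:self-bounding}). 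Taking expectation over $(i_t,j_t)$ and averaging over $i$, the cross term gives the $F_S$-dependence, and unrolling the recursion with $(1+1/t)^t\le e$ produces the factor $e$, while the $2/n$ gives the $(1+2t/n)$. Choosing $p$ carefully and summing yields the stated bound.

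For Part (b), the same splitting into good and bad cases applies, but in the good case I would invoke Lemma~\ref{lem:non-expensive}, which replaces non-expansiveness by $\|\cdot\|_2^2\le\|\bw_t-\bw_t^{(i)}\|_2^2+c_{\alpha,3}^2\eta_t^{2/(1-\alpha)}$. Iterating this contributes exactly the additive term $c_{\alpha,3}^2 e \sum_{j=1}^t\eta_j^{2/(1-\alpha)}$. In the bad case I would again expand with $(1+p)/(1+1/p)$ and use the H\"older version of the self-bounding property, $\|\partial f(\bw;z,z')\|_2\le c_{\alpha,1}f^{\alpha/(1+\alpha)}(\bw;z,z')$, so that the perturbation contributes $\eta_j^2 F_S^{2\alpha/(1+\alpha)}(\bw_j)$. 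Averaging over $i$ and unrolling the recursion then yields the two summands in the bound.

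The main obstacle will be bookkeeping in the bad case: one must choose the auxiliary parameter $p$ in $(a+b)^2\le(1+p)a^2+(1+1/p)b^2$ so that the recurrence coefficient $(1+p)(1-2/n)+(1+p)(2/n)\cdot(\text{extra factor})$ exponentiates to a constant multiple of $e$ while leaving a clean $(1+2t/n)/n$ prefactor in front of the data-dependent term. The $\alpha$-H\"older case adds the complication that the self-bounding produces $f^{2\alpha/(1+\alpha)}$ rather than $f$, so one must track this exponent throughout the unrolling without losing the concavity step that eventually lets the expectation of $F_S^{2\alpha/(1+\alpha)}(\bw_j)$ be bounded as stated; everything else reduces to algebraic manipulations already used in the pointwise proof.
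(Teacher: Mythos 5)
Your proposal follows essentially the same route as the paper: split on whether the sampled indices $i_t,j_t$ hit the perturbed coordinate $i$ (probability $2/n$), use Lemma~\ref{lem:non-expensive} in the good case and the $(a+b)^2\le(1+p)a^2+(1+1/p)b^2$ expansion plus the self-bounding property in the bad case, then average over $i$, exploit the $z_i\leftrightarrow z_i'$ symmetry to write everything in terms of $F_S(\bw_j)$, and unroll the recursion. The one concrete slip is your suggested choice of the auxiliary parameter: $p=1/t$ gives $(1+1/p)=1+t$, and $p\sim 1/n$ gives $(1+1/p)\sim n$, neither of which produces the $(1+2t/n)$ prefactor; the paper uses $p=n/(2t)$, for which the combined recurrence coefficient $1+2p/n=1+1/t$ exponentiates to $e$ and $1+1/p=1+2t/n$ is exactly the desired factor. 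Since you correctly identify the target quantity, this is a bookkeeping detail rather than a gap in the argument.
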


\begin{proof} 
The proof of part (a) can be found in \cite{lei2021generalization}. We only give the proof of part (b).
 For any $i\in[n]$, let $S,S^{(i)}$ and $S'$ be   constructed as Definition~\ref{def:avg-stability}. For any $S$ and $i\in[n]$, we consider the following three cases.
 
 \noindent\textbf{Case 1.} If $i_t \neq i$ and $j_t\neq i$, it then follows from the update rule of $\bw_{t+1}$ and Lemma~\ref{lem:non-expensive} that
 \begin{align*}
     \|\bw_{t+1 }-\bw^{(i)}_{t+1}\|_2^2&\le  \|\bw_{t } -\eta _t \partial f(\bw_{t};z_{i_t},z_{j_t})-\bw^{(i)}_{t}+ \eta _t \partial f(\bw^{(i)}_{t};z_{i_t},z_{j_t})\|_2^2\\
     &\le  \|\bw_{t }-\bw^{(i)}_{t} \|_2^2 +c_{\alpha,3}^2\eta_t^{\frac{2}{\alpha-1}}.
 \end{align*}

\noindent\textbf{Case 2.} If $i_t = i$, it then follows from the update rule and the standard inequality $(a+b)^2\le (1+p)a^2+(1+1/p)b^2$  that
  \begin{align*}
       \|\bw_{t+1 }-\bw^{(i)}_{t+1}\|_2^2
       &\le (1+p) \|\bw_{t }   -\bw^{(i)}_{t}\|_2^2 + (1+1/p)\eta_t^2\big(\| \partial f(\bw_{t};z_{i },z_{j_t})-  \partial f(\bw^{(i)}_{t};z'_{i },z_{j_t})\|_2^2\big)\\
       &\le (1+p) \|\bw_{t }  -\bw^{(i)}_{t}\|_2^2 + 2(1+1/p)\eta_t^2\big(\| \partial f(\bw_{t};z_{i },z_{j_t})\|_2^2+\|  \partial f(\bw^{(i)}_{t};z'_{i },z_{j_t})\|_2^2\big)\\
       &\le (1+p) \|\bw_{t }  -\bw^{(i)}_{t}\|_2^2 + 2c_{\alpha,1}^2(1+1/p)\eta_t^2\big(  f^{\frac{2\alpha}{1+\alpha}}(\bw_{t};z_{i },z_{j_t})+  f^{\frac{2\alpha}{1+\alpha}}(\bw^{(i)}_{t};z'_{i },z_{j_t}) \big). 
  \end{align*}

\noindent\textbf{Case 3.} If $j_t = i$, similar to Case 2, we have
  \begin{align*}
       \|\bw_{t+1 }-\bw^{(i)}_{t+1}\|_2^2
       &\le (1+p) \|\bw_{t }  -\bw^{(i)}_{t }\|_2^2 + 2c_{\alpha,1}^2(1+1/p)\eta_t^2\big(  f^{\frac{2\alpha}{1+\alpha}}(\bw_{t};z_{i_t },z_{i})+  f^{\frac{2\alpha}{1+\alpha}}(\bw^{(i)}_{t};z_{i_t },z'_{i}) \big).
  \end{align*} 
Note $\textbf{Pr}(i_t\neq i \text{ and } j_t\neq i)=\frac{(n-1)(n-2)}{n(n-1)}$ and $\textbf{Pr}(i_t= i \text{ and } j_t= j )=\frac{1}{n(n-1)}$  for any $j\neq i$. We can combine the above three cases together and get
  \begin{align*}
       &\E_{i_t,j_t}\big[\|\bw_{t+1 }-\bw^{(i)}_{t+1}\|_2^2
       \big]\nonumber\\
       &\le \frac{(n-1)(n-2)}{n(n-1)} \Big(  \|\bw_{t }-\bw^{(i)}_{t} \|_2^2 +c_{\alpha,3}^2\eta_t^{\frac{2}{\alpha-1}} \Big)\nonumber\\
       &\quad +  \frac{1}{n(n-1)}\sum_{j\in[n]: j\neq i}\Big( (1+p) \|\bw_{t }  -\bw^{(i)}_{t}\|_2^2 + 2c_{\alpha,1}^2(1+1/p)\eta_t^2\big(  f^{\frac{2\alpha}{1+\alpha}}(\bw_{t};z_{i },z_{j })+  f^{\frac{2\alpha}{1+\alpha}}(\bw^{(i)}_{t};z'_{i },z_{j }) \big)\Big)\nonumber\\
       &\quad + \frac{1}{n(n-1)}\sum_{j\in[n]: j\neq i} \Big((1+p) \|\bw_{t }  -\bw^{(i)}_{t }\|_2^2 + 2c_{\alpha,1}^2(1+1/p)\eta_t^2\big(  f^{\frac{2\alpha}{1+\alpha}}(\bw_{t};z_{j },z_{i})+  f^{\frac{2\alpha}{1+\alpha}}(\bw^{(i)}_{t};z_{j},z'_{i}) \big)\Big)\nonumber\\
       &\le \Big(1+\frac{2p}{n}\Big)  \|\bw_{t }-\bw^{(i)}_{t} \|_2^2 + c_{\alpha,3}^2\eta_t^{\frac{2}{\alpha-1}}   + \frac{2(1+1/p)c^2_{\alpha,1}\eta _t^2}{n(n-1)} \sum_{j\in[n]: j\neq i}\Big[ f^{\frac{2\alpha}{1+\alpha}}(\bw_{t};z_{i },z_{j})+  f^{\frac{2\alpha}{1+\alpha}}(\bw^{(i)}_{t};z'_{i },z_{j})\nonumber\\
       &\quad +   f^{\frac{2\alpha}{1+\alpha}}(\bw_{t};z_{j },z_{i})+  f^{\frac{2\alpha}{1+\alpha}}(\bw^{(i)}_{t};z_{j },z'_{i})\Big].
  \end{align*} 
  Taking an average over  $i$ we have
    \begin{align*}
       &\frac{1}{n}\sum_{i=1}^n\E_{i_t,j_t}\big[\|\bw_{t+1 }-\bw^{(i)}_{t+1}\|_2^2
       \big]\nonumber\\
       &\le \Big(1+\frac{2p}{n}\Big) \frac{1}{n}\sum_{i=1}^n \|\bw_{t }-\bw^{(i)}_{t} \|_2^2  + c_{\alpha,3}^2\eta_t^{\frac{2}{\alpha-1}}   + \frac{2(1+1/p)c^2_{\alpha,1}\eta _t^2}{n^2(n-1)}  \sum_{i=1}^n \sum_{j\in[n]: j\neq i}\Big[ f^{\frac{2\alpha}{1+\alpha}}(\bw_{t};z_{i },z_{j}) \nonumber\\
       &\quad+  f^{\frac{2\alpha}{1+\alpha}}(\bw^{(i)}_{t};z'_{i },z_{j}) + f^{\frac{2\alpha}{1+\alpha}}(\bw_{t};z_{j },z_{i})+  f^{\frac{2\alpha}{1+\alpha}}(\bw^{(i)}_{t};z_{j },z'_{i})\Big].
  \end{align*} 
  Further, taking an expectation over both sides yields
      \begin{align*}
       &\frac{1}{n}\sum_{i=1}^n\E_{S,S',\A}\big[\|\bw_{t+1 }-\bw^{(i)}_{t+1}\|_2^2
       \big]\nonumber\\
       &\le \Big(1+\frac{2p}{n}\Big) \frac{1}{n}\sum_{i=1}^n \E_{S,S',\A}\big[\|\bw_{t }-\bw^{(i)}_{t} \|_2^2\big]   + \frac{2(1+1/p)c^2_{\alpha,1}\eta _t^2}{n^2(n-1)}\sum_{i=1}^n \E_{S,S',\A}\Big[  \sum_{j\in[n]: j\neq i}\Big[ f^{\frac{2\alpha}{1+\alpha}}(\bw_{t};z_{i },z_{j})\nonumber\\
       &\quad +  f^{\frac{2\alpha}{1+\alpha}}(\bw^{(i)}_{t};z'_{i },z_{j})+ f^{\frac{2\alpha}{1+\alpha}}(\bw_{t};z_{j },z_{i})+  f^{\frac{2\alpha}{1+\alpha}}(\bw^{(i)}_{t};z_{j },z'_{i})\Big]\Big] + c_{\alpha,3}^2\eta_t^{\frac{2}{\alpha-1}} .
  \end{align*}
  Due to the symmetry between $z_i$ and $z'_i$ we know
  \[\E_{S ,\A}\Big[  \!\sum_{j\in[n]: j\neq i}\Big[ f^{\frac{2\alpha}{1+\alpha}}(\bw_{t};z_{i },z_{j}) + f^{\frac{2\alpha}{1+\alpha}}(\bw_{t};z_{j  },z_{i}) \Big] \Big]   =  \E_{S,S',\A}\Big[ \!\sum_{j\in[n]: j\neq i}\Big[ f^{\frac{2\alpha}{1+\alpha}}(\bw^{(i)}_{t};z'_{i },z_{j}) + f^{\frac{2\alpha}{1+\alpha}}(\bw^{(i)}_{t};z_{j },z'_{i})\Big] \Big]. \]
  It then follows that
       \begin{align*}
       &\frac{1}{n}\sum_{i=1}^n\E_{S,S',\A}\big[\|\bw_{t+1 }-\bw^{(i)}_{t+1}\|_2^2
       \big]\nonumber\\
       &\le \Big(1+\frac{2p}{n}\Big) \frac{1}{n}\sum_{i=1}^n \E_{S,S',\A}\big[\|\bw_{t }-\bw^{(i)}_{t} \|_2^2\big]   + \frac{4(1+1/p)c^2_{\alpha,1}\eta _t^2}{n^2(n-1)}\sum_{i=1}^n \E_{S ,\A}\Big[  \sum_{j\in[n]: j\neq i}\Big[ f^{\frac{2\alpha}{1+\alpha}}(\bw_{t};z_{i },z_{j})\nonumber\\
       &\quad +    f^{\frac{2\alpha}{1+\alpha}}(\bw_{t};z_{j },z_{i}) \Big]\Big] + c_{\alpha,3}^2\eta_t^{\frac{2}{\alpha-1}} \nonumber\\
       &= \Big(1+\frac{2p}{n}\Big) \frac{1}{n}\sum_{i=1}^n \E_{S,S',\A}\big[\|\bw_{t }-\bw^{(i)}_{t} \|_2^2\big]  + \frac{8(1+1/p)c^2_{\alpha,1}\eta _t^2}{n}  \E_{S ,\A}\Big[ \frac{1}{n(n-1)}\sum_{i=1}^n  \sum_{j\in[n]: j\neq i}  f^{\frac{2\alpha}{1+\alpha}}(\bw_{t};z_{i },z_{j}) \Big]\\
       &\quad +   c_{\alpha,3}^2\eta_t^{\frac{2}{\alpha-1}}  ,
  \end{align*}
  where in the last equality we used $\sum_{i=1}^n  \sum_{j\in[n]: j\neq i}  f^{\frac{2\alpha}{1+\alpha}}(\bw_{t};z_{j},z_{i}) = \sum_{i=1}^n  \sum_{j\in[n]: j\neq i}  f^{\frac{2\alpha}{1+\alpha}}(\bw_{t};z_{i },z_{j}) $. 
  
Further,  according  to Jensen's inequality and $\bw_1=\bw'_1$, we know
       \begin{align*}
       \frac{1}{n}\sum_{i=1}^n\E_{S,S',\A}\big[\|\bw_{t+1 }-\bw^{(i)}_{t+1}\|_2^2
       \big] \le &  \Big(1+\frac{2p}{n}\Big) \frac{1}{n}\sum_{i=1}^n \E_{S,S',\A}\big[\|\bw_{t }-\bw^{(i)}_{t} \|_2^2\big]  + c_{\alpha,3}^2\eta_t^{\frac{2}{\alpha-1}} \\
       & + \frac{8(1+1/p)c^2_{\alpha,1}\eta _t^2}{n}  \E_{S ,\A}\Big[   F^{\frac{2\alpha}{1+\alpha}}_S(\bw_t) \Big],
  \end{align*}
  Now, we can apply the above inequality recursively and get
      \begin{align*}
        \frac{1}{n}\sum_{i=1}^n\E_{S,S',\A}\big[\|\bw_{t+1 }-\bw^{(i)}_{t+1}\|_2^2
       \big] \le & \frac{8(1+1/p)c^2_{\alpha,1}}{n} \sum_{j=1}^t \Big(1+\frac{2p}{n}\Big)^{t-j} \eta_t^2 \E_{S ,\A}\Big[   F^{\frac{2\alpha}{1+\alpha}}_S(\bw_j) \Big]
      \\
      & +
       c_{\alpha,3}^2 \sum_{j=1}^t \Big(1+\frac{2p}{n}\Big)^{t+1-j} \eta_t^{\frac{2}{\alpha-1}}  .
  \end{align*}
  Finally, we can set $p=\frac{n}{2t}$ and use $(1+1/t)^t \le e$ to get
       \begin{align*}
        \frac{1}{n}\sum_{i=1}^n\E_{S,S',\A}\big[\|\bw_{t+1 }-\bw^{(i)}_{t+1}\|_2^2
       \big] \le & \frac{8e(1+2t/n)c^2_{\alpha,1}}{n} \sum_{j=1}^t   \eta_t^2 \E_{S ,\A}\Big[   F^{\frac{2\alpha}{1+\alpha}}_S(\bw_j)  \Big]
     +
       c_{\alpha,3}^2e \sum_{j=1}^t   \eta_t^{\frac{2}{\alpha-1}} ,
  \end{align*}
which completes the proof.
\end{proof}

To prove Theorem~\ref{thm:excess-pair}, we introduce the following lemma on optimization error. As discussed in \cite{lei2021generalization}, the optimization error analysis of DP-SGD (Algorithm~\ref{alg2}) for pairwise learning is the same as that for pointwise learning (Algorithm~\ref{alg1}). Here, $\alpha=1$ corresponds to the strongly smooth case due to the definition of $\alpha$-H\"older smoothness.
\begin{lemma}\label{lem:opt-pairwise}
Suppose $f$ is nonnegative, convex and $\alpha$-H\"older smooth with parameter $L$ and $\alpha\in[0,1]$. Let $\{\bw_t\}$ be produced by Algorithm~\ref{alg2} with $\eta_t=\eta$. Then
\begin{align*} 
     & \sum_{j=1}^t \eta_j \E_{\A}[\bar{F}_S(\bw_j ) - \bar{F}_S(\bw^{*} )] \\
    &\le \frac{1}{2} \|  \bw^{*} \|_2^2  + \frac{3}{4}c_{\alpha,1}^2\Big(\sum_{j=1}^t \eta_j^2\Big)^{\frac{1-\alpha}{1+\alpha}} \Big[2\eta_1  \| \bw^{*} \|_2^2     +   
     \sum_{j=1}^t \big(6\eta_j^3\sigma^2d + 4\eta_j^2  \bar{F}_S(\bw^{*} )   + 3c_{\alpha,2}\eta_j^{\frac{3-\alpha}{1-\alpha}}  \big) \Big]^{\frac{2\alpha}{1+\alpha}}  +  \sum_{j=1}^t  3\eta_j^2\sigma^2 d  
\end{align*}
and
\begin{align*} 
    \sum_{j=1}^t\eta_j^2 \E_{S,\A}[\bar{F}_S(\bw_t )  ]
    &\le 2\eta_1  \| \bw^{*} \|_2^2     +   
     \sum_{j=1}^t \big(6\eta_j^3\| \bb_j \|_2^2 + 4\eta^2_j  \bar{F}(\bw^{*} )   + 3c_{\alpha,2}\eta_j^{\frac{3-\alpha}{1-\alpha}}  \big).
\end{align*}
\end{lemma}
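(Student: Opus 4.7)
The plan is to mirror the proof of Theorem~\ref{thm:opt-point} essentially verbatim, replacing the pointwise stochastic gradient $\partial f(\bw_t;z_{i_t})$ with the pairwise stochastic gradient $\partial f(\bw_t;z_{i_t},z_{j_t})$, where $(i_t,j_t)$ is sampled uniformly over ordered pairs $\{(i,j):i\neq j\}$. The single observation that makes the identical argument go through is the pairwise analogue of the ``unbiasedness in sampling'' identity: for any $\bw$ independent of $(i_t,j_t)$, $\E_{(i_t,j_t)}[f(\bw;z_{i_t},z_{j_t})]=\bar F_S(\bw)$ by the definition in \eqref{eq:erm-pair}. Note also that Lemma~\ref{lem:self-bounding}, although stated for pointwise losses, applies verbatim to each pairwise loss $\bw\mapsto f(\bw;z,z')$ since its proof only uses nonnegativity plus $\alpha$-H\"older smoothness in the first argument.

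First I would expand $\|\bw_{t+1}-\bw^{*}\|_2^2$ via the non-expansiveness of $\proj_\W$ and $(a+b)^2\le(1+p)a^2+(1+1/p)b^2$ with $p=1/2$ to separate the Gaussian noise, exactly as in \eqref{eq:opt-1}. Applying convexity of $f(\cdot;z,z')$ to bound the inner product $\langle\bw^{*}-\bw_t,\partial f(\bw_t;z_{i_t},z_{j_t})\rangle\le f(\bw^{*};z_{i_t},z_{j_t})-f(\bw_t;z_{i_t},z_{j_t})$, and the self-bounding property to bound $\|\partial f(\bw_t;z_{i_t},z_{j_t})\|_2^2\le c_{\alpha,1}^2 f^{\frac{2\alpha}{1+\alpha}}(\bw_t;z_{i_t},z_{j_t})$, then telescoping over $j=1,\dots,t$ with $\bw_1=\mathbf{0}$ and taking expectation w.r.t.~the internal randomness (using $\E_\A\langle\bw^{*}-\bw_j,\bb_j\rangle=0$, $\E_\A\|\bb_j\|_2^2=\sigma^2 d$, and $\E_\A[f(\bw_j;z_{i_j},z_{j_j})]=\bar F_S(\bw_j)$ together with the concave Jensen inequality $\E_\A[f^{\frac{2\alpha}{1+\alpha}}(\bw_j;z_{i_j},z_{j_j})]\le \bar F_S^{\frac{2\alpha}{1+\alpha}}(\bw_j)$), yields the pairwise analogue of \eqref{eq:opt-4}.

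To derive the second stated bound, I would apply Young's inequality $ab\le p^{-1}|a|^p+q^{-1}|b|^q$ with $p=\tfrac{1+\alpha}{2\alpha}$ to get $\eta_t c_{\alpha,1}^2 f^{\frac{2\alpha}{1+\alpha}}(\bw_t;z_{i_t},z_{j_t})\le f(\bw_t;z_{i_t},z_{j_t})+c_{\alpha,2}\eta_t^{\frac{1+\alpha}{1-\alpha}}$, substitute back into the one-step inequality, rearrange so that $\eta_t f(\bw_t;\cdot)$ appears with a positive coefficient, multiply both sides by $\eta_t$, and telescope under $\eta_t\ge\eta_{t+1}$. After taking the appropriate expectation (using $\E_S[f(\bw^{*};z_{i_j},z_{j_j})]=\bar F(\bw^{*})$), this is precisely the second displayed inequality.

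Finally, to obtain the first bound I would apply Jensen's inequality to the concave map $x\mapsto x^{\frac{2\alpha}{1+\alpha}}$ exactly as in \eqref{eq:opt-3}, converting $\sum_j\eta_j^2 f^{\frac{2\alpha}{1+\alpha}}(\bw_j;z_{i_j},z_{j_j})$ into $\bigl(\sum_j\eta_j^2\bigr)^{\frac{1-\alpha}{1+\alpha}}\bigl[\sum_j\eta_j^2 f(\bw_j;z_{i_j},z_{j_j})\bigr]^{\frac{2\alpha}{1+\alpha}}$, and then plugging in the second bound just derived. The strongly smooth case drops out by setting $\alpha=1$. Since every step is a mechanical transcription of the pointwise argument with $(i_t,j_t)$ in place of $i_t$, no genuinely new obstacle arises; the only subtlety worth double-checking is that the ``unbiasedness'' identity requires sampling over ordered (not unordered) distinct pairs, which matches Algorithm~\ref{alg2} and the definition of $\bar F_S$, and that the self-bounding property transfers to the pairwise loss, which it trivially does.
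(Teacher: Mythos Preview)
Your proposal is correct and matches the paper's own approach exactly: the paper simply states that the optimization error analysis for pairwise DP-SGD is the same as for pointwise DP-SGD (Theorem~\ref{thm:opt-point}), relying on the unbiasedness identity $\E_{(i_t,j_t)}[f(\bw;z_{i_t},z_{j_t})]=\bar F_S(\bw)$ and the fact that the self-bounding property transfers to pairwise losses. Your outline reproduces each step of that argument.
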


\bigskip

Now, we are ready to prove the utility guarantees of Algorithm~\ref{alg2} for strongly smooth and non-smooth cases. We first present the proof for strongly smooth case (i.e., Theorem~\ref{thm:excess-pair}).  
\begin{proof}[Proof of Theorem~\ref{thm:excess-pair}]
 Similar to the proof of Theorem~\ref{thm:excess-smooth-point}, combining Lemma~\ref{lem:stability-pairwise},  Lemma~\ref{lem:opt-pairwise} and part (a) in Lemma~\ref{lem:gen-pair}  together we have 
 \begin{align*}
    \E_{S,\A}[\bar{F}(\bw_{t+1 }) ] \le& \Big(1+\frac{L}{\gamma}\Big) \E_{S,\A}[ \bar{F}_S(\bw_{t+1 }) ]\\
    &+ \frac{32e(L+\gamma)(1+2t/n)L}{ n} \Big[ 2\eta_1  \| \bw^{*} \|_2^2     +   
     \sum_{j=1}^t \big(6\eta_j^3\sigma^2 d + 4\eta^2_j \ \bar{F}(\bw^{*})   \big) \Big].
\end{align*}
Multiplying both sides by $\eta_{t+1}$ and taking a summation gives
\begin{align*} 
    \sum_{t=1}^T \eta_t\E_{S,\A}[\bar{F}(\bw_{t }) ] \le& \Big(1+\frac{L}{\gamma}\Big) \sum_{t=1}^T \eta_t \E_{S,\A}[ \bar{F}_S(\bw_{t }) ]\nonumber \\
    &+ \frac{32e(L+\gamma)(1+2T/n)L}{ n} \sum_{t=1}^T \eta_t \Big[ 2\eta_1  \| \bw^{*} \|_2^2     +   
     \sum_{j=1}^t \big(6\eta_j^3\sigma^2 d + 4\eta^2_j \ \bar{F}(\bw^{*})   \big) \Big].
\end{align*}
Lemma~\ref{lem:opt-pairwise} with $\alpha=1$ implies
\begin{align*} 
       \sum_{t=1}^T \eta_t \E_{\A}[\bar{F}_S(\bw_t ) ]  \le & \sum_{t=1}^T \eta_t \E_{\A}[ \bar{F}_S(\bw^{*} )]  + \frac{1}{2} \|  \bw^{*} \|_2^2 \\
    & +  {3L}   \Big( \eta_1  \| \bw^{*} \|_2^2     +   
     \sum_{t=1}^T \big(3\eta_t^3\sigma^2d + 2\eta_t^2  F_S(\bw^{*} )    \big) \Big)  +  \sum_{t=1}^T  3\eta_t^2\sigma^2 d. 
\end{align*}
Combining the above two inequalities together yields
\begin{align*}
 \sum_{t=1}^T \eta_t\E_{S,\A}[\bar{F}(\bw_{t }) ] \le& \Big(1\!+\!\frac{L}{\gamma}\Big)  \Big(\sum_{t=1}^T \eta_t \bar{F}(\bw^{*} )\!+\!\Big(\frac{1}{2}\!+\!3L  \eta_1 \Big) \| \bw^{*} \|_2^2   \!+\!3\sum_{j=1}^t \big(3L \eta_j\!+\!1\big)\eta_j^2\sigma^2 d\!+\!4 \sum_{j=1}^t \eta_j^2  \bar{F}(\bw^{*} )  \Big)  \nonumber \\
    &+ \frac{32e(L+\gamma)(1+2T/n)L}{ n} \sum_{t=1}^T \eta_t \Big[ 2\eta_1  \| \bw^{*} \|_2^2     +   
     \sum_{j=1}^t \big(6\eta_j^3\sigma^2 d + 4\eta^2_j \ \bar{F}(\bw^{*})   \big) \Big]. 
\end{align*}
Let $\eta_t=\eta\le \min\{2/L,1\}$ and assume $T\ge n$. Recall that $\sigma^2 d = \O\big(\frac{T d \log(1/\delta)}{n^2\epsilon^2})  \big)$. According to Jensen's inequality, there holds
\begin{align}\label{eq:excess-pair-1}
  \E_{S,\A}[\bar{F}(\bw_{\priv }) - \bar{F}(\bw^{*}) ] =\O\Bigg(& \Big( \frac{1}{\gamma} + \frac{ T^2  \eta^2(1+\gamma) }{n^2} + \big(\frac{1}{\gamma}+1\big)\eta \Big)\bar{F}(\bw^{*}) + 
\big( \frac{(1+\gamma^{-1})}{T\eta} + \frac{(1+\gamma)T\eta}{n^2} \Big)\|\bw^{*}\|_2^2\nonumber\\
&+ \Big(\big(1+\frac{1}{\gamma}\big)\eta + \frac{ T^2\eta^3(1+\gamma) }{n^2}\Big)\frac{Td\log(1/\delta)}{n^2\epsilon^2}\Bigg). 
\end{align}
  Now, we give the proof of part (a). We can set $T\asymp n$ , $\gamma=\sqrt{n}$  and $\eta_t=c/\max\Big\{ \sqrt{n}, \frac{\sqrt{d\log(1/\delta)}}{\epsilon} \Big\}\le \min\{2/L ,1\}$ for some constant $c>0$. Then from  Eq.\eqref{eq:excess-pair-1}  we obtain
\begin{align*}
  \E_{S,\A}[\bar{F}(\bw_{\priv }) - \bar{F}(\bw^{*}) ] =&\O\Big(  \frac{1}{\sqrt{n}}  + \frac{\sqrt{d\log(1/\delta)}}{n\epsilon}\Big),  
\end{align*}
where we also assume $\sqrt{d\log(1/\delta)}=\O(n\epsilon)$. 

\noindent (b) We now consider the low-noise case $F(\bw^{*})=0$. By setting $\gamma \ge 1$, $T\asymp n$ and $\eta_t=\frac{c \epsilon}{\sqrt{d\log(1/\delta)}}\le \min\{ 2/L,1\}$ for some constant $c>0$, we get  
\begin{align*}
  \E_{S,\A}[\bar{F}(\bw_{\priv }) - \bar{F}(\bw^{*}) ] =\O\Big(     \frac{\sqrt{ d\log(1/\delta)}}{n \epsilon }\Big), 
\end{align*}
which completes the proof. 
\end{proof}

Finally, we give the proof for Theorem~\ref{thm:excess-nonsmooth-pair}. 
\begin{proof}[Proof of Theorem~\ref{thm:excess-nonsmooth-pair}]
 The proof is similar to  that of Theorem~\ref{thm:excess-nonsmooth-point}. Specifically, we can plug part (b) in Lemma~\ref{lem:stability-pairwise} back into  part (b) in Lemma~\ref{lem:gen-pair}   to get that
 \begin{align}\label{eq:excess-pair-nonsmooth-1}
        &\big( \sum_{t=1}^T \eta_t\big)^{-1} \sum_{t=1}^T \eta_t \E_{S,\A}[ \bar{F}(\bw_t) - \bar{F}_S(\bw_t) ]\nonumber\\
    &=\O\Big( \gamma^{\frac{1+\alpha }{\alpha-1}} + \gamma T \eta^{\frac{2}{1-\alpha}} +  \big(\gamma  T  \eta \big)^{-1}\sum_{t=1}^T\eta  \big(\E_{S,\A}[ \bar{F}_S(\bw_{t+1})]\big) ^{\frac{2\alpha}{1+\alpha}}  + \gamma T n^{-2} \sum_{t=1}^T \eta^2 \big(\E_{S,\A}[ \bar{F}_S (\bw_t)]\big)^{\frac{2\alpha}{1+\alpha}} \Big).
 \end{align}
 Further, combining Eq.\eqref{eq:excess-pair-nonsmooth-1} and Lemma~\ref{lem:opt-pairwise} together we can obtain
 \begin{align} \label{eq:excess-nonsmooth-pair-2}
     &\big( \sum_{t=1}^T \eta \big)^{-1} \sum_{t=1}^T \eta  \E_{S,\A}[\bar{F}_S(\bw_t ) - \bar{F}\bw^{*} )]= \big( \sum_{t=1}^T \eta \big)^{-1} \sum_{t=1}^T \eta  \E_{S,\A}[\bar{F}_S(\bw_t ) - \bar{F}_S(\bw^{*} )]\nonumber\\ &=\O\Bigg( \frac{1}{T\eta} +   T^{\frac{-2\alpha }{1+\alpha}} \eta^{\frac{1-3\alpha}{1+\alpha}} \Big( \eta    +   
     T \eta^3\sigma^2d + T\eta^2  \bar{F}(\bw^{*} )   +  T\eta^{\frac{3-\alpha}{1-\alpha}}  \big) \Big)^{\frac{2\alpha}{1+\alpha}}  + \eta \sigma^2 d\Bigg). 
     \end{align}
Plugging Eq,\eqref{eq:excess-pair-nonsmooth-1} and Eq.\eqref{eq:excess-nonsmooth-pair-2} back into Eq.\eqref{eq:excess-nonsmooth-1} we have
\begin{align}\label{eq:excess-nonsmooth-pair-3}
  \E_{S,\A}[\bar{F}(\bw_{\priv }) - \bar{F}(\bw^{*}) ] =\O\Bigg(&  
\big( \frac{(1+\gamma^{-1})}{T\eta}\!+\!\frac{(1+\gamma)T\eta}{n^2} \Big)\|\bw^{*}\|_2^2\!+\!\Big( \gamma^{-1}\!+\!\frac{ T^2  \eta^2(1+\gamma) }{n^2}\!+\! \big(\gamma^{-1}\!+\!1\big)\eta \Big)\bar{F}(\bw^{*}) \nonumber\\
&+ \Big(\big(1\!+\!\gamma^{-1}\big)\eta\!+\!\frac{ T^2\eta^3(1+\gamma) }{n^2}\Big)\frac{Td\log(1/\delta)}{n^2\epsilon^2}\Bigg).
\end{align} 
The rest of the proof is similar to Theorem~\ref{thm:excess-nonsmooth-point}. We omit it for simplicity. 
\end{proof}

\section{Conclusion}\label{sec:conclu}

In this paper, we conducted a systematic analysis of DP-SGD with gradient perturbation for both pointwise and pairwise learning problems. For pointwise learning, we introduced a low-noise condition and derived sharper excess population risk bounds. Specifically, we achieved bounds in the order of $\O\big(\frac{1}{n\epsilon}\sqrt{d\log(1/\delta)}\big)$ and $\O\big( n^{-\frac{1+\alpha}{2}} + \frac{1}{n\epsilon}\sqrt{d\log(1/\delta)}\big)$ for strongly smooth and $\alpha$-H\"{o}lder smooth losses, respectively.

Regarding pairwise learning, we presented a computationally efficient DP-SGD algorithm with utility guarantees. Our analysis demonstrated that our algorithm achieves the optimal excess risk bounds of the order $\O\big( \frac{1}{\sqrt{n}} + \frac{1}{n\epsilon}\sqrt{d\log(1/\delta)}\big)$ for both strongly smooth and $\alpha$-H\"{o}lder smooth losses. Furthermore, we established faster excess risk bounds for both strongly smooth and $\alpha$-H\"{o}lder smooth losses under a low-noise condition. Notably, our work represents the first utility analysis for privacy-preserving pairwise learning that provides excess risk rates tighter than $\O\big( \frac{1}{\sqrt{n}} + \frac{1}{n\epsilon}\sqrt{d\log(1/\delta)}\big)$.

There are several open questions that remain for further study. Firstly, it would be interesting to explore whether our analysis of DP-SGD with uniform sampling can be extended to DP-SGD with Markov sampling, which poses a more challenging task. Secondly, an unexplored area for us is to investigate the utility analysis of DP-SGD with a neural network structure. Addressing these questions would contribute to a deeper understanding of privacy-preserving machine learning algorithms.

\medskip 

\noindent{\bf Acknowledgement.}  The work described in this paper is partially done when the last author, Ding-Xuan Zhou, worked at City University of Hong Kong, supported by
the Laboratory for AI-Powered Financial Technologies under the InnoHK scheme, the Research Grants Council of Hong Kong [Projects No. CityU 11308121, No. N\_CityU102/20, and No. C1013-21GF], the National Science Foundation of China [Project No. 12061160462], and the Hong Kong Institute for Data Science.  Yiming's work is supported by SUNY-IBM AI Alliance Research and NSF grants (IIS-2103450, IIS-2110546 and DMS-2110836)

\bibliographystyle{plain}
\bibliography{main.bib}

\end{document}